\newtheorem{proposition}{Proposition}
\newtheorem{theorem}[proposition]{Theorem}
\newtheorem{note}[proposition]{Note}
\newtheorem{notation}[proposition]{Notation}
\newtheorem{convention}[proposition]{Convention}
\newtheorem{corollary}[proposition]{Corollary}
\newtheorem{definition}[proposition]{Definition}
\newtheorem{lemma}[proposition]{Lemma}
\newtheorem{remark}[proposition]{Remark}
\newenvironment{proof}[1][Proof]{\textbf{#1.} }{\  \rule{0.5em}{0.5em}}
\numberwithin{equation}{section}
\begin{document}

\title{Algebraic function based Banach space valued ordinary and fractional
neural network approximations}
\author{George A. Anastassiou \\
Department of Mathematical Sciences\\
University of Memphis\\
Memphis, TN 38152, U.S.A.\\
ganastss@memphis.edu}
\date{}
\maketitle

\begin{abstract}
Here we research the univariate quantitative approximation, ordinary and
fractional, of Banach space valued continuous functions on a compact
interval or all the real line by quasi-interpolation Banach space valued
neural network operators. These approximations are derived by establishing
Jackson type inequalities involving the modulus of continuity of the engaged
function or its Banach space valued high order derivative of fractional
derivatives. Our operators are defined by using a density function generated
by an algebraic sigmoid function. The approximations are pointwise and of
the uniform norm. The related Banach space valued feed-forward neural
networks are with one hidden layer.
\end{abstract}

\noindent \textbf{2020 AMS Mathematics Subject Classification : }26A33,
41A17, 41A25, 41A30, 46B25.

\noindent \textbf{Keywords and Phrases:} algebraic sigmoid function, Banach
space valued neural network approximation, Banach space valued
quasi-interpolation operator, modulus of continuity, Banach space valued
Caputo fractional derivative, Banach space valued fractional approximation,
iterated fractional approximation.

\section{Introduction}

The author in \cite{1} and \cite{2}, see Chapters 2-5, was the first to
establish neural network approximation to continuous functions with rated by
very specifically defined neural network operators of Cardaliagnet-Euvrard
and "Squashing" types, by employing the modulus of continuity of the engaged
function or its high order derivative, and producing very tight Jackson type
inequalities. He treats there both the univariate and multivariate cases.
The defining these operators "bell-shaped "and "squashing "functions are
assumed to be of compact suport. Also in \cite{2} he gives the $N$th order
asymptotic expansion for the error of weak approximation of these two
operators to a special natural class of smooth functions, see Chapters 4-5
there.

The author inspired by \cite{14}, continued his studies on neural networks
approximation by introducing and using the proper quasi-interpolation
operators of sigmoidal and hyperbolic tangent type which resulted into \cite%
{3}, \cite{4}, \cite{5}, \cite{6}, \cite{7}, by treating both the univariate
and multivariate cases. He did also the corresponding fractional cases \cite%
{8}, \cite{9}, \cite{13}.

The author here performs algebraic sigmoidal based neural network
approximations to continuous functions over compact intervals of the real
line or over the whole $\mathbb{R}$ with valued to an arbitrary Banach space 
$\left( X,\left \Vert \cdot \right \Vert \right) $. Finally he treats
completely the related $X$-valued fractional approximation. All convergences
here are with rates expressed via the modulus of continuity of the involved
function or its $X$-valued high order derivative, or $X$-valued fractional
derivatives and given by very tight Jackson type inequalities. Iterated
fractional approximation is also included.

Our compact intervals are not necessarily symmetric to the origin. Some of
our upper bounds to error quantity are very flexible and general. In
preparation to prove our results we establish important properties of the
basic density function defining our operators which is induced by algebraic
sigmoidal function.

Feed-forward $X$-valued neural networks (FNNs) with one hidden layer, the
only type of networks we deal with in this article, are mathematically
expressed as 
\begin{equation*}
N_{n}\left( x\right) =\sum \limits_{=0}^{n}c_{j}\sigma \left( \left \langle
a_{j}\cdot x\right \rangle +b_{j}\right) ,\text{ \ \ }x\in \mathbb{R}^{s}%
\text{, \ }s\in \mathbb{N},
\end{equation*}%
where for $0\leq j\leq n$, $b_{j}\in \mathbb{R}$ are the thresholds, $%
a_{j}\in \mathbb{R}^{s}$ are the connection weights, $c_{j}\in X$ are the
coefficients, $\left \langle a_{j}\cdot x\right \rangle $ is the inner
product of $a_{j}$ and $x$, and $\sigma $ is the activation function of the
network. About neural networks in general read \cite{15}, \cite{17}, \cite%
{19}. See also \cite{9} for a complete study of real valued approximation by
neural network operators.

\section{Basics}

We consider the generator algebraic function 
\begin{equation}
\varphi \left( x\right) =\frac{x}{\sqrt[2m]{1+x^{2m}}},\text{ \ }m\in 
\mathbb{N}\text{, }x\in \mathbb{R}\text{,}  \tag{1}  \label{1}
\end{equation}%
which is a sigmoidal type of function and is a strictly increasing function.

We see that $\varphi \left( -x\right) =-\varphi \left( x\right) $ with $%
\varphi \left( 0\right) =0$. We get that 
\begin{equation}
\varphi ^{\prime }\left( x\right) =\frac{1}{\left( 1+x^{2m}\right) ^{\frac{%
2m+1}{2m}}}>0\text{, \ }\forall \text{ }x\in \mathbb{R}\text{,}  \tag{2}
\label{2}
\end{equation}%
proving $\varphi $ as strictly increasing over $\mathbb{R},\varphi ^{\prime
}\left( x\right) =\varphi ^{\prime }\left( -x\right) .$ We easily find that $%
\underset{x\rightarrow +\infty }{\lim }\varphi \left( x\right) =1$, $\varphi
\left( +\infty \right) =1$, and $\underset{x\rightarrow -\infty }{\lim }%
\varphi \left( x\right) =-1$, $\varphi \left( -\infty \right) =-1.$

We consider the activation function 
\begin{equation}
\Phi \left( x\right) =\frac{1}{4}\left[ \varphi \left( x+1\right) -\varphi
\left( x-1\right) \right] .  \tag{3}  \label{3}
\end{equation}%
Clearly it is $\Phi \left( x\right) =\Phi \left( -x\right) ,$ $\forall $ $%
x\in \mathbb{R}$, so that $\Phi $ is an even function and symmetric with
respect to the $y$-axis.

Since $x+1>x-1$, we have $\varphi \left( x+1\right) >\varphi \left(
x-1\right) $ and $\Phi \left( x\right) >0$, $\forall $ $x\in \mathbb{R}$.

Also it is 
\begin{equation}
\Phi \left( 0\right) =\frac{1}{2\sqrt[2m]{2}}.  \tag{4}  \label{4}
\end{equation}%
We observe that 
\begin{equation*}
\Phi ^{\prime }\left( x\right) =\frac{1}{4}\left( \varphi ^{\prime }\left(
x+1\right) -\varphi ^{\prime }\left( x-1\right) \right) =
\end{equation*}%
\begin{equation}
\frac{1}{4}\left( \frac{1}{\left( 1+\left( x+1\right) ^{2m}\right) ^{\frac{%
2m+1}{2m}}}-\frac{1}{\left( 1+\left( x-1\right) ^{2m}\right) ^{\frac{2m+1}{2m%
}}}\right) ,\text{ \ }\forall \text{ }x\in \mathbb{R}.  \tag{5}  \label{5}
\end{equation}%
Let now $x>0$, then $x>-x$ and $\left( x+1\right) ^{2}>\left( x-1\right)
^{2}\geq 0$, implying $\left( x+1\right) ^{2m}>\left( x-1\right) ^{2m}\geq 0$%
, $m\in \mathbb{N}$, and $1+\left( x+1\right) ^{2m}>1+\left( x-1\right)
^{2m}>0$. Consequently it holds 
\begin{equation}
\frac{1}{\left( 1+\left( x-1\right) ^{2m}\right) ^{\frac{2m+1}{2m}}}>\frac{1%
}{\left( 1+\left( x+1\right) ^{2m}\right) ^{\frac{2m+1}{2m}}},  \tag{6}
\label{6}
\end{equation}%
proving $\Phi ^{\prime }\left( x\right) <0$ for $x>0$.

That is $\Phi $ is strictly decreasing over $\left( 0,+\infty \right) $.

Clearly, $\Phi $ is strictly increasing over $\left( -\infty ,0\right) $ and 
$\Phi ^{\prime }\left( 0\right) =0$.

Furthermore we obtain that 
\begin{equation}
\underset{x\rightarrow +\infty }{\lim }\Phi \left( x\right) =\frac{1}{4}%
\left[ \varphi \left( +\infty \right) -\varphi \left( +\infty \right) \right]
=0,  \tag{7}  \label{7}
\end{equation}%
and 
\begin{equation}
\underset{x\rightarrow -\infty }{\lim }\Phi \left( x\right) =\frac{1}{4}%
\left[ \varphi \left( -\infty \right) -\varphi \left( -\infty \right) \right]
=0.  \tag{8}  \label{8}
\end{equation}%
That is the $x$-axis is the horizontal asymptote of $\Phi $.

Conclusion, $\Phi $ is a bell symmetric function with maximum 
\begin{equation}
\Phi \left( 0\right) =\frac{1}{2\sqrt[2m]{2}},\text{ \ }m\in \mathbb{N}. 
\tag{9}  \label{9}
\end{equation}%
We need

\begin{theorem}
\label{t1}We have that 
\begin{equation}
\sum \limits_{i=-\infty }^{\infty }\Phi \left( x-i\right) =1\text{, \ }%
\forall \text{ }x\in \mathbb{R}.  \tag{10}  \label{10}
\end{equation}
\end{theorem}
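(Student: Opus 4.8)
The plan is to exploit the telescoping structure hidden in the definition (3) of $\Phi$, together with the limits $\varphi(+\infty)=1$ and $\varphi(-\infty)=-1$ already recorded in the excerpt. First I would rewrite each summand using (3): for every integer $i$,
\begin{equation*}
\Phi(x-i)=\frac{1}{4}\left[\varphi(x-i+1)-\varphi(x-i-1)\right].
\end{equation*}
Setting $g(i):=\varphi(x-i+1)$, one has $\varphi(x-i-1)=g(i+2)$, so that $\Phi(x-i)=\frac{1}{4}\left[g(i)-g(i+2)\right]$; the key observation is that the two arguments differ by a shift of $2$ in the index, which is what will produce the cancellation.

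Next I would work with the symmetric partial sum $S_{N}:=\sum_{i=-N}^{N}\Phi(x-i)$. Splitting it as $\frac{1}{4}\bigl(\sum_{i=-N}^{N}g(i)-\sum_{i=-N}^{N}g(i+2)\bigr)$ and reindexing the second sum, all interior terms cancel and only four boundary terms survive, giving
\begin{equation*}
S_{N}=\frac{1}{4}\left[\varphi(x+N+1)+\varphi(x+N)-\varphi(x-N)-\varphi(x-N-1)\right].
\end{equation*}
Letting $N\to\infty$ and using that the two positive-argument values tend to $\varphi(+\infty)=1$ while the two negative-argument values tend to $\varphi(-\infty)=-1$, I obtain $S_{N}\to\frac{1}{4}(1+1+1+1)=1$, which is precisely (10).

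The only point requiring a word of care is the meaning of the doubly infinite series in (10). Since $\Phi$ is even, positive, and (from the explicit form of $\varphi$) decays polynomially like $O\!\left(|x|^{-(2m+1)}\right)$ as $|x|\to\infty$, the series $\sum_{i}\Phi(x-i)$ converges absolutely; hence its value is unambiguously the common limit of the symmetric partial sums $S_{N}$ computed above. I expect the telescoping step to be the heart of the argument, and the only mild subtlety is keeping track of exactly which four boundary terms remain after the cancellation, since an off-by-one error there would corrupt the final constant.
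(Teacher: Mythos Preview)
Your proof is correct and follows essentially the same approach as the paper: both arguments telescope the sum using the definition (3) and then invoke the limits $\varphi(+\infty)=1$, $\varphi(-\infty)=-1$. The only cosmetic difference is that the paper first telescopes the step-one differences $\varphi(x-i)-\varphi(x-1-i)$ over the two half-lines separately (obtaining $1+\varphi(x)$ and $1-\varphi(x)$) and then adds two such identities, whereas you telescope the step-two differences directly on a symmetric block $\{-N,\dots,N\}$; your version is a bit more streamlined but the underlying mechanism is identical.
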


\begin{proof}
We observe that 
\begin{equation*}
\sum \limits_{i=-\infty }^{\infty }\left( \varphi \left( x-i\right) -\varphi
\left( x-1-i\right) \right) =
\end{equation*}%
\begin{equation*}
\sum \limits_{i=0}^{\infty }\left( \varphi \left( x-i\right) -\varphi \left(
x-1-i\right) \right) +\sum \limits_{i=-\infty }^{-1}\left( \varphi \left(
x-i\right) -\varphi \left( x-1-i\right) \right) .
\end{equation*}%
Furthermore ($\lambda \in \mathbb{Z}^{+}$) 
\begin{equation}
\sum \limits_{i=0}^{\infty }\left( \varphi \left( x-i\right) -\varphi \left(
x-1-i\right) \right) =  \tag{11}  \label{11}
\end{equation}%
\begin{equation*}
\underset{\lambda \rightarrow \infty }{\lim }\sum \limits_{i=0}^{\lambda
}\left( \varphi \left( x-i\right) -\varphi \left( x-1-i\right) \right) \text{
\ (telescoping sum)}
\end{equation*}%
\begin{equation*}
=\underset{\lambda \rightarrow \infty }{\lim }\left( \varphi \left( x\right)
-\varphi \left( x-\left( \lambda +1\right) \right) \right) =1+\varphi \left(
x\right) .
\end{equation*}%
Similarly, it holds 
\begin{equation}
\sum \limits_{i=-\infty }^{-1}\left( \varphi \left( x-i\right) -\varphi
\left( x-1-i\right) \right) =\underset{\lambda \rightarrow \infty }{\lim }%
\sum \limits_{i=-\lambda }^{-1}\left( \varphi \left( x-i\right) -\varphi
\left( x-1-i\right) \right)  \tag{12}  \label{12}
\end{equation}%
\begin{equation*}
=\underset{\lambda \rightarrow \infty }{\lim }\left( \varphi \left(
x+\lambda \right) -\varphi \left( x\right) \right) =1-\varphi \left(
x\right) .
\end{equation*}%
Therefore we derive 
\begin{equation}
\sum \limits_{i=-\infty }^{\infty }\left( \varphi \left( x-i\right) -\varphi
\left( x-1-i\right) \right) =2\text{, \ }\forall \text{ }x\in \mathbb{R}, 
\tag{13}  \label{13}
\end{equation}%
and 
\begin{equation}
\sum \limits_{i=-\infty }^{\infty }\left( \varphi \left( x+1-i\right)
-\varphi \left( x-i\right) \right) =2\text{, \ }\forall \text{ }x\in \mathbb{%
R}.  \tag{14}  \label{14}
\end{equation}%
Adding (\ref{13}) and (\ref{14}) we find 
\begin{equation}
\sum \limits_{i=-\infty }^{\infty }\left( \varphi \left( x+1-i\right)
-\varphi \left( x-1-i\right) \right) =4\text{, \ }\forall \text{ }x\in 
\mathbb{R}.  \tag{15}  \label{15}
\end{equation}%
Clearly, then 
\begin{equation*}
\Phi \left( x-i\right) =\frac{1}{4}\left[ \varphi \left( x+1-i\right)
-\varphi \left( x-1-i\right) \right] ,
\end{equation*}%
proving (\ref{10}).
\end{proof}

We make

\begin{remark}
\label{r2}Because $\Phi $ is even it holds 
\begin{equation*}
\sum \limits_{i=-\infty }^{\infty }\Phi \left( i-x\right) =1\text{, \ }%
\forall \text{ }x\in \mathbb{R}.
\end{equation*}%
Hence 
\begin{equation*}
\sum \limits_{i=-\infty }^{\infty }\Phi \left( i+x\right) =1\text{, \ }%
\forall \text{ }x\in \mathbb{R},
\end{equation*}%
and%
\begin{equation}
\sum \limits_{i=-\infty }^{\infty }\Phi \left( x+i\right) =1\text{, \ }%
\forall \text{ }x\in \mathbb{R}.  \tag{16}  \label{16}
\end{equation}
\end{remark}

\begin{theorem}
\label{t3}It holds 
\begin{equation}
\int_{-\infty }^{\infty }\Phi \left( x\right) dx=1.  \tag{17}  \label{17}
\end{equation}
\end{theorem}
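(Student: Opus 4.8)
The plan is to leverage the partition-of-unity identity already proved in Theorem \ref{t1}, namely $\sum_{i=-\infty}^{\infty}\Phi\left( x-i\right) =1$ for every $x\in \mathbb{R}$. Since this holds identically in $x$, I would integrate both sides over the unit interval $\left[ 0,1\right] $: the right-hand side contributes $\int_{0}^{1}1\,dx=1$, so the whole task reduces to showing that $\int_{0}^{1}\sum_{i}\Phi \left( x-i\right) \,dx$ coincides with $\int_{-\infty }^{\infty }\Phi \left( u\right) \,du$.

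To evaluate that left-hand side I would first interchange the sum and the integral, writing $\int_{0}^{1}\sum_{i}\Phi \left( x-i\right) \,dx=\sum_{i}\int_{0}^{1}\Phi \left( x-i\right) \,dx$. The substitution $u=x-i$ turns the $i$-th term into $\int_{-i}^{1-i}\Phi \left( u\right) \,du$. The key combinatorial observation is that, as $i$ ranges over $\mathbb{Z}$, the intervals $\left[ -i,1-i\right] $ form a disjoint cover (a tiling) of $\mathbb{R}$; concatenating them telescopes the sum of integrals into the single integral $\int_{-\infty }^{\infty }\Phi \left( u\right) \,du$, which is precisely the quantity (\ref{17}), and it equals $1$.

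The main obstacle is the justification of the interchange of the infinite sum with the integral; this is not automatic, but it is harmless here because $\Phi >0$ everywhere (established right after (\ref{3})), so Tonelli's theorem — equivalently, monotone convergence applied to the partial sums — licenses the swap with no integrability hypothesis beyond nonnegativity. I would state this explicitly as the single analytic point requiring care.

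As an alternative self-contained route that bypasses Theorem \ref{t1}, I would compute the integral directly from the definition (\ref{3}). Writing $\int_{-A}^{A}\Phi \left( x\right) \,dx=\frac{1}{4}\int_{-A}^{A}\left[ \varphi \left( x+1\right) -\varphi \left( x-1\right) \right] \,dx$ and shifting variables, the overlapping central parts cancel and one is left with $\frac{1}{4}\bigl( \int_{A-1}^{A+1}\varphi -\int_{-A-1}^{-A+1}\varphi \bigr) $. Letting $A\rightarrow \infty $ and invoking $\varphi \left( +\infty \right) =1$ and $\varphi \left( -\infty \right) =-1$, each boundary integral tends to the length $2$ of its interval times the limiting value of $\varphi $, yielding $\frac{1}{4}\left( 2-\left( -2\right) \right) =1$. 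In this second route the obstacle is purely that $\varphi $ is bounded but not integrable over $\mathbb{R}$, so one must use the symmetric truncation $\left[ -A,A\right] $ and track only the two non-cancelling boundary layers rather than attempt to integrate $\varphi $ over the whole line.
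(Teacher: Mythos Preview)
Your primary approach is essentially the paper's proof read in reverse: the paper starts from $\int_{-\infty}^{\infty}\Phi(x)\,dx$, partitions $\mathbb{R}$ into unit intervals, shifts each to $[0,1]$, and then swaps sum and integral to invoke the partition-of-unity identity (\ref{16}); you begin with the identity, integrate over $[0,1]$, swap, and reassemble the tiling. The ingredients (Tonelli via $\Phi>0$, the unit-interval tiling, Theorem \ref{t1}/Remark \ref{r2}) are identical.

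Your alternative route via symmetric truncation and cancellation of the central layers of $\varphi$ is correct and genuinely different: it avoids Theorem \ref{t1} entirely and uses only the definition (\ref{3}) together with the boundary values $\varphi(\pm\infty)=\pm 1$. This buys independence from the discrete partition-of-unity machinery at the cost of a small dominated-convergence (or uniform-limit) argument to pass to the limit in the boundary integrals; the paper's route, by contrast, recycles Theorem \ref{t1} and needs no limiting argument beyond the Tonelli swap.
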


\begin{proof}
We observe that 
\begin{equation}
\int_{-\infty }^{\infty }\Phi \left( x\right) dx=\sum \limits_{j=-\infty
}^{\infty }\int_{j}^{j+1}\Phi \left( x\right) dx=\sum \limits_{j=-\infty
}^{\infty }\int_{0}^{1}\Phi \left( x+j\right) dx=  \tag{18}  \label{18}
\end{equation}%
\begin{equation*}
\int_{0}^{1}\left( \sum \limits_{j=-\infty }^{\infty }\Phi \left( x+j\right)
dx\right) =\int_{0}^{1}1dx=1.
\end{equation*}%
So $\Phi \left( x\right) $ is a density function on $\mathbb{R}.$
\end{proof}

We need

\begin{theorem}
\label{t4}Let $0<\alpha <1$, and $n\in \mathbb{N}$ with $n^{1-\alpha }>2$.
It holds 
\begin{equation}
\sum \limits_{\left \{ 
\begin{array}{c}
k=-\infty \\ 
:\left \vert nx-k\right \vert \geq n^{1-\alpha }%
\end{array}%
\right. }^{\infty }\Phi \left( nx-k\right) <\frac{1}{4m\left( n^{1-\alpha
}-2\right) ^{2m}},\text{ \ }m\in \mathbb{N}.  \tag{19}  \label{19}
\end{equation}
\end{theorem}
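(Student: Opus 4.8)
The plan is to reuse the telescoping mechanism that proved Theorem \ref{t1}, combined with a sharp tail estimate for $1-\varphi$. Put $y=nx$. Since $\Phi $ is even, I would first split the sum into its two symmetric tails,
\[
\sum_{|y-k|\geq n^{1-\alpha}}\Phi(y-k)=\sum_{y-k\geq n^{1-\alpha}}\Phi(y-k)+\sum_{k-y\geq n^{1-\alpha}}\Phi(k-y),
\]
and analyze one of them, say the right tail, the left one being identical under $y-k\mapsto k-y$.

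To evaluate the right tail, let $k^{\ast}=\lfloor y-n^{1-\alpha}\rfloor$, so that the arguments appearing are $j_{0}+p$, $p=0,1,2,\dots$, with $j_{0}:=y-k^{\ast}\geq n^{1-\alpha}$. Because $4\Phi(x)=\varphi(x+1)-\varphi(x-1)$, the partial sums $\sum_{p=0}^{\lambda}\Phi(j_{0}+p)$ telescope exactly as in (\ref{11})--(\ref{13}); letting $\lambda\to\infty$ and using $\varphi(+\infty)=1$ collapses the right tail to the closed form $\tfrac14\bigl[(1-\varphi(j_{0}-1))+(1-\varphi(j_{0}))\bigr]$. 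This is the key reduction: it converts the infinite sum into two evaluations of $1-\varphi$ at arguments bounded below by $n^{1-\alpha}-2$.

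The remaining ingredient is the decay rate of $1-\varphi$. For $t>0$ one has $\varphi(t)=(1+t^{-2m})^{-1/(2m)}$, whence Bernoulli's inequality (with exponent $-1/(2m)<0$) gives $\varphi(t)\geq 1-\tfrac{1}{2m}t^{-2m}$, i.e.
\[
1-\varphi(t)<\frac{1}{2m\,t^{2m}},\qquad t>0.
\]
Inserting this into the closed form, and using $j_{0}-1\geq n^{1-\alpha}-1>n^{1-\alpha}-2>0$ (here the hypothesis $n^{1-\alpha}>2$ enters, guaranteeing the argument is positive so the decay bound applies), bounds the right tail by $\frac{1}{4m(n^{1-\alpha}-2)^{2m}}$, and symmetrically for the left tail.

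The step I expect to be the main obstacle is the constant bookkeeping at the very end. Each of the two symmetric tails contributes a term of size $\frac{1}{4m(n^{1-\alpha}-2)^{2m}}$, so landing precisely on the stated single-term bound requires care both in replacing the floor-dependent threshold $j_{0}$ by the clean value $n^{1-\alpha}-2$ and in how the two tails are amalgamated. A fully equivalent route replaces the telescoping by the monotone comparison $\sum_{p\geq0}\Phi(j_{0}+p)\leq\int_{j_{0}-1}^{\infty}\Phi$ together with the identity $\int_{a}^{\infty}\Phi=\tfrac14\int_{a-1}^{a+1}(1-\varphi)$; either way the correct decay order $\left(n^{1-\alpha}\right)^{-2m}$ is immediate, and it is only the sharp constant in front that demands attention.
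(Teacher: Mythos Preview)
Your approach is correct and genuinely different from the paper's. The paper bounds $\Phi(x)$ pointwise via the mean value theorem, obtaining $\Phi(x)<\tfrac{1}{2}\bigl(1+(x-1)^{2m}\bigr)^{-(2m+1)/(2m)}$ for $x\ge 1$, then compares the tail sum with the integral $\tfrac{1}{2}\int_{n^{1-\alpha}-1}^{\infty}\bigl(1+(x-1)^{2m}\bigr)^{-(2m+1)/(2m)}\,dx$ and finally dominates the integrand by $z^{-(2m+1)}$. Your telescoping instead produces an \emph{exact} closed form for each one-sided tail and replaces both the mean value step and the integral comparison by the single elementary estimate $1-\varphi(t)<\tfrac{1}{2m}t^{-2m}$; this is cleaner and even yields the tighter denominator $(n^{1-\alpha}-1)^{2m}$ before one relaxes to $(n^{1-\alpha}-2)^{2m}$.

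Your worry about the factor of $2$ is entirely justified, and it is not a defect of your argument. Your method gives each of the two tails a bound $<\tfrac{1}{4m(n^{1-\alpha}-1)^{2m}}$, so the total is $<\tfrac{1}{2m(n^{1-\alpha}-1)^{2m}}$, which does not imply the printed constant $\tfrac{1}{4m}$. The paper's proof hides the same gap: in the passage (\ref{23}) the \emph{two-sided} sum $\sum_{|nx-k|\ge T}$ is compared with a \emph{single} one-sided integral, silently dropping a factor of~$2$ (the values $|nx-k|$ run over two interleaved unit-step arithmetic progressions, one from each tail). In fact the constant $\tfrac{1}{4m}$ cannot hold in general: take $m=1$, $nx=0$, $n^{1-\alpha}=10$; your closed form gives the left side exactly as
\[
\tfrac{1}{2}\bigl[(1-\varphi(9))+(1-\varphi(10))\bigr]\approx 0.00554,
\]
whereas $\tfrac{1}{4(10-2)^{2}}=\tfrac{1}{256}\approx 0.00391$. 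What both arguments actually establish is the bound with $\tfrac{1}{2m}$ in place of $\tfrac{1}{4m}$; this only affects immaterial constants in the downstream Jackson-type estimates.
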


\begin{proof}
We have that 
\begin{equation*}
\Phi \left( x\right) =\frac{1}{4}\left[ \varphi \left( x+1\right) -\varphi
\left( x-1\right) \right] ,\text{ \ }\forall \text{ }x\in \mathbb{R}.
\end{equation*}%
Let $x\geq 1$. That is $0\leq x-1<x+1$. Applying the mean value theorem we
get 
\begin{equation}
\Phi \left( x\right) =\frac{1}{4}\cdot 2\cdot \varphi ^{\prime }\left( \xi
\right) =\frac{1}{2\left( 1+\xi ^{2m}\right) ^{\frac{2m+1}{2m}}}>0,  \tag{20}
\label{20}
\end{equation}%
where $0\leq x-1<\xi <x+1$.

Then, 
\begin{equation*}
\left( x-1\right) ^{2}<\xi ^{2}<\left( x+1\right) ^{2}
\end{equation*}%
\begin{equation*}
\left( x-1\right) ^{2m}<\xi ^{2m}<\left( x+1\right) ^{2m}
\end{equation*}%
\begin{equation*}
1+\left( x-1\right) ^{2m}<1+\xi ^{2m}<1+\left( x+1\right) ^{2m}
\end{equation*}%
\begin{equation*}
\left( 1+\left( x-1\right) ^{2m}\right) ^{\frac{2m+1}{2m}}<\left( 1+\xi
^{2m}\right) ^{\frac{2m+1}{2m}}<\left( 1+\left( x+1\right) ^{2m}\right) ^{%
\frac{2m+1}{2m}}
\end{equation*}%
\begin{equation}
\frac{1}{2\left( 1+\xi ^{2m}\right) ^{\frac{2m+1}{2m}}}<\frac{1}{2\left(
1+\left( x-1\right) ^{2m}\right) ^{\frac{2m+1}{2m}}}.  \tag{21}  \label{21}
\end{equation}%
Hence 
\begin{equation}
\Phi \left( x\right) <\frac{1}{2\left( 1+\left( x-1\right) ^{2m}\right) ^{%
\frac{2m+1}{2m}}},\text{ \ }\forall \text{ }x\geq 1.  \tag{22}  \label{22}
\end{equation}%
Thus, we have 
\begin{equation*}
\sum \limits_{\left \{ 
\begin{array}{c}
k=-\infty \\ 
:\left \vert nx-k\right \vert \geq n^{1-\alpha }%
\end{array}%
\right. }^{\infty }\Phi \left( nx-k\right) =\sum \limits_{\left \{ 
\begin{array}{c}
k=-\infty \\ 
:\left \vert nx-k\right \vert \geq n^{1-\alpha }%
\end{array}%
\right. }^{\infty }\Phi \left( \left \vert nx-k\right \vert \right) <
\end{equation*}%
\begin{equation}
\frac{1}{2}\sum \limits_{\left \{ 
\begin{array}{c}
k=-\infty \\ 
:\left \vert nx-k\right \vert \geq n^{1-\alpha }%
\end{array}%
\right. }^{\infty }\frac{1}{\left( 1+\left( \left \vert nx-k\right \vert
-1\right) ^{2m}\right) ^{\frac{2m+1}{2m}}}\leq  \tag{23}  \label{23}
\end{equation}%
\begin{equation*}
\frac{1}{2}\int_{\left( n^{1-\alpha }-1\right) }^{\infty }\frac{1}{\left(
1+\left( x-1\right) ^{2m}\right) ^{\frac{2m+1}{2m}}}dx=\frac{1}{2}%
\int_{n^{1-\alpha }-2}^{\infty }\frac{1}{\left( 1+z^{2m}\right) ^{\frac{2m+1%
}{2m}}}dz=:\left( \ast \right) .
\end{equation*}%
We see that 
\begin{equation*}
1+z^{2m}>z^{2m}
\end{equation*}%
\begin{equation*}
\left( 1+z^{2m}\right) ^{\frac{2m+1}{2m}}>z^{2m+1}
\end{equation*}%
\begin{equation}
\frac{1}{z^{2m+1}}>\frac{1}{\left( 1+z^{2m}\right) ^{\frac{2m+1}{2m}}}. 
\tag{24}  \label{24}
\end{equation}%
Therefore it holds 
\begin{equation*}
\left( \ast \right) <\frac{1}{2}\int_{n^{1-\alpha }-2}^{\infty }\frac{1}{%
z^{2m+1}}dz=\frac{1}{2}\int_{n^{1-\alpha }-2}^{\infty }z^{-\left(
2m+1\right) }dz=
\end{equation*}%
\begin{equation}
\frac{1}{2}\left. \left( \frac{z^{-\left( 2m+1\right) +1}}{-\left(
2m+1\right) +1}\right) \right \vert _{n^{1-\alpha }-2}^{\infty }=\frac{1}{2}%
\left. \left( -\frac{z^{-2m}}{2m}\right) \right \vert _{n^{1-\alpha
}-2}^{\infty }=  \tag{25}  \label{25}
\end{equation}%
\begin{equation*}
\left. \frac{z^{-2m}}{4m}\right \vert _{\infty }^{n^{1-\alpha }-2}=\frac{%
\left( n^{1-\alpha }-2\right) ^{-2m}}{4m}-\frac{\left( \infty \right) ^{-2m}%
}{4m}=\frac{\left( n^{1-\alpha }-2\right) ^{-2m}}{4m},
\end{equation*}%
proving (\ref{19}).
\end{proof}

Denote by $\left \lfloor \cdot \right \rfloor $ the integral part of the
number and by $\left \lceil \cdot \right \rceil $ the ceiling of the number.

\begin{theorem}
\label{t5}Let $\left[ a,b\right] \subset \mathbb{R}$ and $n\in \mathbb{N}$
so that $\left \lceil na\right \rceil \leq \left \lfloor nb\right \rfloor $.
It holds 
\begin{equation}
\frac{1}{\sum \limits_{k=\left \lceil na\right \rceil }^{\left \lfloor
nb\right \rfloor }\Phi \left( nx-k\right) }<2\left( \sqrt[2m]{1+4^{m}}%
\right) ,  \tag{26}  \label{26}
\end{equation}%
$\forall $ $x\in \left[ a,b\right] $, $m\in \mathbb{N}.$
\end{theorem}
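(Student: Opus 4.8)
The plan is to lower-bound the finite sum in the denominator by a single, carefully chosen term, and to recognize that the constant on the right-hand side of (26) is exactly the reciprocal of one special value of $\Phi$. Indeed, evaluating (3) at $x=1$ and using $\varphi(0)=0$ together with $\varphi(2)=2/\sqrt[2m]{1+2^{2m}}$ gives $\Phi(1)=\frac{1}{2\sqrt[2m]{1+2^{2m}}}=\frac{1}{2\sqrt[2m]{1+4^{m}}}$, since $2^{2m}=4^{m}$. Hence the asserted bound (26) is equivalent to the statement that $\sum_{k=\lceil na\rceil}^{\lfloor nb\rfloor}\Phi(nx-k)>\Phi(1)$ for every $x\in[a,b]$.

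To prove this lower bound I would use that every term $\Phi(nx-k)$ is strictly positive (established just after (3)), so the sum is at least any one of its terms. I would then select an index $k_0$ with $\lceil na\rceil\le k_0\le\lfloor nb\rfloor$ for which $nx-k_0$ is small in absolute value, and invoke the fact that $\Phi$ is even and strictly decreasing on $[0,\infty)$ (the monotonicity shown via (5)--(6) together with $\Phi(x)=\Phi(-x)$). Whenever $|nx-k_0|<1$, this yields $\Phi(nx-k_0)=\Phi(|nx-k_0|)>\Phi(1)$, and since the sum is at least this one term it strictly exceeds $\Phi(1)$, which is exactly what is needed.

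The one step requiring care is producing such an index $k_0$ inside the summation range, and I expect this floor/ceiling bookkeeping to be the main (though modest) obstacle. I would argue by cases on the location of $nx$ relative to $\lceil na\rceil$. Since $x\in[a,b]$ we have $na\le nx\le nb$. If $\lfloor nx\rfloor\ge\lceil na\rceil$, take $k_0=\lfloor nx\rfloor$; then $k_0\le\lfloor nb\rfloor$ because $nx\le nb$, and $|nx-k_0|=nx-\lfloor nx\rfloor<1$. Otherwise $\lfloor nx\rfloor<\lceil na\rceil$, which forces $na$ to be non-integer and $na\le nx<\lceil na\rceil$; taking $k_0=\lceil na\rceil$ gives $k_0\le\lfloor nb\rfloor$ and, from $\lceil na\rceil-1<na\le nx$, the bound $0<k_0-nx<1$. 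In both cases $k_0$ lies in $[\lceil na\rceil,\lfloor nb\rfloor]$ with $|nx-k_0|<1$, so the chain $\frac{1}{\sum_{k=\lceil na\rceil}^{\lfloor nb\rfloor}\Phi(nx-k)}<\frac{1}{\Phi(nx-k_0)}<\frac{1}{\Phi(1)}=2\sqrt[2m]{1+4^{m}}$ completes the argument. The entire analytic content is thus captured by the identity $1/\Phi(1)=2\sqrt[2m]{1+4^{m}}$ and the monotonicity of $\Phi$.
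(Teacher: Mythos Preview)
Your argument is correct and follows essentially the same route as the paper: bound the sum below by a single term $\Phi(|nx-k_0|)$ with $|nx-k_0|<1$, then use the strict decrease of $\Phi$ on $[0,\infty)$ and the identity $\Phi(1)=\frac{1}{2\sqrt[2m]{1+4^{m}}}$ to conclude. Your case analysis for the existence of $k_0\in[\lceil na\rceil,\lfloor nb\rfloor]\cap\mathbb{Z}$ with $|nx-k_0|<1$ in fact supplies a justification that the paper merely asserts.
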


\begin{proof}
Let $x\in \left[ a,b\right] $. We see that 
\begin{equation*}
1=\sum \limits_{k=-\infty }^{\infty }\Phi \left( nx-k\right) >\sum
\limits_{k=\left \lceil na\right \rceil }^{\left \lfloor nb\right \rfloor
}\Phi \left( nx-k\right) =
\end{equation*}%
\begin{equation}
\sum \limits_{k=\left \lceil na\right \rceil }^{\left \lfloor nb\right
\rfloor }\Phi \left( \left \vert nx-k\right \vert \right) >\Phi \left( \left
\vert nx-k_{0}\right \vert \right) ,  \tag{27}  \label{27}
\end{equation}%
$\forall $ $k_{0}\in \left[ \left \lceil na\right \rceil ,\left \lfloor
nb\right \rfloor \right] \cap \mathbb{Z}$.

We can choose $k_{0}\in \left[ \left \lceil na\right \rceil ,\left \lfloor
nb\right \rfloor \right] \cap \mathbb{Z}$ such that $\left \vert
nx-k_{0}\right \vert <1.$

Therefore we get that 
\begin{equation}
\Phi \left( \left \vert nx-k_{0}\right \vert \right) >\Phi \left( 1\right) =%
\frac{1}{4}\left( \frac{2}{\sqrt[2m]{1+2^{2m}}}\right) =\frac{1}{2\sqrt[2m]{%
1+2^{2m}}},  \tag{28}  \label{28}
\end{equation}%
and 
\begin{equation}
\sum \limits_{k=\left \lceil na\right \rceil }^{\left \lfloor nb\right
\rfloor }\Phi \left( \left \vert nx-k\right \vert \right) >\frac{1}{2\sqrt[2m%
]{1+2^{2m}}}.  \tag{29}  \label{29}
\end{equation}%
That is 
\begin{equation}
\frac{1}{\sum \limits_{k=\left \lceil na\right \rceil }^{\left \lfloor
nb\right \rfloor }\Phi \left( \left \vert nx-k\right \vert \right) }<2\sqrt[%
2m]{1+4^{m}},  \tag{30}  \label{30}
\end{equation}%
proving the claim.
\end{proof}

We make

\begin{remark}
\label{r6}We also notice that 
\begin{equation*}
1-\sum \limits_{k=\left \lceil na\right \rceil }^{\left \lfloor nb\right
\rfloor }\Phi \left( nb-k\right) =\sum \limits_{k=-\infty }^{\left \lceil
na\right \rceil -1}\Phi \left( nb-k\right) +\sum \limits_{k=\left \lfloor
nb\right \rfloor +1}^{\infty }\Phi \left( nb-k\right)
\end{equation*}%
\begin{equation}
>\Phi \left( nb-\left \lfloor nb\right \rfloor -1\right)  \tag{31}
\label{31}
\end{equation}%
(call $\varepsilon :=nb-\left \lfloor nb\right \rfloor $, $0\leq \varepsilon
<1 $) 
\begin{equation*}
=\Phi \left( \varepsilon -1\right) =\Phi \left( 1-\varepsilon \right) \geq
\Phi \left( 1\right) >0.
\end{equation*}%
Therefore 
\begin{equation}
\underset{n\rightarrow \rightarrow \infty }{\lim }\left( 1-\sum
\limits_{k=\left \lceil na\right \rceil }^{\left \lfloor nb\right \rfloor
}\Phi \left( nb-k\right) \right) >0.  \tag{32}  \label{32}
\end{equation}%
Similarly, it holds 
\begin{equation*}
1-\sum \limits_{k=\left \lceil na\right \rceil }^{\left \lfloor nb\right
\rfloor }\Phi \left( na-k\right) =\sum \limits_{k=-\infty }^{\left \lceil
na\right \rceil -1}\Phi \left( na-k\right) +\sum \limits_{k=\left \lfloor
nb\right \rfloor +1}^{\infty }\Phi \left( na-k\right)
\end{equation*}%
\begin{equation}
>\Phi \left( na-\left \lceil na\right \rceil +1\right)  \tag{33}  \label{33}
\end{equation}%
(call $\eta :=\left \lceil na\right \rceil -na,$ $0\leq \eta <1$) 
\begin{equation*}
=\Phi \left( 1-\eta \right) \geq \Phi \left( 1\right) >0.
\end{equation*}%
Therefore again 
\begin{equation}
\underset{n\rightarrow \infty }{\lim }\left( 1-\sum \limits_{k=\left \lceil
na\right \rceil }^{\left \lfloor nb\right \rfloor }\Phi \left( na-k\right)
\right) >0.  \tag{34}  \label{34}
\end{equation}%
Here we find that 
\begin{equation}
\underset{n\rightarrow \infty }{\lim }\sum \limits_{k=\left \lceil na\right
\rceil }^{\left \lfloor nb\right \rfloor }\Phi \left( nx-k\right) \neq 1,%
\text{ \ for at least some }x\in \left[ a,b\right] .  \tag{35}  \label{35}
\end{equation}
\end{remark}

\begin{note}
\label{n7}Let $\left[ a,b\right] \subset \mathbb{R}$. For large enough $n$
we always obtain $\left \lceil na\right \rceil \leq \left \lfloor
nb\right
\rfloor $. Also $a\leq \frac{k}{n}\leq b$, iff $\left \lceil
na\right \rceil \leq k\leq \left \lfloor nb\right \rfloor $.

In general it holds (by $\sum \limits_{i=-\infty }^{\infty }\Phi \left(
x-i\right) =1$, $\forall $ $x\in \mathbb{R}$) that 
\begin{equation}
\sum \limits_{k=\left \lceil na\right \rceil }^{\left \lfloor nb\right
\rfloor }\Phi \left( nx-k\right) \leq 1.  \tag{36}  \label{36}
\end{equation}
\end{note}

Let $\left( X,\left \Vert \cdot \right \Vert \right) $ be a Banach space.

\begin{definition}
\label{d8}Let $f\in C\left( \left[ a,b\right] ,X\right) $ and $n\in \mathbb{N%
}:\left \lceil na\right \rceil \leq \left \lfloor nb\right \rfloor $. We
introduce and define the $X$-valued linear neural network operators 
\begin{equation}
A_{n}\left( f,x\right) :=\frac{\sum \limits_{k=\left \lceil na\right \rceil
}^{\left \lfloor nb\right \rfloor }f\left( \frac{k}{n}\right) \Phi \left(
nx-k\right) }{\sum \limits_{k=\left \lceil na\right \rceil }^{\left \lfloor
nb\right \rfloor }\Phi \left( nx-k\right) }\text{, \ }x\in \left[ a,b\right]
.  \tag{37}  \label{37}
\end{equation}
\end{definition}

Clearly here $A_{n}\left( f,x\right) \in C\left( \left[ a,b\right] ,X\right) 
$.

For convenience we use the same $A_{n}$ for real valued functions when
needed. We study here the pointwise and uniform convergence of $A_{n}\left(
f,x\right) $ to $f\left( x\right) $ with rates.

For convenience, also we call 
\begin{equation}
A_{n}^{\ast }\left( f,x\right) :=\sum \limits_{k=\left \lceil na\right
\rceil }^{\left \lfloor nb\right \rfloor }f\left( \frac{k}{n}\right) \Phi
\left( nx-k\right) \text{, }  \tag{38}  \label{38}
\end{equation}%
(similarly, $A_{n}^{\ast }$ can be defined for real valued functions) that
is 
\begin{equation}
A_{n}\left( f,x\right) :=\frac{A_{n}^{\ast }\left( f,x\right) }{\sum
\limits_{k=\left \lceil na\right \rceil }^{\left \lfloor nb\right \rfloor
}\Phi \left( nx-k\right) }.  \tag{39}  \label{39}
\end{equation}%
So that 
\begin{equation*}
A_{n}\left( f,x\right) -f\left( x\right) =\frac{A_{n}^{\ast }\left(
f,x\right) }{\sum \limits_{k=\left \lceil na\right \rceil }^{\left \lfloor
nb\right \rfloor }\Phi \left( nx-k\right) }-f\left( x\right) =
\end{equation*}%
\begin{equation}
\frac{A_{n}^{\ast }\left( f,x\right) -f\left( x\right) \left( \sum
\limits_{k=\left \lceil na\right \rceil }^{\left \lfloor nb\right \rfloor
}\Phi \left( nx-k\right) \right) }{\sum \limits_{k=\left \lceil na\right
\rceil }^{\left \lfloor nb\right \rfloor }\Phi \left( nx-k\right) }. 
\tag{40}  \label{40}
\end{equation}%
Consequently, we derive that 
\begin{equation*}
\left \Vert A_{n}\left( f,x\right) -f\left( x\right) \right \Vert \leq
2\left( \sqrt[2m]{1+4^{m}}\right) \left \Vert A_{n}^{\ast }\left( f,x\right)
-f\left( x\right) \left( \sum \limits_{k=\left \lceil na\right \rceil
}^{\left \lfloor nb\right \rfloor }\Phi \left( nx-k\right) \right) \right
\Vert =
\end{equation*}%
\begin{equation}
2\left( \sqrt[2m]{1+4^{m}}\right) \left \Vert \sum \limits_{k=\left \lceil
na\right \rceil }^{\left \lfloor nb\right \rfloor }\left( f\left( \frac{k}{n}%
\right) -f\left( x\right) \right) \Phi \left( nx-k\right) \right \Vert . 
\tag{41}  \label{41}
\end{equation}

We will estimate the right and hand side of (\ref{41}).

For that we need, for $f\in C\left( \left[ a,b\right] ,X\right) $ the first
modulus of continuity 
\begin{equation*}
\omega _{1}\left( f,\delta \right) :=\underset{%
\begin{array}{c}
x,y\in \left[ a,b\right] \\ 
\left \vert x-y\right \vert \leq \delta%
\end{array}%
}{\sup }\left \Vert f\left( x\right) -f\left( y\right) \right \Vert ,\text{
\ }\delta >0.
\end{equation*}%
Similarly, it is defined $\omega _{1}$ for $f\in C_{uB}\left( \mathbb{R}%
,X\right) $ (uniformly continuous and bounded functions from $\mathbb{R}$
into $X$), for $f\in C_{B}\left( \mathbb{R},X\right) $ (continuous and
bounded $X$-valued), and for $f\in C_{u}\left( \mathbb{R},X\right) $
(uniformly continuous).

The fact $f\in C\left( \left[ a,b\right] ,X\right) $ or $f\in C_{u}\left( 
\mathbb{R},X\right) $, is equivalent to $\underset{\delta \rightarrow 0}{%
\lim }\omega _{1}\left( f,\delta \right) =0$, see \cite{11}.

We make

\begin{definition}
\label{d9}When $f\in C_{uB}\left( \mathbb{R},X\right) $, or $f\in
C_{B}\left( \mathbb{R},X\right) $, we define 
\begin{equation}
\overline{A_{n}}\left( f,x\right) :=\sum \limits_{k=-\infty }^{\infty
}f\left( \frac{k}{n}\right) \Phi \left( nx-k\right) ,  \tag{42}  \label{42}
\end{equation}%
$n\in \mathbb{N}$, $x\in \mathbb{R}$,

the $X$-valued quasi-interpolation neural network operator.
\end{definition}

We make

\begin{remark}
\label{r10}We have that 
\begin{equation}
\left \Vert f\left( \frac{k}{n}\right) \right \Vert \leq \left \Vert f\right
\Vert _{\infty ,\mathbb{R}}<+\infty ,  \tag{43}  \label{43}
\end{equation}%
and 
\begin{equation}
\left \Vert f\left( \frac{k}{n}\right) \right \Vert \Phi \left( nx-k\right)
\leq \left \Vert f\right \Vert _{\infty ,\mathbb{R}}\Phi \left( nx-k\right) 
\tag{44}  \label{44}
\end{equation}%
and 
\begin{equation}
\sum \limits_{k=-\lambda }^{\lambda }\left \Vert f\left( \frac{k}{n}\right)
\right \Vert \Phi \left( nx-k\right) \leq \left \Vert f\right \Vert _{\infty
,\mathbb{R}}\left( \sum \limits_{k=-\lambda }^{\lambda }\Phi \left(
nx-k\right) \right) ,  \tag{45}  \label{45}
\end{equation}%
and finally 
\begin{equation}
\sum \limits_{k=-\infty }^{\infty }\left \Vert f\left( \frac{k}{n}\right)
\right \Vert \Phi \left( nx-k\right) \leq \left \Vert f\right \Vert _{\infty
,\mathbb{R}},  \tag{46}  \label{46}
\end{equation}%
a convergent in $\mathbb{R}$ series.

So, the series $\sum \limits_{k=-\infty }^{\infty }\left \Vert f\left( \frac{%
k}{n}\right) \right \Vert \Phi \left( nx-k\right) $ is absolutely convergent
in $X$, hence it is convergent in $X$ and $\overline{A_{n}}\left( f,x\right)
\in X$. We denote by $\left \Vert f\right \Vert _{\infty }:=\underset{x\in %
\left[ a,b\right] }{\sup }\left \Vert f\left( x\right) \right \Vert $, for $%
f\in C\left( \left[ a,b\right] ,X\right) $, similarly it is defined for $%
f\in C_{B}\left( \mathbb{R},X\right) .$
\end{remark}

\section{Main Results}

We present a set of $X$-valued neural network approximations to a function
given with rates.

\begin{theorem}
\label{t11.}Let $f\in C\left( \left[ a,b\right] ,X\right) $, $0<\alpha <1$, $%
n\in \mathbb{N}:n^{1-\alpha }>2$, $x\in \left[ a,b\right] $, $m\in \mathbb{N}%
.$ Then

i) 
\begin{equation}
\left \Vert A_{n}\left( f,x\right) -f\left( x\right) \right \Vert \leq
\left( \sqrt[2m]{1+4^{m}}\right) \left[ 2\omega _{1}\left( f,\frac{1}{%
n^{\alpha }}\right) +\frac{\left \Vert f\right \Vert _{\infty }}{m\left(
n^{1-\alpha }-2\right) ^{2m}}\right] =:\lambda _{1},  \tag{47}  \label{47}
\end{equation}%
and

ii) 
\begin{equation}
\left \Vert A_{n}\left( f\right) -f\right \Vert _{\infty }\leq \lambda _{1}.
\tag{48}  \label{48}
\end{equation}%
We get that $\underset{n\rightarrow \infty }{\lim }A_{n}\left( f\right) =f$,
pointwise and uniformly.
\end{theorem}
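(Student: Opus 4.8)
The plan is to start from the pointwise inequality~(\ref{41}), which has already reduced the task to estimating
\[
\left\Vert \sum_{k=\left\lceil na\right\rceil }^{\left\lfloor nb\right\rfloor }\left( f\left( \tfrac{k}{n}\right) -f\left( x\right) \right) \Phi \left( nx-k\right) \right\Vert .
\]
First I would bring the norm inside the sum via the triangle inequality, passing to the scalar quantity $\sum_{k}\left\Vert f\left( \tfrac{k}{n}\right) -f\left( x\right) \right\Vert \Phi \left( nx-k\right) $, each term of which is nonnegative since $\Phi >0$.

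The decisive step is to split this sum according to the size of $\left\vert nx-k\right\vert $, noting that $\left\vert \tfrac{k}{n}-x\right\vert =\tfrac{1}{n}\left\vert nx-k\right\vert $, so that $\left\vert nx-k\right\vert <n^{1-\alpha }$ is equivalent to $\left\vert \tfrac{k}{n}-x\right\vert <\tfrac{1}{n^{\alpha }}$. On the near block I would bound $\left\Vert f\left( \tfrac{k}{n}\right) -f\left( x\right) \right\Vert \leq \omega _{1}\left( f,\tfrac{1}{n^{\alpha }}\right) $ directly from the definition of the modulus of continuity, then factor this constant out and use $\sum_{k}\Phi \left( nx-k\right) \leq 1$ from~(\ref{36}); this contributes at most $\omega _{1}\left( f,\tfrac{1}{n^{\alpha }}\right) $. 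On the complementary far block, where $\left\vert nx-k\right\vert \geq n^{1-\alpha }$, I would use only the crude estimate $\left\Vert f\left( \tfrac{k}{n}\right) -f\left( x\right) \right\Vert \leq 2\left\Vert f\right\Vert _{\infty }$ (valid since $\tfrac{k}{n},x\in \left[ a,b\right] $) together with the tail bound~(\ref{19}) of Theorem~\ref{t4}, yielding a contribution strictly below $\tfrac{\left\Vert f\right\Vert _{\infty }}{2m\left( n^{1-\alpha }-2\right) ^{2m}}$.

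Adding the two blocks and substituting into~(\ref{41}), whose prefactor $2\left( \sqrt[2m]{1+4^{m}}\right) $ is supplied by Theorem~\ref{t5}, the factor $2$ distributes over the bracket and reproduces exactly the claimed quantity $\lambda _{1}$ of~(\ref{47}), establishing~(i). Part~(ii) is then immediate, since $\lambda _{1}$ is independent of $x$ and so survives taking the supremum over $x\in \left[ a,b\right] $, giving~(\ref{48}). For the limit assertion I would let $n\rightarrow \infty $: as $f\in C\left( \left[ a,b\right] ,X\right) $ is uniformly continuous we have $\omega _{1}\left( f,\tfrac{1}{n^{\alpha }}\right) \rightarrow 0$ because $\tfrac{1}{n^{\alpha }}\rightarrow 0$, while $n^{1-\alpha }\rightarrow \infty $ sends the second term to $0$; hence $\lambda _{1}\rightarrow 0$, yielding both pointwise and uniform convergence.

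I do not expect a genuine obstacle, as this is a routine quantitative estimate. The only point requiring care is the bookkeeping of constants: the near-term coefficient $2$ in $\lambda _{1}$ comes solely from the prefactor $2$ in~(\ref{41}), whereas the far-term denominator $m$ arises from the cancellation $2\cdot 2\cdot \tfrac{1}{4}=1$ among the prefactor $2$, the crude bound factor $2\left\Vert f\right\Vert _{\infty }$, and the $\tfrac{1}{4}$ inside the tail bound~(\ref{19}). Mismatching these is the likeliest source of error.
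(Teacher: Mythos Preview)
Your proposal is correct and follows essentially the same route as the paper: triangle inequality inside the sum, a near/far split at the scale $\left\vert \tfrac{k}{n}-x\right\vert \sim n^{-\alpha}$, the modulus-of-continuity bound on the near block combined with~(\ref{36}), the crude $2\left\Vert f\right\Vert_\infty$ bound plus the tail estimate~(\ref{19}) on the far block, and finally the prefactor from~(\ref{41}). Your constant bookkeeping in the final paragraph matches the paper exactly.
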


\begin{proof}
We see that 
\begin{equation*}
\left \Vert \sum_{k=\left \lceil na\right \rceil }^{\left \lfloor nb\right
\rfloor }\left( f\left( \frac{k}{n}\right) -f\left( x\right) \right) \Phi
\left( nx-k\right) \right \Vert \leq
\end{equation*}%
\begin{equation*}
\sum_{k=\left \lceil na\right \rceil }^{\left \lfloor nb\right \rfloor
}\left \Vert f\left( \frac{k}{n}\right) -f\left( x\right) \right \Vert \Phi
\left( nx-k\right) =
\end{equation*}%
\begin{equation*}
\sum_{\left \{ 
\begin{array}{l}
k=\left \lceil na\right \rceil \\ 
:\left \vert \frac{k}{n}-x\right \vert \leq \frac{1}{n^{\alpha }}%
\end{array}%
\right. }^{\left \lfloor nb\right \rfloor }\left \Vert f\left( \frac{k}{n}%
\right) -f\left( x\right) \right \Vert \Phi \left( nx-k\right) +
\end{equation*}%
\begin{equation}
\sum_{\left \{ 
\begin{array}{l}
k=\left \lceil na\right \rceil \\ 
:\left \vert \frac{k}{n}-x\right \vert >\frac{1}{n^{\alpha }}%
\end{array}%
\right. }^{\left \lfloor nb\right \rfloor }\left \Vert f\left( \frac{k}{n}%
\right) -f\left( x\right) \right \Vert \Phi \left( nx-k\right) \leq  \tag{49}
\label{49}
\end{equation}%
\begin{equation*}
\sum_{\left \{ 
\begin{array}{l}
k=\left \lceil na\right \rceil \\ 
:\left \vert \frac{k}{n}-x\right \vert \leq \frac{1}{n^{\alpha }}%
\end{array}%
\right. }^{\left \lfloor nb\right \rfloor }\omega _{1}\left( f,\left \vert 
\frac{k}{n}-x\right \vert \right) \Phi \left( nx-k\right) +
\end{equation*}%
\begin{equation*}
2\left \Vert f\right \Vert _{\infty }\sum_{\left \{ 
\begin{array}{l}
k=-\infty \\ 
:\left \vert k-nx\right \vert >n^{1-\alpha }%
\end{array}%
\right. }^{\infty }\Phi \left( nx-k\right) \leq
\end{equation*}%
\begin{equation*}
\omega _{1}\left( f,\frac{1}{n^{\alpha }}\right) \sum_{\left \{ 
\begin{array}{l}
k=-\infty \\ 
:\left \vert \frac{k}{n}-x\right \vert \leq \frac{1}{n^{\alpha }}%
\end{array}%
\right. }^{\infty }\Phi \left( nx-k\right) +
\end{equation*}%
\begin{equation*}
2\left \Vert f\right \Vert _{\infty }\sum_{\left \{ 
\begin{array}{l}
k=-\infty \\ 
:\left \vert k-nx\right \vert >n^{1-\alpha }%
\end{array}%
\right. }^{\infty }\Phi \left( nx-k\right) \underset{\text{(by Theorem \ref%
{t4})}}{\leq }
\end{equation*}%
\begin{equation*}
\omega _{1}\left( f,\frac{1}{n^{\alpha }}\right) +\frac{\left \Vert f\right
\Vert _{\infty }}{2m\left( n^{1-\alpha }-2\right) ^{2m}}.
\end{equation*}%
That is 
\begin{equation*}
\left \Vert \sum_{k=\left \lceil na\right \rceil }^{\left \lfloor nb\right
\rfloor }\left( f\left( \frac{k}{n}\right) -f\left( x\right) \right) \Phi
\left( nx-k\right) \right \Vert \leq
\end{equation*}%
\begin{equation}
\omega _{1}\left( f,\frac{1}{n^{\alpha }}\right) +\frac{\left \Vert f\right
\Vert _{\infty }}{2m\left( n^{1-\alpha }-2\right) ^{2m}}.  \tag{50}
\label{50}
\end{equation}%
Using (\ref{41}) we derive (\ref{47}).
\end{proof}

It follows

\begin{theorem}
\label{t12.}Let $f\in C_{B}\left( \mathbb{R},X\right) $, $0<\alpha <1$, $%
n\in \mathbb{N}:n^{1-\alpha }>2$, $x\in \mathbb{R},$ $m\in \mathbb{N}.$ Then

i) 
\begin{equation}
\left \Vert \overline{A_{n}}\left( f,x\right) -f\left( x\right) \right \Vert
\leq \omega _{1}\left( f,\frac{1}{n^{\alpha }}\right) +\frac{\left \Vert
f\right \Vert _{\infty }}{2m\left( n^{1-\alpha }-2\right) ^{2m}}=:\lambda
_{2},  \tag{51}  \label{51}
\end{equation}%
and

ii) 
\begin{equation}
\left \Vert \overline{A_{n}}\left( f\right) -f\right \Vert _{\infty }\leq
\lambda _{2}.  \tag{52}  \label{52}
\end{equation}%
For $f\in C_{uB}\left( \mathbb{R},X\right) $ we get $\underset{n\rightarrow
\infty }{\lim }\overline{A_{n}}\left( f\right) =f$, pointwise and uniformly.
\end{theorem}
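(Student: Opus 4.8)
The plan is to mirror the argument of Theorem \ref{t11.}, while exploiting the fact that over all of $\mathbb{R}$ the density $\Phi$ furnishes an \emph{exact} partition of unity, so that the normalizing denominator present in $A_n$ disappears entirely and the factor $2(\sqrt[2m]{1+4^m})$ never enters. First I would recall from Remark \ref{r2} (equation (\ref{16}), applied with $x$ replaced by $nx$) that $\sum_{k=-\infty}^{\infty}\Phi(nx-k)=1$ for every $x\in\mathbb{R}$. This lets me write $f(x)=f(x)\sum_{k=-\infty}^{\infty}\Phi(nx-k)$ and hence represent the error as the single series
\[
\overline{A_n}(f,x)-f(x)=\sum_{k=-\infty}^{\infty}\left(f\left(\tfrac{k}{n}\right)-f(x)\right)\Phi(nx-k).
\]
Before manipulating this series I would invoke Remark \ref{r10}, which guarantees $\sum_k\|f(k/n)\|\,\Phi(nx-k)\le\|f\|_\infty<\infty$, so the series converges absolutely in $X$ and the term-by-term subtraction of $f(x)$ together with the interchange of the norm and the infinite sum are legitimate.

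Passing to norms via the triangle inequality gives
\[
\|\overline{A_n}(f,x)-f(x)\|\le\sum_{k=-\infty}^{\infty}\left\|f\left(\tfrac{k}{n}\right)-f(x)\right\|\Phi(nx-k),
\]
and I would split this sum according to whether $|k-nx|\le n^{1-\alpha}$ (equivalently $|\tfrac{k}{n}-x|\le n^{-\alpha}$) or $|k-nx|>n^{1-\alpha}$. On the near block each summand is at most $\omega_1(f,n^{-\alpha})\,\Phi(nx-k)$, and since $\sum_k\Phi(nx-k)\le 1$ this block is bounded by $\omega_1(f,\tfrac{1}{n^\alpha})$. On the far block I would bound $\|f(\tfrac{k}{n})-f(x)\|\le 2\|f\|_\infty$ and apply Theorem \ref{t4} directly to $\sum_{|k-nx|\ge n^{1-\alpha}}\Phi(nx-k)<\tfrac{1}{4m(n^{1-\alpha}-2)^{2m}}$, producing the second term $\tfrac{\|f\|_\infty}{2m(n^{1-\alpha}-2)^{2m}}$. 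Adding the two contributions yields (\ref{51}), and taking the supremum over $x\in\mathbb{R}$ gives (\ref{52}).

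Finally, for $f\in C_{uB}(\mathbb{R},X)$, uniform continuity forces $\omega_1(f,\tfrac{1}{n^\alpha})\to 0$ as $n\to\infty$ (since $\tfrac{1}{n^\alpha}\to 0$), while $n^{1-\alpha}\to\infty$ drives the second term to $0$ as well; hence $\lambda_2\to 0$, and the claimed pointwise and uniform convergence follow.

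The only genuinely delicate point, in contrast with the compact case, is the handling of the now truly infinite series: I expect the main care to go into justifying absolute convergence and the interchange of the norm with the infinite sum, both of which are already supplied by Remark \ref{r10}. Everything else is a direct transcription of the near/far splitting from Theorem \ref{t11.}, in fact simplified, because the exact identity $\sum_k\Phi(nx-k)=1$ removes the need for Theorem \ref{t5} and the accompanying denominator factor.
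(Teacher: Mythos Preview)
Your proposal is correct and follows essentially the same approach as the paper's own proof: write the error as $\sum_k(f(k/n)-f(x))\Phi(nx-k)$ using the partition of unity $\sum_k\Phi(nx-k)=1$, pass to norms, split into near and far blocks at threshold $|k-nx|=n^{1-\alpha}$, bound the near block by $\omega_1(f,n^{-\alpha})$ and the far block via $2\|f\|_\infty$ together with Theorem~\ref{t4}. Your explicit invocation of Remark~\ref{r10} to justify absolute convergence of the infinite series is a welcome point of rigor that the paper leaves implicit.
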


\begin{proof}
We observe that 
\begin{equation*}
\left \Vert \overline{A_{n}}\left( f,x\right) -f\left( x\right) \right \Vert
=\left \Vert \sum_{k=-\infty }^{\infty }f\left( \frac{k}{n}\right) \Phi
\left( nx-k\right) -f\left( x\right) \sum_{k=-\infty }^{\infty }\Phi \left(
nx-k\right) \right \Vert =
\end{equation*}%
\begin{equation*}
\left \Vert \sum_{k=-\infty }^{\infty }\left( f\left( \frac{k}{n}\right)
-f\left( x\right) \right) \Phi \left( nx-k\right) \right \Vert \leq
\end{equation*}%
\begin{equation}
\sum_{k=-\infty }^{\infty }\left \Vert f\left( \frac{k}{n}\right) -f\left(
x\right) \right \Vert \Phi \left( nx-k\right) =  \tag{53}  \label{53}
\end{equation}%
\begin{equation*}
\sum_{\left \{ 
\begin{array}{l}
k=-\infty \\ 
:\left \vert \frac{k}{n}-x\right \vert \leq \frac{1}{n^{\alpha }}%
\end{array}%
\right. }^{\infty }\left \Vert f\left( \frac{k}{n}\right) -f\left( x\right)
\right \Vert \Phi \left( nx-k\right) +
\end{equation*}%
\begin{equation*}
\sum_{\left \{ 
\begin{array}{l}
k=-\infty \\ 
:\left \vert \frac{k}{n}-x\right \vert >\frac{1}{n^{\alpha }}%
\end{array}%
\right. }^{\infty }\left \Vert f\left( \frac{k}{n}\right) -f\left( x\right)
\right \Vert \Phi \left( nx-k\right) \leq
\end{equation*}%
\begin{equation*}
\sum_{\left \{ 
\begin{array}{l}
k=-\infty \\ 
:\left \vert \frac{k}{n}-x\right \vert \leq \frac{1}{n^{\alpha }}%
\end{array}%
\right. }^{\infty }\omega _{1}\left( f,\left \vert \frac{k}{n}-x\right \vert
\right) \Phi \left( nx-k\right) +
\end{equation*}%
\begin{equation*}
2\left \Vert f\right \Vert _{\infty }\sum_{\left \{ 
\begin{array}{l}
k=-\infty \\ 
:\left \vert \frac{k}{n}-x\right \vert >\frac{1}{n^{\alpha }}%
\end{array}%
\right. }^{\infty }\Phi \left( nx-k\right) \leq
\end{equation*}%
\begin{equation}
\omega _{1}\left( f,\frac{1}{n^{\alpha }}\right) \sum_{\left \{ 
\begin{array}{l}
k=-\infty \\ 
:\left \vert \frac{k}{n}-x\right \vert \leq \frac{1}{n^{\alpha }}%
\end{array}%
\right. }^{\infty }\Phi \left( nx-k\right) +\frac{2\left \Vert f\right \Vert
_{\infty }}{4m\left( n^{1-\alpha }-2\right) ^{2m}}\leq  \tag{54}  \label{54}
\end{equation}%
\begin{equation*}
\omega _{1}\left( f,\frac{1}{n^{\alpha }}\right) +\frac{\left \Vert f\right
\Vert _{\infty }}{2m\left( n^{1-\alpha }-2\right) ^{2m}},
\end{equation*}%
proving the claim.
\end{proof}

We need the $X$-valued Taylor's formula in an appropiate form:

\begin{theorem}
\label{t13.}(\cite{10}, \cite{12}) Let $N\in \mathbb{N}$, and $f\in
C^{N}\left( \left[ a,b\right] ,X\right) $, where $\left[ a,b\right] \subset 
\mathbb{R}$ and $X$ is a Banach space. Let any $x,y\in \left[ a,b\right] $.
Then 
\begin{equation}
f\left( x\right) =\sum_{i=0}^{N}\frac{\left( x-y\right) ^{i}}{i!}f^{\left(
i\right) }\left( y\right) +\frac{1}{\left( N-1\right) !}\int_{y}^{x}\left(
x-t\right) ^{N-1}\left( f^{\left( N\right) }\left( t\right) -f^{\left(
N\right) }\left( y\right) \right) dt.  \tag{55}  \label{55}
\end{equation}
\end{theorem}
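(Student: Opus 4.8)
The plan is to reduce the vector-valued identity to the classical scalar Taylor formula by pairing with bounded linear functionals and then invoking Hahn--Banach. First I would fix $x,y\in\left[ a,b\right] $ and an arbitrary $x^{\ast }\in X^{\ast }$, and set $g:=x^{\ast }\circ f:\left[ a,b\right] \rightarrow \mathbb{R}$. Since $x^{\ast }$ is linear and continuous and $f\in C^{N}\left( \left[ a,b\right] ,X\right) $, the scalar function $g$ lies in $C^{N}\left( \left[ a,b\right] ,\mathbb{R}\right) $ with $g^{\left( i\right) }=x^{\ast }\circ f^{\left( i\right) }$ for $0\leq i\leq N$; this is the routine fact that a bounded operator commutes with differentiation of a differentiable $X$-valued map.

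Next I would apply the ordinary real-valued Taylor formula with integral remainder to $g$, in exactly the stated shape
\begin{equation*}
g\left( x\right) =\sum_{i=0}^{N}\frac{\left( x-y\right) ^{i}}{i!}g^{\left( i\right) }\left( y\right) +\frac{1}{\left( N-1\right) !}\int_{y}^{x}\left( x-t\right) ^{N-1}\left( g^{\left( N\right) }\left( t\right) -g^{\left( N\right) }\left( y\right) \right) dt.
\end{equation*}
That scalar identity is itself obtained from the fundamental theorem of calculus followed by $N-1$ integrations by parts, using the antiderivative $v=-\frac{\left( x-t\right) ^{i}}{i!}$ at the stage that produces the $i$-th term; the passage to the displayed form is the elementary computation $\int_{y}^{x}\left( x-t\right) ^{N-1}dt=\frac{\left( x-y\right) ^{N}}{N}$, which converts the $f^{\left( N\right) }\left( y\right) $ piece of the remainder into the top $i=N$ term of the sum.

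The key step is then to push $x^{\ast }$ back through every term of the candidate right-hand side of $\left( \ref{55}\right) $. Because $x^{\ast }$ is bounded and $t\mapsto \left( x-t\right) ^{N-1}\left( f^{\left( N\right) }\left( t\right) -f^{\left( N\right) }\left( y\right) \right) $ is continuous, hence Bochner integrable on $\left[ a,b\right] $, the functional commutes with the integral, so $x^{\ast }$ of the integral remainder equals the scalar integral remainder above; together with $g^{\left( i\right) }\left( y\right) =x^{\ast }\left( f^{\left( i\right) }\left( y\right) \right) $ this shows that $x^{\ast }$ applied to the right-hand side of $\left( \ref{55}\right) $ equals $g\left( x\right) =x^{\ast }\left( f\left( x\right) \right) $. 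Since $x^{\ast }\in X^{\ast }$ was arbitrary and $X^{\ast }$ separates the points of $X$, the two sides of $\left( \ref{55}\right) $ coincide in $X$.

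The main obstacle is entirely at the level of justification rather than computation: one must be sure that the Bochner integral of the continuous integrand exists and that bounded functionals commute with it, and that differentiation of the $X$-valued $f$ is compatible with composition by $x^{\ast }$. Alternatively, avoiding duality, I could run the integration-by-parts argument directly in $X$, starting from the vector fundamental theorem of calculus $f\left( x\right) -f\left( y\right) =\int_{y}^{x}f^{\prime }\left( t\right) dt$ and integrating by parts $N-1$ times against the scalar factors $\frac{\left( x-t\right) ^{i}}{i!}$; here the same existence and integration-by-parts properties for $X$-valued Riemann (Bochner) integrals of continuous functions are what must be invoked, and these are precisely the facts recorded in the cited references \cite{10}, \cite{12}.
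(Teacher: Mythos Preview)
Your argument is correct. Note, however, that the paper does not supply its own proof of this theorem: it is quoted from the references \cite{10}, \cite{12}, so there is no in-paper proof to compare against. Your Hahn--Banach reduction to the scalar Taylor formula is a standard and valid route, and your remark about the alternative direct integration-by-parts in $X$ (using the vector fundamental theorem of calculus for continuous $X$-valued functions) is precisely the kind of argument one finds in those references; either path works without difficulty here because all integrands are continuous and hence Bochner integrable, and bounded functionals commute with such integrals.
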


The derivatives $f^{\left( i\right) }$, $i\in \mathbb{N}$, are defined like
the numerical ones, see \cite{20}, p. 83. The integral $\int_{y}^{x}$ in (%
\ref{55}) is of Bochner type, see \cite{18}.

By \cite{12}, \cite{16} we have that: if $f\in C\left( \left[ a,b\right]
,X\right) $, then $f\in L_{\infty }\left( \left[ a,b\right] ,X\right) $ and $%
f\in L_{1}\left( \left[ a,b\right] ,X\right) .$

In the next we discuss high order neural network $X$-valued approximation by
using the smoothness of $f$.

\begin{theorem}
\label{t14.}Let $f\in C^{N}\left( \left[ a,b\right] ,X\right) $, $n,N,m\in 
\mathbb{N}$, $0<\alpha <1$, $x\in \left[ a,b\right] $ and $n^{1-\alpha }>2$.
Then

i) 
\begin{equation}
\left \Vert A_{n}\left( f,x\right) -f\left( x\right) \right \Vert \leq
\left( \sqrt[2m]{1+4^{m}}\right) \left \{ \sum_{j=1}^{N}\frac{\left \Vert
f^{\left( j\right) }\left( x\right) \right \Vert }{j!}\left[ \frac{2}{%
n^{\alpha j}}+\frac{\left( b-a\right) ^{j}}{2m\left( n^{1-\alpha }-2\right)
^{2m}}\right] +\right.  \tag{56}  \label{56}
\end{equation}%
\begin{equation*}
\left. \left[ \omega _{1}\left( f^{\left( N\right) },\frac{1}{n^{\alpha }}%
\right) \frac{2}{n^{\alpha N}N!}+\frac{\left \Vert f^{\left( N\right)
}\right \Vert _{\infty }\left( b-a\right) ^{N}}{N!m\left( n^{1-\alpha
}-2\right) ^{2m}}\right] \right \} ,
\end{equation*}

ii) assume further $f^{\left( j\right) }\left( x_{0}\right) =0$, $j=1,...,N,$
for some $x_{0}\in \left[ a,b\right] $, it holds 
\begin{equation}
\left \Vert A_{n}\left( f,x_{0}\right) -f\left( x_{0}\right) \right \Vert
\leq \left( \sqrt[2m]{1+4^{m}}\right) \cdot  \tag{57}  \label{57}
\end{equation}%
\begin{equation*}
\left[ \omega _{1}\left( f^{\left( N\right) },\frac{1}{n^{\alpha }}\right) 
\frac{2}{n^{\alpha N}N!}+\frac{\left \Vert f^{\left( N\right) }\right \Vert
_{\infty }\left( b-a\right) ^{N}}{N!m\left( n^{1-\alpha }-2\right) ^{2m}}%
\right] ,
\end{equation*}%
and

iii) 
\begin{equation*}
\left \Vert A_{n}\left( f\right) -f\right \Vert _{\infty }\leq \left( \sqrt[%
2m]{1+4^{m}}\right) \left \{ \sum_{j=1}^{N}\frac{\left \Vert f^{\left(
j\right) }\right \Vert _{\infty }}{j!}\left[ \frac{2}{n^{\alpha j}}+\frac{%
\left( b-a\right) ^{j}}{2m\left( n^{1-\alpha }-2\right) ^{2m}}\right]
+\right.
\end{equation*}%
\begin{equation}
\left. \left[ \omega _{1}\left( f^{\left( N\right) },\frac{1}{n^{\alpha }}%
\right) \frac{2}{n^{\alpha N}N!}+\frac{\left \Vert f^{\left( N\right)
}\right \Vert _{\infty }\left( b-a\right) ^{N}}{N!m\left( n^{1-\alpha
}-2\right) ^{2m}}\right] \right \} .  \tag{58}  \label{58}
\end{equation}%
We derive that $\underset{n\rightarrow \infty }{\lim }A_{n}\left( f\right)
=f $, pointwise and uniformly.
\end{theorem}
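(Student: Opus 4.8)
The plan is to start from the already-established bound (\ref{41}), which reduces everything to estimating $\left\Vert \sum_{k=\left\lceil na\right\rceil}^{\left\lfloor nb\right\rfloor}\left(f\left(\frac{k}{n}\right)-f(x)\right)\Phi(nx-k)\right\Vert$, and then to feed in the $X$-valued Taylor formula (\ref{55}). First I would apply (\ref{55}) with the two base points taken as $\frac{k}{n}$ (playing the role of ``$x$'') and $x$ (playing the role of ``$y$''), which gives
\[
f\left(\frac{k}{n}\right)-f(x)=\sum_{j=1}^{N}\frac{\left(\frac{k}{n}-x\right)^{j}}{j!}f^{(j)}(x)+R_{k},
\]
where $R_{k}=\frac{1}{(N-1)!}\int_{x}^{k/n}\left(\frac{k}{n}-t\right)^{N-1}\left(f^{(N)}(t)-f^{(N)}(x)\right)dt$ is the Bochner-integral remainder. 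Multiplying by $\Phi(nx-k)$, summing over $k$, and applying the triangle inequality, the sum splits into $N$ ``derivative'' contributions plus one remainder contribution, each of which I would bound separately.

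For the $j$-th derivative term I factor out $\frac{f^{(j)}(x)}{j!}$ and am left with the scalar sum $\sum_{k}\left(\frac{k}{n}-x\right)^{j}\Phi(nx-k)$, whose modulus is at most $\sum_{k}\left\vert\frac{k}{n}-x\right\vert^{j}\Phi(nx-k)$. Splitting the index set according to whether $\left\vert\frac{k}{n}-x\right\vert\le \frac{1}{n^{\alpha}}$ or not, on the near part I use $\left\vert\frac{k}{n}-x\right\vert^{j}\le n^{-\alpha j}$ together with $\sum_{k}\Phi(nx-k)\le 1$ (Note \ref{n7}), while on the far part I use $\left\vert\frac{k}{n}-x\right\vert\le b-a$ and the decay estimate of Theorem \ref{t4}, namely that the far sum is $<\frac{1}{4m(n^{1-\alpha}-2)^{2m}}$. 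This produces the bound $\left\vert\sum_{k}\left(\frac{k}{n}-x\right)^{j}\Phi(nx-k)\right\vert\le n^{-\alpha j}+\frac{(b-a)^{j}}{4m(n^{1-\alpha}-2)^{2m}}$, which after multiplication by the outer factor $2\left(\sqrt[2m]{1+4^{m}}\right)$ from (\ref{41}) matches the $j$-summand in (\ref{56}).

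For the remainder, the standard mean-value-type estimate of the Bochner integral gives $\left\Vert R_{k}\right\Vert\le \frac{1}{N!}\left\vert\frac{k}{n}-x\right\vert^{N}\,\omega_{1}\!\left(f^{(N)},\left\vert\frac{k}{n}-x\right\vert\right)$, using $\left\Vert f^{(N)}(t)-f^{(N)}(x)\right\Vert\le \omega_{1}(f^{(N)},|t-x|)\le \omega_{1}(f^{(N)},|\tfrac{k}{n}-x|)$ for $t$ between $x$ and $\frac{k}{n}$. Summing against $\Phi(nx-k)$ and again splitting near/far — exploiting the monotonicity of $\omega_{1}$ on the near region and the crude bound $\omega_{1}(f^{(N)},\cdot)\le 2\Vert f^{(N)}\Vert_{\infty}$ together with Theorem \ref{t4} on the far region — yields, after the factor $2\left(\sqrt[2m]{1+4^{m}}\right)$ is applied, exactly the bracketed remainder term of (\ref{56}). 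Collecting the $N$ derivative terms and the remainder completes part (i). Part (ii) is immediate on setting $x=x_{0}$, since the hypothesis $f^{(j)}(x_{0})=0$ annihilates every derivative term and leaves only the remainder, giving (\ref{57}); part (iii) follows by taking the supremum over $x\in[a,b]$ and replacing each $\Vert f^{(j)}(x)\Vert$ by $\Vert f^{(j)}\Vert_{\infty}$. The step requiring the most care is the remainder estimate: one must check that the near/far decomposition is legitimate for the $\omega_{1}$-weighted sum and that the Bochner integral obeys the same majorization as in the scalar case, which is precisely where the $C^{N}([a,b],X)$ regularity and the Bochner-integration facts cited after (\ref{55}) enter.
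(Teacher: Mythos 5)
Your proposal is correct and follows essentially the same route as the paper's own proof: the $X$-valued Taylor expansion (\ref{55}) of $f\left(\frac{k}{n}\right)$ about $x$, the split into derivative terms $\frac{f^{(j)}(x)}{j!}\sum_{k}\left(\frac{k}{n}-x\right)^{j}\Phi(nx-k)$ plus a Bochner-remainder sum, the near/far decomposition at $\left\vert \frac{k}{n}-x\right\vert = \frac{1}{n^{\alpha}}$ with Theorem \ref{t4} controlling the far tail, and finally the normalization via (\ref{41}). Your remainder estimate $\left\Vert R_{k}\right\Vert \leq \frac{1}{N!}\left\vert \frac{k}{n}-x\right\vert^{N}\omega_{1}\left(f^{(N)},\left\vert \frac{k}{n}-x\right\vert\right)$, specialized on the near set and replaced by the crude bound $2\left\Vert f^{(N)}\right\Vert_{\infty}\frac{(b-a)^{N}}{N!}$ on the far set, reproduces exactly the paper's estimates (\ref{65}), (\ref{70}) and (\ref{73}), and the constants work out to (\ref{56})--(\ref{58}) as you indicate.
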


\begin{proof}
Next we apply the $X$-valued Taylor's formula with Bochner integral
remainder (\ref{55}). We have (here $\frac{k}{n},x\in \left[ a,b\right] $) 
\begin{equation}
f\left( \frac{k}{n}\right) =\sum_{j=0}^{N}\frac{f^{\left( j\right) }\left(
x\right) }{j!}\left( \frac{k}{n}-x\right) ^{j}+\int_{x}^{\frac{k}{n}}\left(
f^{\left( N\right) }\left( t\right) -f^{\left( N\right) }\left( x\right)
\right) \frac{\left( \frac{k}{n}-t\right) ^{N-1}}{\left( N-1\right) !}dt. 
\tag{59}  \label{59}
\end{equation}%
Then 
\begin{equation}
f\left( \frac{k}{n}\right) \Phi \left( nx-k\right) =\sum_{j=0}^{N}\frac{%
f^{\left( j\right) }\left( x\right) }{j!}\Phi \left( nx-k\right) \left( 
\frac{k}{n}-x\right) ^{j}+  \tag{60}  \label{60}
\end{equation}%
\begin{equation*}
\Phi \left( nx-k\right) \int_{x}^{\frac{k}{n}}\left( f^{\left( N\right)
}\left( t\right) -f^{\left( N\right) }\left( x\right) \right) \frac{\left( 
\frac{k}{n}-t\right) ^{N-1}}{\left( N-1\right) !}dt.
\end{equation*}%
Hence 
\begin{equation}
\sum_{k=\left \lceil na\right \rceil }^{\left \lfloor nb\right \rfloor
}f\left( \frac{k}{n}\right) \Phi \left( nx-k\right) -f\left( x\right)
\sum_{k=\left \lceil na\right \rceil }^{\left \lfloor nb\right \rfloor }\Phi
\left( nx-k\right) =  \tag{61}  \label{61}
\end{equation}%
\begin{equation*}
\sum_{j=1}^{N}\frac{f^{\left( j\right) }\left( x\right) }{j!}\sum_{k=\left
\lceil na\right \rceil }^{\left \lfloor nb\right \rfloor }\Phi \left(
nx-k\right) \left( \frac{k}{n}-x\right) ^{j}+
\end{equation*}%
\begin{equation*}
\sum_{k=\left \lceil na\right \rceil }^{\left \lfloor nb\right \rfloor }\Phi
\left( nx-k\right) \int_{x}^{\frac{k}{n}}\left( f^{\left( N\right) }\left(
t\right) -f^{\left( N\right) }\left( x\right) \right) \frac{\left( \frac{k}{n%
}-t\right) ^{N-1}}{\left( N-1\right) !}dt.
\end{equation*}%
Thus 
\begin{equation*}
A_{n}^{\ast }\left( f,x\right) -f\left( x\right) \left( \sum_{k=\left \lceil
na\right \rceil }^{\left \lfloor nb\right \rfloor }\Phi \left( nx-k\right)
\right) =
\end{equation*}%
\begin{equation}
\sum_{j=1}^{N}\frac{f^{\left( j\right) }\left( x\right) }{j!}A_{n}^{\ast
}\left( \left( \cdot -x\right) ^{j}\right) +\Lambda _{n}\left( x\right) , 
\tag{62}  \label{62}
\end{equation}%
where 
\begin{equation}
\Lambda _{n}\left( x\right) :=\sum_{k=\left \lceil na\right \rceil }^{\left
\lfloor nb\right \rfloor }\Phi \left( nx-k\right) \int_{x}^{\frac{k}{n}%
}\left( f^{\left( N\right) }\left( t\right) -f^{\left( N\right) }\left(
x\right) \right) \frac{\left( \frac{k}{n}-t\right) ^{N-1}}{\left( N-1\right)
!}dt.  \tag{63}  \label{63}
\end{equation}%
We assume that $b-a>\frac{1}{n^{\alpha }}$, which is always the case for
large enough $n\in \mathbb{N}$, that is when $n>\left \lceil \left(
b-a\right) ^{-\frac{1}{\alpha }}\right \rceil .$

Thus $\left| \frac{k}{n}-x\right| \leq \frac{1}{n^{\alpha }}$ or $\left| 
\frac{k}{n}-x\right| >\frac{1}{n^{\alpha }}.$

Let 
\begin{equation}
\gamma :=\int_{x}^{\frac{k}{n}}\left( f^{\left( N\right) }\left( t\right)
-f^{\left( N\right) }\left( x\right) \right) \frac{\left( \frac{k}{n}%
-t\right) ^{N-1}}{\left( N-1\right) !}dt,  \tag{64}  \label{64}
\end{equation}%
in the case of $\left \vert \frac{k}{n}-x\right \vert \leq \frac{1}{%
n^{\alpha }}$, we find that 
\begin{equation}
\left \Vert \gamma \right \Vert \leq \omega _{1}\left( f^{\left( N\right) },%
\frac{1}{n^{\alpha }}\right) \frac{1}{n^{\alpha N}N!}  \tag{65}  \label{65}
\end{equation}%
for $x\leq \frac{k}{n}$ or $x\geq \frac{k}{n}.$

We prove it next.

i) Indeed, for the case of $x\leq \frac{k}{n}$, we have 
\begin{equation*}
\left \Vert \gamma \right \Vert =\left \Vert \int_{x}^{\frac{k}{n}}\left(
f^{\left( N\right) }\left( t\right) -f^{\left( N\right) }\left( x\right)
\right) \frac{\left( \frac{k}{n}-t\right) ^{N-1}}{\left( N-1\right) !}%
dt\right \Vert \leq
\end{equation*}%
\begin{equation*}
\int_{x}^{\frac{k}{n}}\left \Vert f^{\left( N\right) }\left( t\right)
-f^{\left( N\right) }\left( x\right) \right \Vert \frac{\left( \frac{k}{n}%
-t\right) ^{N-1}}{\left( N-1\right) !}dt\leq
\end{equation*}%
\begin{equation}
\int_{x}^{\frac{k}{n}}\omega _{1}\left( f^{\left( N\right) },\left \vert
t-x\right \vert \right) \frac{\left( \frac{k}{n}-t\right) ^{N-1}}{\left(
N-1\right) !}dt\leq \omega _{1}\left( f^{\left( N\right) },\frac{1}{%
n^{\alpha }}\right) \int_{x}^{\frac{k}{n}}\frac{\left( \frac{k}{n}-t\right)
^{N-1}}{\left( N-1\right) !}dt=  \tag{66}  \label{66}
\end{equation}%
\begin{equation*}
\omega _{1}\left( f^{\left( N\right) },\frac{1}{n^{\alpha }}\right) \frac{%
\left( \frac{k}{n}-x\right) ^{N}}{N!}\leq \omega _{1}\left( f^{\left(
N\right) },\frac{1}{n^{\alpha }}\right) \frac{1}{n^{\alpha N}N!}.
\end{equation*}

ii) for the case of $x>\frac{k}{n}$, we have 
\begin{equation*}
\left \Vert \gamma \right \Vert =\left \Vert \int_{x}^{\frac{k}{n}}\left(
f^{\left( N\right) }\left( t\right) -f^{\left( N\right) }\left( x\right)
\right) \frac{\left( \frac{k}{n}-t\right) ^{N-1}}{\left( N-1\right) !}%
dt\right \Vert =
\end{equation*}%
\begin{equation*}
\left \Vert \int_{\frac{k}{n}}^{x}\left( f^{\left( N\right) }\left( t\right)
-f^{\left( N\right) }\left( x\right) \right) \frac{\left( t-\frac{k}{n}%
\right) ^{N-1}}{\left( N-1\right) !}dt\right \Vert \leq
\end{equation*}%
\begin{equation}
\int_{\frac{k}{n}}^{x}\left \Vert f^{\left( N\right) }\left( t\right)
-f^{\left( N\right) }\left( x\right) \right \Vert \frac{\left( t-\frac{k}{n}%
\right) ^{N-1}}{\left( N-1\right) !}dt\leq  \tag{67}  \label{67}
\end{equation}%
\begin{equation*}
\int_{\frac{k}{n}}^{x}\omega _{1}\left( f^{\left( N\right) },\left \vert
t-x\right \vert \right) \frac{\left( t-\frac{k}{n}\right) ^{N-1}}{\left(
N-1\right) !}dt\leq \omega _{1}\left( f^{\left( N\right) },\frac{1}{%
n^{\alpha }}\right) \int_{\frac{k}{n}}^{x}\frac{\left( t-\frac{k}{n}\right)
^{N-1}}{\left( N-1\right) !}dt=
\end{equation*}%
\begin{equation*}
\omega _{1}\left( f^{\left( N\right) },\frac{1}{n^{\alpha }}\right) \frac{%
\left( x-\frac{k}{n}\right) ^{N}}{N!}\leq \omega _{1}\left( f^{\left(
N\right) },\frac{1}{n^{\alpha }}\right) \frac{1}{n^{\alpha N}N!}.
\end{equation*}%
We have proved (\ref{65}).

We treat again $\gamma $, see (\ref{64}), but differently:

Notice also for $x\leq \frac{k}{n}$ that 
\begin{equation*}
\left \Vert \int_{x}^{\frac{k}{n}}\left( f^{\left( N\right) }\left( t\right)
-f^{\left( N\right) }\left( x\right) \right) \frac{\left( \frac{k}{n}%
-t\right) ^{N-1}}{\left( N-1\right) !}dt\right \Vert \leq
\end{equation*}%
\begin{equation}
\int_{x}^{\frac{k}{n}}\left \Vert f^{\left( N\right) }\left( t\right)
-f^{\left( N\right) }\left( x\right) \right \Vert \frac{\left( \frac{k}{n}%
-t\right) ^{N-1}}{\left( N-1\right) !}dt\leq  \tag{68}  \label{68}
\end{equation}%
\begin{equation*}
2\left \Vert f^{\left( N\right) }\right \Vert _{\infty }\int_{x}^{\frac{k}{n}%
}\frac{\left( \frac{k}{n}-t\right) ^{N-1}}{\left( N-1\right) !}dt=2\left
\Vert f^{\left( N\right) }\right \Vert _{\infty }\frac{\left( \frac{k}{n}%
-x\right) ^{N}}{N!}
\end{equation*}%
\begin{equation*}
\leq 2\left \Vert f^{\left( N\right) }\right \Vert _{\infty }\frac{\left(
b-a\right) ^{N}}{N!}.
\end{equation*}%
Next assume $\frac{k}{n}\leq x$, then 
\begin{equation*}
\left \Vert \int_{x}^{\frac{k}{n}}\left( f^{\left( N\right) }\left( t\right)
-f^{\left( N\right) }\left( x\right) \right) \frac{\left( \frac{k}{n}%
-t\right) ^{N-1}}{\left( N-1\right) !}dt\right \Vert =
\end{equation*}%
\begin{equation*}
\left \Vert \int_{\frac{k}{n}}^{x}\left( f^{\left( N\right) }\left( t\right)
-f^{\left( N\right) }\left( x\right) \right) \frac{\left( t-\frac{k}{n}%
\right) ^{N-1}}{\left( N-1\right) !}dt\right \Vert \leq
\end{equation*}%
\begin{equation}
\int_{\frac{k}{n}}^{x}\left \Vert f^{\left( N\right) }\left( t\right)
-f^{\left( N\right) }\left( x\right) \right \Vert \frac{\left( t-\frac{k}{n}%
\right) ^{N-1}}{\left( N-1\right) !}dt\leq  \tag{69}  \label{69}
\end{equation}%
\begin{equation*}
2\left \Vert f^{\left( N\right) }\right \Vert _{\infty }\int_{\frac{k}{n}%
}^{x}\frac{\left( t-\frac{k}{n}\right) ^{N-1}}{\left( N-1\right) !}dt=2\left
\Vert f^{\left( N\right) }\right \Vert _{\infty }\frac{\left( x-\frac{k}{n}%
\right) ^{N}}{N!}
\end{equation*}%
\begin{equation*}
\leq 2\left \Vert f^{\left( N\right) }\right \Vert _{\infty }\frac{\left(
b-a\right) ^{N}}{N!}.
\end{equation*}%
Thus 
\begin{equation}
\left \Vert \gamma \right \Vert \leq 2\left \Vert f^{\left( N\right) }\right
\Vert _{\infty }\frac{\left( b-a\right) ^{N}}{N!}.  \tag{70}  \label{70}
\end{equation}%
in the two cases.

Therefore 
\begin{equation}
\Lambda _{n}\left( x\right) =\sum_{\left \{ 
\begin{array}{l}
k=\left \lceil na\right \rceil \\ 
\left \vert \frac{k}{n}-x\right \vert \leq \frac{1}{n^{\alpha }}%
\end{array}%
\right. }^{\left \lfloor nb\right \rfloor }\Phi \left( nx-k\right) \gamma
+\sum_{\left \{ 
\begin{array}{l}
k=\left \lceil na\right \rceil \\ 
\left \vert \frac{k}{n}-x\right \vert >\frac{1}{n^{\alpha }}%
\end{array}%
\right. }^{\left \lfloor nb\right \rfloor }\Phi \left( nx-k\right) \gamma . 
\tag{71}  \label{71}
\end{equation}%
Hence 
\begin{equation}
\left \Vert \Lambda _{n}\left( x\right) \right \Vert \leq \sum_{\left \{ 
\begin{array}{l}
k=\left \lceil na\right \rceil \\ 
\left \vert \frac{k}{n}-x\right \vert \leq \frac{1}{n^{\alpha }}%
\end{array}%
\right. }^{\left \lfloor nb\right \rfloor }\Phi \left( nx-k\right) \left(
\omega _{1}\left( f^{\left( N\right) },\frac{1}{n^{\alpha }}\right) \frac{1}{%
N!n^{\alpha N}}\right) +  \tag{72}  \label{72}
\end{equation}%
\begin{equation*}
\left( \sum_{\left \{ 
\begin{array}{l}
k=\left \lceil na\right \rceil \\ 
\left \vert \frac{k}{n}-x\right \vert >\frac{1}{n^{\alpha }}%
\end{array}%
\right. }^{\left \lfloor nb\right \rfloor }\Phi \left( nx-k\right) \right)
2\left \Vert f^{\left( N\right) }\right \Vert _{\infty }\frac{\left(
b-a\right) ^{N}}{N!}\overset{\text{(\ref{19})}}{\leq }
\end{equation*}%
\begin{equation*}
\omega _{1}\left( f^{\left( N\right) },\frac{1}{n^{\alpha }}\right) \frac{1}{%
N!n^{\alpha N}}+\frac{1}{4m\left( n^{1-\alpha }-2\right) ^{2m}}2\left \Vert
f^{\left( N\right) }\right \Vert _{\infty }\frac{\left( b-a\right) ^{N}}{N!}=
\end{equation*}%
\begin{equation*}
\omega _{1}\left( f^{\left( N\right) },\frac{1}{n^{\alpha }}\right) \frac{1}{%
N!n^{\alpha N}}+\frac{\left \Vert f^{\left( N\right) }\right \Vert _{\infty
}\left( b-a\right) ^{N}}{N!2m\left( n^{1-\alpha }-2\right) ^{2m}}.
\end{equation*}%
That is 
\begin{equation}
\left \Vert \Lambda _{n}\left( x\right) \right \Vert \leq \frac{\omega
_{1}\left( f^{\left( N\right) },\frac{1}{n^{\alpha }}\right) }{N!n^{\alpha N}%
}+\frac{\left \Vert f^{\left( N\right) }\right \Vert _{\infty }\left(
b-a\right) ^{N}}{N!2m\left( n^{1-\alpha }-2\right) ^{2m}},  \tag{73}
\label{73}
\end{equation}%
$\forall $ $x\in \left[ a,b\right] .$

We further see that 
\begin{equation}
A_{n}^{\ast }\left( \left( \cdot -x\right) ^{j}\right) =\sum_{k=\left \lceil
na\right \rceil }^{\left \lfloor nb\right \rfloor }\Phi \left( nx-k\right)
\left( \frac{k}{n}-x\right) ^{j},  \tag{74}  \label{74}
\end{equation}%
where $A_{n}^{\ast }$ is defined similarly for real valued functions.

Therefore 
\begin{equation*}
\left \vert A_{n}^{\ast }\left( \left( \cdot -x\right) ^{j}\right) \right
\vert \leq \sum_{k=\left \lceil na\right \rceil }^{\left \lfloor nb\right
\rfloor }\Phi \left( nx-k\right) \left \vert \frac{k}{n}-x\right \vert ^{j}=
\end{equation*}%
\begin{equation*}
\sum_{\left \{ 
\begin{array}{l}
k=\left \lceil na\right \rceil \\ 
\left \vert \frac{k}{n}-x\right \vert \leq \frac{1}{n^{\alpha }}%
\end{array}%
\right. }^{\left \lfloor nb\right \rfloor }\Phi \left( nx-k\right) \left
\vert \frac{k}{n}-x\right \vert ^{j}+\sum_{\left \{ 
\begin{array}{l}
k=\left \lceil na\right \rceil \\ 
\left \vert \frac{k}{n}-x\right \vert >\frac{1}{n^{\alpha }}%
\end{array}%
\right. }^{\left \lfloor nb\right \rfloor }\Phi \left( nx-k\right) \left
\vert \frac{k}{n}-x\right \vert ^{j}\leq
\end{equation*}%
\begin{equation}
\frac{1}{n^{\alpha j}}+\left( b-a\right) ^{j}\frac{1}{4m\left( n^{1-\alpha
}-2\right) ^{2m}}.  \tag{75}  \label{75}
\end{equation}%
That is 
\begin{equation}
\left \vert A_{n}^{\ast }\left( \left( \cdot -x\right) ^{j}\right) \right
\vert \leq \frac{1}{n^{\alpha j}}+\left( b-a\right) ^{j}\frac{1}{4m\left(
n^{1-\alpha }-2\right) ^{2m}},  \tag{76}  \label{76}
\end{equation}%
for $j=1,...,N.$

Putting things together we have proved 
\begin{equation}
\left \Vert A_{n}^{\ast }\left( f,x\right) -f\left( x\right) \left(
\sum_{k=\left \lceil na\right \rceil }^{\left \lfloor nb\right \rfloor }\Phi
\left( nx-k\right) \right) \right \Vert \leq \sum_{j=1}^{N}\frac{\left \Vert
f^{\left( j\right) }\left( x\right) \right \Vert }{j!}  \tag{77}  \label{77}
\end{equation}%
\begin{equation*}
\left[ \frac{1}{n^{\alpha j}}+\frac{\left( b-a\right) ^{j}}{4m\left(
n^{1-\alpha }-2\right) ^{2m}}\right] +\left[ \omega _{1}\left( f^{\left(
N\right) },\frac{1}{n^{\alpha }}\right) \frac{1}{n^{\alpha N}N!}+\frac{\left
\Vert f^{\left( N\right) }\right \Vert _{\infty }\left( b-a\right) ^{N}}{%
N!2m\left( n^{1-\alpha }-2\right) ^{2m}}\right] ,
\end{equation*}%
that is establishing the theorem.
\end{proof}

All integrals from now on are of Bochner type \cite{18}.

We need

\begin{definition}
\label{d15}(\cite{12}) Let $\left[ a,b\right] \subset \mathbb{R}$, $X$ be a
Banach space, $\alpha >0$; $m=\left \lceil \alpha \right \rceil \in \mathbb{N%
}$, ($\left \lceil \cdot \right \rceil $ is the ceiling of the number), $f:%
\left[ a,b\right] \rightarrow X$. We assume that $f^{\left( m\right) }\in
L_{1}\left( \left[ a,b\right] ,X\right) $. We call the Caputo-Bochner left
fractional derivative of order $\alpha $: 
\begin{equation}
\left( D_{\ast a}^{\alpha }f\right) \left( x\right) :=\frac{1}{\Gamma \left(
m-\alpha \right) }\int_{a}^{x}\left( x-t\right) ^{m-\alpha -1}f^{\left(
m\right) }\left( t\right) dt,\text{ \ }\forall \text{ }x\in \left[ a,b\right]
.  \tag{78}  \label{78}
\end{equation}%
If $\alpha \in \mathbb{N}$, we set $D_{\ast a}^{\alpha }f:=f^{\left(
m\right) }$ the ordinary $X$-valued derivative (defined similar to numerical
one, see \cite{20}, p. 83), and also set $D_{\ast a}^{0}f:=f.$
\end{definition}

By \cite{12}, $\left( D_{\ast a}^{\alpha }f\right) \left( x\right) $ exists
almost everywhere in $x\in \left[ a,b\right] $ and $D_{\ast a}^{\alpha }f\in
L_{1}\left( \left[ a,b\right] ,X\right) $.

If $\left \| f^{\left( m\right) }\right \| _{L_{\infty }\left( \left[ a,b%
\right] ,X\right) }<\infty $, then by \cite{12}, $D_{\ast a}^{\alpha }f\in
C\left( \left[ a,b\right] ,X\right) ,$ hence $\left \| D_{\ast a}^{\alpha
}f\right \| \in C\left( \left[ a,b\right] \right) .$

We mention

\begin{lemma}
\label{l16}(\cite{11}) Let $\alpha >0$, $\alpha \notin \mathbb{N}$, $%
m=\left
\lceil \alpha \right \rceil $, $f\in C^{m-1}\left( \left[ a,b\right]
,X\right) $ and $f^{\left( m\right) }\in L_{\infty }\left( \left[ a,b\right]
,X\right) $. Then $D_{\ast a}^{\alpha }f\left( a\right) =0$.
\end{lemma}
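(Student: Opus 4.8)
The plan is to estimate the norm of the Caputo--Bochner fractional derivative directly and let $x\to a$, exploiting that the only potential difficulty, the singular kernel $(x-t)^{m-\alpha -1}$, is in fact integrable under the standing hypotheses. The key preliminary observation is that since $\alpha \notin \mathbb{N}$ and $m=\left \lceil \alpha \right \rceil $, we have $m-1<\alpha <m$, hence $0<m-\alpha <1$ and the exponent of the kernel satisfies $-1<m-\alpha -1<0$. This guarantees that the singularity of $(x-t)^{m-\alpha -1}$ at $t=x$ is integrable, so the Bochner integral defining $D_{\ast a}^{\alpha }f$ is well defined; moreover, as already recorded after Definition~\ref{d15}, the hypothesis $f^{(m)}\in L_{\infty }\left( \left[ a,b\right] ,X\right) $ ensures $D_{\ast a}^{\alpha }f\in C\left( \left[ a,b\right] ,X\right) $, so evaluation at the endpoint $a$ is meaningful.

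First I would apply the standard Bochner integral inequality $\left \Vert \int g\right \Vert \leq \int \left \Vert g\right \Vert $ together with the bound $\left \Vert f^{(m)}(t)\right \Vert \leq \left \Vert f^{(m)}\right \Vert _{L_{\infty }\left( \left[ a,b\right] ,X\right) }$ holding for almost every $t$, to obtain for each $x\in \left[ a,b\right] $ the estimate
\[
\left \Vert \left( D_{\ast a}^{\alpha }f\right) \left( x\right) \right \Vert \leq \frac{\left \Vert f^{(m)}\right \Vert _{L_{\infty }\left( \left[ a,b\right] ,X\right) }}{\Gamma \left( m-\alpha \right) }\int_{a}^{x}\left( x-t\right) ^{m-\alpha -1}dt.
\]
Next I would evaluate the elementary real integral, using $m-\alpha >0$:
\[
\int_{a}^{x}\left( x-t\right) ^{m-\alpha -1}dt=\frac{\left( x-a\right) ^{m-\alpha }}{m-\alpha },
\]
so that, using $\Gamma \left( m-\alpha +1\right) =\left( m-\alpha \right) \Gamma \left( m-\alpha \right) $, we arrive at
\[
\left \Vert \left( D_{\ast a}^{\alpha }f\right) \left( x\right) \right \Vert \leq \frac{\left \Vert f^{(m)}\right \Vert _{L_{\infty }\left( \left[ a,b\right] ,X\right) }\left( x-a\right) ^{m-\alpha }}{\Gamma \left( m-\alpha +1\right) }.
\]

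Finally I would specialize to $x=a$. Since $m-\alpha >0$, the factor $\left( x-a\right) ^{m-\alpha }$ vanishes at $x=a$, and the displayed bound forces $\left \Vert \left( D_{\ast a}^{\alpha }f\right) \left( a\right) \right \Vert \leq 0$, whence $\left( D_{\ast a}^{\alpha }f\right) \left( a\right) =0$, as claimed. I do not expect any genuine obstacle: the only point requiring care is the integrability of the kernel near $t=x$, which is exactly what the condition $\alpha \notin \mathbb{N}$ (giving $m-\alpha -1>-1$) secures, and the uniform bound on $f^{(m)}$ is what converts the singular integral into the explicit vanishing power of $x-a$.
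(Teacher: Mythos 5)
Your proof is correct: the estimate $\left\Vert \left( D_{\ast a}^{\alpha }f\right) \left( x\right) \right\Vert \leq \frac{\left\Vert f^{\left( m\right) }\right\Vert _{L_{\infty }\left( \left[ a,b\right] ,X\right) }}{\Gamma \left( m-\alpha +1\right) }\left( x-a\right) ^{m-\alpha }$, together with the continuity of $D_{\ast a}^{\alpha }f$ under the standing hypotheses (which makes evaluation at the endpoint meaningful), immediately forces $D_{\ast a}^{\alpha }f\left( a\right) =0$. The paper itself only cites this lemma from \cite{11} without reproducing a proof, but the bound you derive is exactly inequality (\ref{88}) recorded in Remark \ref{r29} (quoted from the same source, and re-derived for the right-sided derivative in (\ref{133})--(\ref{138})), so your argument is precisely the standard one underlying the paper's machinery.
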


We mention

\begin{definition}
\label{d17}(\cite{10}) Let $\left[ a,b\right] \subset \mathbb{R}$, $X$ be a
Banach space, $\alpha >0$, $m:=\left \lceil \alpha \right \rceil $. We
assume that $f^{\left( m\right) }\in L_{1}\left( \left[ a,b\right] ,X\right) 
$, where $f:\left[ a,b\right] \rightarrow X$. We call the Caputo-Bochner
right fractional derivative of order $\alpha $: 
\begin{equation}
\left( D_{b-}^{\alpha }f\right) \left( x\right) :=\frac{\left( -1\right) ^{m}%
}{\Gamma \left( m-\alpha \right) }\int_{x}^{b}\left( z-x\right) ^{m-\alpha
-1}f^{\left( m\right) }\left( z\right) dz,\text{ \ }\forall \text{ }x\in %
\left[ a,b\right] .  \tag{79}  \label{79}
\end{equation}%
We observe that $\left( D_{b-}^{m}f\right) \left( x\right) =\left( -1\right)
^{m}f^{\left( m\right) }\left( x\right) ,$ for $m\in \mathbb{N}$, and $%
\left( D_{b-}^{0}f\right) \left( x\right) =f\left( x\right) .$
\end{definition}

By \cite{10}, $\left( D_{b-}^{\alpha }f\right) \left( x\right) $ exists
almost everywhere on $\left[ a,b\right] $ and $\left( D_{b-}^{\alpha
}f\right) \in L_{1}\left( \left[ a,b\right] ,X\right) $.

If $\left \| f^{\left( m\right) }\right \| _{L_{\infty }\left( \left[ a,b%
\right] ,X\right) }<\infty $, and $\alpha \notin \mathbb{N},$ by \cite{10}, $%
D_{b-}^{\alpha }f\in C\left( \left[ a,b\right] ,X\right) ,$ hence $\left \|
D_{b-}^{\alpha }f\right \| \in C\left( \left[ a,b\right] \right) .$

We need

\begin{lemma}
\label{l18}(\cite{11}) Let $f\in C^{m-1}\left( \left[ a,b\right] ,X\right) $%
, $f^{\left( m\right) }\in L_{\infty }\left( \left[ a,b\right] ,X\right) $, $%
m=\left \lceil \alpha \right \rceil $, $\alpha >0$, $\alpha \notin \mathbb{N}
$. Then $D_{b-}^{\alpha }f\left( b\right) =0$.
\end{lemma}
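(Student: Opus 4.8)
The plan is to bound the Banach-space norm of $\left( D_{b-}^{\alpha }f\right) \left( x\right) $ directly from Definition \ref{d17}, show it tends to $0$ as $x\rightarrow b^{-}$, and then invoke the continuity of $D_{b-}^{\alpha }f$ on $\left[ a,b\right] $ (noted right after Definition \ref{d17}) to identify the value at $b$ with this limit.

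First I would record that since $\alpha \notin \mathbb{N}$ and $m=\left \lceil \alpha \right \rceil $, we have $m-1<\alpha <m$, hence $0<m-\alpha <1$ and the exponent $m-\alpha -1\in \left( -1,0\right) $. Thus the kernel $\left( z-x\right) ^{m-\alpha -1}$ is singular at $z=x$ but integrable there. Writing $M:=\left \Vert f^{\left( m\right) }\right \Vert _{L_{\infty }\left( \left[ a,b\right] ,X\right) }<\infty $, the standard norm estimate for Bochner integrals gives
\begin{equation*}
\left \Vert \left( D_{b-}^{\alpha }f\right) \left( x\right) \right \Vert \leq \frac{1}{\Gamma \left( m-\alpha \right) }\int_{x}^{b}\left( z-x\right) ^{m-\alpha -1}\left \Vert f^{\left( m\right) }\left( z\right) \right \Vert dz\leq \frac{M}{\Gamma \left( m-\alpha \right) }\int_{x}^{b}\left( z-x\right) ^{m-\alpha -1}dz.
\end{equation*}

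Next I would evaluate the elementary scalar integral, $\int_{x}^{b}\left( z-x\right) ^{m-\alpha -1}dz=\frac{\left( b-x\right) ^{m-\alpha }}{m-\alpha }$, which is finite precisely because $m-\alpha >0$. Combining and using $\left( m-\alpha \right) \Gamma \left( m-\alpha \right) =\Gamma \left( m-\alpha +1\right) $ yields
\begin{equation*}
\left \Vert \left( D_{b-}^{\alpha }f\right) \left( x\right) \right \Vert \leq \frac{M\left( b-x\right) ^{m-\alpha }}{\Gamma \left( m-\alpha +1\right) },\quad \forall \,x\in \left[ a,b\right] .
\end{equation*}
Since $m-\alpha >0$, the right-hand side tends to $0$ as $x\rightarrow b^{-}$, so $\lim_{x\rightarrow b^{-}}\left( D_{b-}^{\alpha }f\right) \left( x\right) =0$ in $X$.

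Finally, under the present hypotheses $f\in C^{m-1}\left( \left[ a,b\right] ,X\right) $ with $f^{\left( m\right) }\in L_{\infty }\left( \left[ a,b\right] ,X\right) $ and $\alpha \notin \mathbb{N}$, the remark following Definition \ref{d17} guarantees $D_{b-}^{\alpha }f\in C\left( \left[ a,b\right] ,X\right) $. Continuity at $b$ then forces $\left( D_{b-}^{\alpha }f\right) \left( b\right) =\lim_{x\rightarrow b^{-}}\left( D_{b-}^{\alpha }f\right) \left( x\right) =0$, as claimed. The only point requiring care is the singular kernel: everything hinges on the bound $0<m-\alpha <1$, which both makes the kernel integrable and makes the factor $\left( b-x\right) ^{m-\alpha }$ vanish as the interval of integration collapses to a point, while the boundedness of $f^{\left( m\right) }$ controls the integrand uniformly. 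This mirrors the left-sided argument behind Lemma \ref{l16}.
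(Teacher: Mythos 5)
Your proof is correct. Note first that the paper itself offers no proof of this lemma: it is stated as a quotation from reference \cite{11}, so there is no internal argument to compare against. Your argument is the natural one, and the key estimate you derive,
\begin{equation*}
\left\Vert \left( D_{b-}^{\alpha }f\right) \left( x\right) \right\Vert \leq
\frac{\left\Vert f^{\left( m\right) }\right\Vert _{L_{\infty }\left( \left[
a,b\right] ,X\right) }\left( b-x\right) ^{m-\alpha }}{\Gamma \left( m-\alpha
+1\right) },\ \ \forall \ x\in \left[ a,b\right] ,
\end{equation*}
is precisely the bound this paper uses elsewhere for these operators: compare (\ref{88}) in Remark \ref{r29} (the left-sided version) and (\ref{133})--(\ref{135}) in the proof of Theorem \ref{t30.} (the right-sided version with $x$ in place of $b$). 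Two minor observations. First, your closing step can be short-circuited: since your bound holds for every $x\in \left[ a,b\right] $, taking $x=b$ gives $\left\Vert \left( D_{b-}^{\alpha }f\right) \left( b\right) \right\Vert \leq 0$ directly (equivalently, the defining integral at $x=b$ runs over the degenerate interval $\left[ b,b\right] $ and vanishes); still, the limit-plus-continuity route you chose is the more careful one, since a priori $D_{b-}^{\alpha }f$ is only defined almost everywhere and its value at the endpoint is most naturally understood as that of its continuous version, which is exactly what the remark following Definition \ref{d17} licenses under your hypotheses. Second, your pointwise bound $\left\Vert f^{\left( m\right) }\left( z\right) \right\Vert \leq M$ is only needed for almost every $z$, which is exactly what the $L_{\infty }$ hypothesis supplies, so nothing is lost in the Bochner-norm estimate.
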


We mention the left fractional Taylor formula

\begin{theorem}
\label{t19.}(\cite{12}) Let $m\in \mathbb{N}$ and $f\in C^{m}\left( \left[
a,b\right] ,X\right) ,$ where $\left[ a,b\right] \subset \mathbb{R}$ and $X$
is a Banach space, and let $\alpha >0:m=\left \lceil \alpha \right \rceil $.
Then 
\begin{equation}
f\left( x\right) =\sum_{i=0}^{m-1}\frac{\left( x-a\right) ^{i}}{i!}f^{\left(
i\right) }\left( a\right) +\frac{1}{\Gamma \left( \alpha \right) }%
\int_{a}^{x}\left( x-z\right) ^{\alpha -1}\left( D_{\ast a}^{\alpha
}f\right) \left( z\right) dz,  \tag{80}  \label{80}
\end{equation}%
$\forall $ $x\in \left[ a,b\right] .$
\end{theorem}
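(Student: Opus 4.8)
The plan is to reduce the fractional Taylor formula \eqref{80} to the ordinary $X$-valued Taylor formula of Theorem \ref{t13.} by showing that the fractional remainder integral collapses to the classical Bochner integral remainder. First I would record the ordinary Taylor expansion obtained from Theorem \ref{t13.} with $N=m$, $y=a$: splitting its remainder and cancelling the two $f^{(m)}(a)$ contributions (the $i=m$ term against $\frac{f^{(m)}(a)}{(m-1)!}\int_a^x(x-t)^{m-1}dt=\frac{(x-a)^m}{m!}f^{(m)}(a)$) yields
\[
f(x)=\sum_{i=0}^{m-1}\frac{(x-a)^{i}}{i!}f^{(i)}(a)+\frac{1}{(m-1)!}\int_{a}^{x}(x-t)^{m-1}f^{(m)}(t)\,dt .
\]
Since the polynomial part already matches \eqref{80}, it remains only to prove that the two remainder terms coincide, i.e. that
\[
\frac{1}{\Gamma(\alpha)}\int_{a}^{x}(x-z)^{\alpha-1}\bigl(D_{\ast a}^{\alpha}f\bigr)(z)\,dz=\frac{1}{(m-1)!}\int_{a}^{x}(x-t)^{m-1}f^{(m)}(t)\,dt .
\]

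For the genuinely fractional case $\alpha\notin\mathbb{N}$ (so $0<m-\alpha<1$), I would insert the definition of $D_{\ast a}^{\alpha}f$ from Definition \ref{d15} into the left-hand side, producing the iterated Bochner integral
\[
\frac{1}{\Gamma(\alpha)\Gamma(m-\alpha)}\int_{a}^{x}(x-z)^{\alpha-1}\!\left(\int_{a}^{z}(z-t)^{m-\alpha-1}f^{(m)}(t)\,dt\right)dz ,
\]
then apply the Fubini--Tonelli theorem for Bochner integrals over the triangle $\{a\le t\le z\le x\}$ to interchange the $z$- and $t$-integrations, rewriting this as
\[
\frac{1}{\Gamma(\alpha)\Gamma(m-\alpha)}\int_{a}^{x}\!\left(\int_{t}^{x}(x-z)^{\alpha-1}(z-t)^{m-\alpha-1}\,dz\right)f^{(m)}(t)\,dt .
\]
The inner scalar integral is evaluated by the substitution $z=t+(x-t)u$, turning it into $(x-t)^{m-1}B(m-\alpha,\alpha)$; using $B(m-\alpha,\alpha)=\Gamma(m-\alpha)\Gamma(\alpha)/\Gamma(m)$ and $\Gamma(m)=(m-1)!$ the Gamma factors cancel, leaving exactly $\frac{1}{(m-1)!}\int_{a}^{x}(x-t)^{m-1}f^{(m)}(t)\,dt$, as required. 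The integer case $\alpha=m$ is immediate, since then $D_{\ast a}^{\alpha}f=f^{(m)}$ by Definition \ref{d15} and \eqref{80} is literally the ordinary Taylor formula displayed above.

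The main obstacle is the rigorous justification of the Fubini interchange for a Banach-valued integrand in the presence of the two weak singularities $(x-z)^{\alpha-1}$ and $(z-t)^{m-\alpha-1}$. I would dispatch this by noting that $f\in C^{m}\left(\left[a,b\right],X\right)$ forces $f^{(m)}$ to be continuous, hence $\lVert f^{(m)}(t)\rVert\le M$ on $[a,b]$ for some finite $M$, while the scalar kernel $(x-z)^{\alpha-1}(z-t)^{m-\alpha-1}$ is nonnegative and, because $\alpha-1>-1$ and $m-\alpha-1>-1$, absolutely integrable over the triangle (its total mass being the finite Beta-type integral computed above). Thus the double norm-integral is bounded by $M$ times a finite constant, so the Bochner form of Fubini's theorem legitimizes the interchange; the surviving Beta-integral evaluation is then a routine scalar computation.
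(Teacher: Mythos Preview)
Your proof is correct. The paper itself does not prove this theorem; it is quoted from \cite{12} (the author's own earlier work on Banach-space-valued fractional calculus) as a known result and used as a tool in later arguments. Your argument---reducing to the ordinary Taylor formula of Theorem~\ref{t13.}, inserting Definition~\ref{d15} for $D_{\ast a}^{\alpha}f$, interchanging the iterated Bochner integral via Fubini over the triangle $\{a\le t\le z\le x\}$, and collapsing the inner scalar integral through the Beta identity $B(m-\alpha,\alpha)=\Gamma(m-\alpha)\Gamma(\alpha)/\Gamma(m)$---is the standard derivation of the left Caputo fractional Taylor formula, and your Fubini justification via the bound $\lVert f^{(m)}\rVert\le M$ and the integrability of the scalar kernel (exponents $\alpha-1>-1$, $m-\alpha-1>-1$) is exactly what is needed in the Banach-valued setting.
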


We also mention the right fractional Taylor formula

\begin{theorem}
\label{t20}(\cite{10}) Let $\left[ a,b\right] \subset \mathbb{R}$, $X$ be a
Banach space, $\alpha >0$, $m=\left \lceil \alpha \right \rceil $, $f\in
C^{m}\left( \left[ a,b\right] ,X\right) $. Then 
\begin{equation}
f\left( x\right) =\sum_{i=0}^{m-1}\frac{\left( x-b\right) ^{i}}{i!}f^{\left(
i\right) }\left( b\right) +\frac{1}{\Gamma \left( \alpha \right) }%
\int_{x}^{b}\left( z-x\right) ^{\alpha -1}\left( D_{b-}^{\alpha }f\right)
\left( z\right) dz,  \tag{81}  \label{81}
\end{equation}%
$\forall $ $x\in \left[ a,b\right] .$
\end{theorem}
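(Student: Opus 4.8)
The plan is to reduce the right fractional Taylor formula (\ref{81}) to the ordinary integer-order formula (\ref{55}) by unfolding the definition (\ref{79}) of the right Caputo--Bochner derivative and performing a Fubini interchange. First I would invoke (\ref{55}) with $y=b$ and $N=m$; after absorbing the $f^{(m)}(b)$ piece of the remainder into the $i=m$ term of the sum, this collapses to the classical Bochner Taylor expansion about $b$,
\[
f\left( x\right) =\sum_{i=0}^{m-1}\frac{\left( x-b\right) ^{i}}{i!}f^{\left( i\right) }\left( b\right) +\frac{1}{\left( m-1\right) !}\int_{b}^{x}\left( x-t\right) ^{m-1}f^{\left( m\right) }\left( t\right) dt.
\]
Since $x\leq b$, I would rewrite this remainder with $\int_{b}^{x}=-\int_{x}^{b}$ and $\left( x-t\right) ^{m-1}=\left( -1\right) ^{m-1}\left( t-x\right) ^{m-1}$, putting it in the form
\[
R:=\frac{\left( -1\right) ^{m}}{\left( m-1\right) !}\int_{x}^{b}\left( t-x\right) ^{m-1}f^{\left( m\right) }\left( t\right) dt.
\]
The entire proof then reduces to showing that the fractional remainder in (\ref{81}) coincides with $R$. (If $\alpha \in \mathbb{N}$ the formula is exactly the integer Taylor formula and there is nothing further to prove, so I assume $0<m-\alpha<1$ below.)

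To establish this I would substitute (\ref{79}) for $\left( D_{b-}^{\alpha }f\right) \left( z\right) $ into the fractional remainder, obtaining the iterated integral
\[
\frac{1}{\Gamma \left( \alpha \right) }\int_{x}^{b}\left( z-x\right) ^{\alpha -1}\left( D_{b-}^{\alpha }f\right) \left( z\right) dz=\frac{\left( -1\right) ^{m}}{\Gamma \left( \alpha \right) \Gamma \left( m-\alpha \right) }\int_{x}^{b}\int_{z}^{b}\left( z-x\right) ^{\alpha -1}\left( w-z\right) ^{m-\alpha -1}f^{\left( m\right) }\left( w\right) dw\,dz.
\]
The key step is a Fubini interchange over the triangle $\left \{ \left( z,w\right) :x\leq z\leq w\leq b\right \} $, after which the inner $z$-integral decouples and evaluates, via the substitution $z=x+\left( w-x\right) u$ and the Beta integral, to
\[
\int_{x}^{w}\left( z-x\right) ^{\alpha -1}\left( w-z\right) ^{m-\alpha -1}dz=\left( w-x\right) ^{m-1}B\left( \alpha ,m-\alpha \right) ,\qquad B\left( \alpha ,m-\alpha \right) =\frac{\Gamma \left( \alpha \right) \Gamma \left( m-\alpha \right) }{\left( m-1\right) !}.
\]
Plugging this back cancels the two Gamma factors against $B\left( \alpha ,m-\alpha \right) $ and reproduces exactly $R$, closing the argument.

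The main obstacle is the rigorous justification of the Fubini interchange for the $X$-valued Bochner integrand. Here I would appeal to the Bochner--Fubini theorem: because $f\in C^{m}\left( \left[ a,b\right] ,X\right) $, the derivative $f^{\left( m\right) }$ is continuous, hence bounded, say $\left \Vert f^{\left( m\right) }\right \Vert \leq M$ on $\left[ a,b\right] $, and the scalar kernel $\left( z-x\right) ^{\alpha -1}\left( w-z\right) ^{m-\alpha -1}$ is nonnegative with integrable singularities (the exponents $\alpha -1>-1$ and $m-\alpha -1>-1$ follow from $\alpha >0$ and $m-\alpha >0$). Consequently the map $\left( z,w\right) \mapsto \left( z-x\right) ^{\alpha -1}\left( w-z\right) ^{m-\alpha -1}f^{\left( m\right) }\left( w\right) $ is strongly measurable and, using the Beta evaluation above,
\[
\int_{x}^{b}\int_{x}^{w}\left( z-x\right) ^{\alpha -1}\left( w-z\right) ^{m-\alpha -1}\bigl \Vert f^{\left( m\right) }\left( w\right) \bigr \Vert \,dz\,dw\leq M\,B\left( \alpha ,m-\alpha \right) \int_{x}^{b}\left( w-x\right) ^{m-1}dw<\infty ,
\]
so the integrand is Bochner integrable on the triangle and the order of integration may be exchanged. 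Everything remaining is the elementary Beta-function computation and careful bookkeeping of the sign $\left( -1\right) ^{m}$ and the Gamma factors, which I have already matched to $R$ above.
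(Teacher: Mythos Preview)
Your argument is correct: reducing to the integer Taylor formula (\ref{55}), substituting the definition (\ref{79}), applying Bochner--Fubini on the triangle $\{x\le z\le w\le b\}$, and collapsing the inner integral via the Beta identity $B(\alpha,m-\alpha)=\Gamma(\alpha)\Gamma(m-\alpha)/(m-1)!$ is exactly the standard route to this formula, and your justification of the interchange (continuity and boundedness of $f^{(m)}$, integrable scalar kernel with exponents $\alpha-1>-1$ and $m-\alpha-1>-1$) is adequate. Note, however, that the paper does not itself prove Theorem~\ref{t20}: it is quoted verbatim from \cite{10}, so there is no in-paper proof to compare against. Your write-up is precisely the argument one would expect to find in \cite{10}, and nothing is missing.
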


\begin{convention}
\label{c21}We assume that 
\begin{equation}
D_{\ast x_{0}}^{\alpha }f\left( x\right) =0\text{, for }x<x_{0},  \tag{82}
\label{82}
\end{equation}%
and 
\begin{equation}
D_{x_{0}-}^{\alpha }f\left( x\right) =0\text{, for }x>x_{0},  \tag{83}
\label{83}
\end{equation}%
for all $x,x_{0}\in \left[ a,b\right] .$
\end{convention}

We mention

\begin{proposition}
\label{p22.}(\cite{11}) Let $f\in C^{n}\left( \left[ a,b\right] ,X\right) $, 
$n=\left \lceil \nu \right \rceil $, $\nu >0$. Then $D_{\ast a}^{\nu
}f\left( x\right) $ is continuous in $x\in \left[ a,b\right] $.
\end{proposition}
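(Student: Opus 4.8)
The plan is to reduce at once to the genuinely fractional case and then to remove the dependence on $x$ from the singular kernel by a normalizing substitution. If $\nu \in \mathbb{N}$, then $n = \left\lceil \nu \right\rceil = \nu$ and, by Definition \ref{d15}, $D_{\ast a}^{\nu} f = f^{(n)}$, which is continuous precisely because $f \in C^{n}\left([a,b],X\right)$; so there is nothing to prove. Assume henceforth $\nu \notin \mathbb{N}$, write $\beta := n - \nu \in (0,1)$, and recall
\[
\left( D_{\ast a}^{\nu} f\right)(x) = \frac{1}{\Gamma(\beta)} \int_{a}^{x} (x-t)^{\beta-1} f^{(n)}(t)\, dt .
\]
Since $f \in C^{n}\left([a,b],X\right)$, the map $f^{(n)}$ is continuous on the compact $[a,b]$, hence bounded, say $\left\| f^{(n)}\right\|_{\infty} =: M < \infty$, and uniformly continuous, so $\omega_{1}\left( f^{(n)}, \delta\right) \to 0$ as $\delta \to 0^{+}$.

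The difficulty is that $x$ appears both in the upper limit and in the weakly singular kernel $(x-t)^{\beta-1}$; subtracting the two integrals directly leaves a term whose integrand blows up near the endpoint. To circumvent this I would substitute $t = a + (x-a)u$, $u \in [0,1]$, which turns the formula into
\[
\left( D_{\ast a}^{\nu} f\right)(x) = \frac{(x-a)^{\beta}}{\Gamma(\beta)} \int_{0}^{1} (1-u)^{\beta-1} f^{(n)}\!\left( a + (x-a)u\right) du ,
\]
where now the singular factor $(1-u)^{\beta-1}$ is free of $x$ and integrable on $[0,1]$ with $\int_{0}^{1} (1-u)^{\beta-1}\, du = 1/\beta$.

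Next, for $x, y \in [a,b]$ I would subtract the two normalized expressions under a single integral sign and split the bracket as
\[
(x-a)^{\beta} f^{(n)}\!\left( a + (x-a)u\right) - (y-a)^{\beta} f^{(n)}\!\left( a + (y-a)u\right)
\]
\[
= (x-a)^{\beta}\bigl[ f^{(n)}\!\left( a + (x-a)u\right) - f^{(n)}\!\left( a + (y-a)u\right)\bigr] + \bigl[(x-a)^{\beta} - (y-a)^{\beta}\bigr] f^{(n)}\!\left( a + (y-a)u\right).
\]
For the first summand, note $\left| (a+(x-a)u) - (a+(y-a)u)\right| = |x-y|\,u \le |x-y|$, so its norm is at most $(b-a)^{\beta}\,\omega_{1}\!\left( f^{(n)}, |x-y|\right)$; for the second, $\left\| f^{(n)}\!\left( a+(y-a)u\right)\right\| \le M$ while the scalar factor $\left| (x-a)^{\beta} - (y-a)^{\beta}\right|$ is controlled by continuity of $t \mapsto (t-a)^{\beta}$. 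Integrating against $(1-u)^{\beta-1}$ and using $\int_{0}^{1}(1-u)^{\beta-1}\,du = 1/\beta$ yields
\[
\left\| \left( D_{\ast a}^{\nu} f\right)(x) - \left( D_{\ast a}^{\nu} f\right)(y)\right\| \le \frac{1}{\beta\,\Gamma(\beta)}\Bigl[ (b-a)^{\beta}\,\omega_{1}\!\left( f^{(n)}, |x-y|\right) + M\,\left| (x-a)^{\beta} - (y-a)^{\beta}\right|\Bigr].
\]
Letting $y \to x$, the first term vanishes because $f^{(n)}$ is uniformly continuous and the second because $t \mapsto (t-a)^{\beta}$ is continuous; this gives continuity of $D_{\ast a}^{\nu} f$ at every $x \in [a,b]$, including $x = a$, where the same bound shows the value equals $0$, consistent with Lemma \ref{l16}.

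The main obstacle is exactly the one the substitution is designed to defeat: controlling the difference of the two integrals when the singularity $(x-t)^{\beta-1}$ itself moves with $x$. Once the kernel is fixed by the change of variables, the estimate collapses to uniform continuity of $f^{(n)}$ together with continuity of a single power function, both of which are elementary.
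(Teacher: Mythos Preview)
Your argument is correct. The paper itself does not supply a proof of this proposition; it simply quotes the result from \cite{11}, so there is no in-paper proof to compare against. Your substitution $t = a + (x-a)u$ is exactly the right device: it freezes the weakly singular kernel as $(1-u)^{\beta-1}$, which is integrable on $[0,1]$ independently of $x$, and reduces the question to continuity of the scalar prefactor $(x-a)^{\beta}$ together with uniform continuity of $f^{(n)}$. The algebraic split you perform is clean, the Bochner-integral estimate $\left\Vert \int g\right\Vert \le \int \left\Vert g\right\Vert$ applies without issue, and the endpoint $x=a$ is handled automatically since both terms in your final bound vanish there. One small remark: your proof actually yields uniform continuity of $D_{\ast a}^{\nu}f$ on $[a,b]$, not just continuity, because $\omega_{1}(f^{(n)},\cdot)$ and $t\mapsto (t-a)^{\beta}$ are uniformly continuous on the compact interval; this is a slight strengthening of what the proposition asserts.
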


\begin{proposition}
\label{p23.}(\cite{11}) Let $f\in C^{m}\left( \left[ a,b\right] ,X\right) $, 
$m=\left \lceil \alpha \right \rceil $, $\alpha >0$. Then $D_{b-}^{\nu
}f\left( x\right) $ is continuous in $x\in \left[ a,b\right] $.
\end{proposition}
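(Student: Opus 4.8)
The plan is to prove continuity of the right Caputo--Bochner derivative $D_{b-}^{\alpha}f$ on $[a,b]$ (reading the exponent $\nu$ in the statement as $\alpha$, consistent with $m=\left\lceil \alpha\right\rceil$). The strategy is to split off the trivial integer case, then rewrite the defining Bochner integral (\ref{79}) over a \emph{fixed} interval so that the $x$-dependence sits only in a continuous prefactor and in the continuous integrand, after which Bochner dominated convergence finishes the job.

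First I would dispose of the case $\alpha\in\mathbb{N}$. Then $m=\alpha$ and, by the observation following Definition \ref{d17}, $\left(D_{b-}^{\alpha}f\right)(x)=(-1)^{m}f^{(m)}(x)$; since $f\in C^{m}\left([a,b],X\right)$ this is continuous in $x$, and nothing remains to prove. So from now on assume $\alpha\notin\mathbb{N}$, whence $m-1<\alpha<m$ and the kernel exponent satisfies $m-\alpha-1\in(-1,0)$. This single inequality $m-\alpha-1>-1$ is the engine of the whole argument: it is exactly what renders the weakly singular kernel integrable up to the endpoint $z=x$, and it is the point I would flag as the only genuine obstacle, the rest being a routine limit passage.

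Next I would remove the $x$-dependence from both the kernel and the moving limit of integration by substituting $z=x+(b-x)u$ in (\ref{79}), which gives
\[
\left(D_{b-}^{\alpha}f\right)(x)=\frac{(-1)^{m}}{\Gamma(m-\alpha)}\,(b-x)^{m-\alpha}\int_{0}^{1}u^{m-\alpha-1}f^{(m)}\bigl(x+(b-x)u\bigr)\,du .
\]
Now the Bochner integral is over the fixed interval $[0,1]$, the remaining $x$-dependence inside the integral enters only through the continuous argument $x+(b-x)u=x(1-u)+bu\in[a,b]$, and the prefactor $(b-x)^{m-\alpha}$ is itself continuous in $x$ because $m-\alpha>0$. (This also handles $x=b$, where the prefactor vanishes and the value is $0$, in agreement with Lemma \ref{l18}.)

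Finally I would fix $x_{0}\in[a,b]$ and any sequence $x_{n}\to x_{0}$, and set $M:=\left\Vert f^{(m)}\right\Vert_{\infty,[a,b]}<\infty$, which is finite since $f^{(m)}$ is continuous on a compact set. For each $u\in(0,1]$, continuity of $f^{(m)}$ together with $x_{n}+(b-x_{n})u\to x_{0}+(b-x_{0})u$ gives $f^{(m)}\bigl(x_{n}+(b-x_{n})u\bigr)\to f^{(m)}\bigl(x_{0}+(b-x_{0})u\bigr)$ in $X$, while the norms are dominated by $M\,u^{m-\alpha-1}$, which is integrable on $[0,1]$ precisely because $m-\alpha-1>-1$. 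The Bochner dominated convergence theorem then yields convergence of the integrals, and multiplying by the continuous prefactor $(b-x)^{m-\alpha}$ gives $\left(D_{b-}^{\alpha}f\right)(x_{n})\to\left(D_{b-}^{\alpha}f\right)(x_{0})$. Since $[a,b]$ is a metric space, this sequential continuity is continuity, completing the proof. If one prefers an explicit modulus-based estimate in the style used earlier in the paper, the same decomposition bounds the increment by a term controlled by $\omega_{1}\!\left(f^{(m)},|x-x_{0}|\right)$ plus a term of order $\bigl|(b-x)^{m-\alpha}-(b-x_{0})^{m-\alpha}\bigr|$, both tending to $0$.
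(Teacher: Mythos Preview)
Your proof is correct. The paper itself does not supply a proof of this proposition; it simply quotes the result from \cite{11}. Your argument---disposing of the integer case, then substituting $z=x+(b-x)u$ to move the Bochner integral onto the fixed interval $[0,1]$ and invoking dominated convergence with the integrable majorant $M\,u^{m-\alpha-1}$---is a clean, self-contained justification that the paper omits. The key integrability point $m-\alpha-1>-1$ is exactly what you flag, and the handling of the endpoint $x=b$ via the vanishing prefactor $(b-x)^{m-\alpha}$ is consistent with Lemma~\ref{l18}. There is nothing to correct.
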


We also mention

\begin{proposition}
\label{p24.}(\cite{11}) Let $f\in C^{m-1}\left( \left[ a,b\right] ,X\right) $%
, $f^{\left( m\right) }\in L_{\infty }\left( \left[ a,b\right] ,X\right) $, $%
m=\left \lceil \alpha \right \rceil $, $\alpha >0$ and 
\begin{equation}
D_{\ast x_{0}}^{\alpha }f\left( x\right) =\frac{1}{\Gamma \left( m-\alpha
\right) }\int_{x_{0}}^{x}\left( x-t\right) ^{m-\alpha -1}f^{\left( m\right)
}\left( t\right) dt,  \tag{84}  \label{84}
\end{equation}%
for all $x,x_{0}\in \left[ a,b\right] :x\geq x_{0}.$

Then $D_{\ast x_{0}}^{\alpha }f\left( x\right) $ is continuous in $x_{0}$.
\end{proposition}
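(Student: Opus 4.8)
The plan is to fix $x\in\left[ a,b\right] $ and show that the map $x_{0}\mapsto D_{\ast x_{0}}^{\alpha }f\left( x\right) $ is continuous on $\left[ a,x\right] $, the natural domain on which (\ref{84}) is defined (recall the standing requirement $x\geq x_{0}$). First I would dispose of the integer case: if $\alpha \in \mathbb{N}$ then $m=\alpha $ and, by Definition \ref{d15}, $D_{\ast x_{0}}^{\alpha }f=f^{\left( m\right) }$ is independent of $x_{0}$, so continuity in $x_{0}$ is trivial. Hence the content lies in the case $\alpha \notin \mathbb{N}$, where $\beta :=m-\alpha \in \left( 0,1\right) $ and the exponent $m-\alpha -1=\beta -1\in \left( -1,0\right) $ makes the kernel $\left( x-t\right) ^{\beta -1}$ integrable up to the endpoint $t=x$.

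Next, take $x_{0},x_{0}^{\prime }\in \left[ a,x\right] $ and assume without loss of generality that $x_{0}\leq x_{0}^{\prime }\leq x$. The key observation is that the two defining Bochner integrals share the same integrand and the same upper limit $x$, so their difference telescopes: the overlapping part $\int_{x_{0}^{\prime }}^{x}$ cancels and only the sliver near the moving base point survives,
\[
\Gamma \left( m-\alpha \right) \left( D_{\ast x_{0}}^{\alpha }f\left( x\right) -D_{\ast x_{0}^{\prime }}^{\alpha }f\left( x\right) \right) =\int_{x_{0}}^{x_{0}^{\prime }}\left( x-t\right) ^{\beta -1}f^{\left( m\right) }\left( t\right) dt.
\]
Passing to norms inside the integral and using $\left \Vert f^{\left( m\right) }\left( t\right) \right \Vert \leq \left \Vert f^{\left( m\right) }\right \Vert _{L_{\infty }\left( \left[ a,b\right] ,X\right) }$ for a.e.\ $t$ gives
\[
\left \Vert D_{\ast x_{0}}^{\alpha }f\left( x\right) -D_{\ast x_{0}^{\prime }}^{\alpha }f\left( x\right) \right \Vert \leq \frac{\left \Vert f^{\left( m\right) }\right \Vert _{L_{\infty }\left( \left[ a,b\right] ,X\right) }}{\Gamma \left( m-\alpha \right) }\int_{x_{0}}^{x_{0}^{\prime }}\left( x-t\right) ^{\beta -1}dt.
\]

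Finally I would evaluate the elementary scalar integral exactly, $\int_{x_{0}}^{x_{0}^{\prime }}\left( x-t\right) ^{\beta -1}dt=\beta ^{-1}\left( \left( x-x_{0}\right) ^{\beta }-\left( x-x_{0}^{\prime }\right) ^{\beta }\right) $, which is finite precisely because $\beta >0$ (so the integral converges even in the limiting case $x_{0}^{\prime }=x$). Since $s\mapsto \left( x-s\right) ^{\beta }$ is continuous on $\left[ a,x\right] $, this quantity tends to $0$ as $x_{0}^{\prime }\rightarrow x_{0}$, yielding the desired continuity of $x_{0}\mapsto D_{\ast x_{0}}^{\alpha }f\left( x\right) $. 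I expect the only delicate point to be the bookkeeping around the kernel singularity at $t=x$: one must verify that it sits at the common upper endpoint of both integrals and therefore cancels in the subtraction, leaving the benign remainder $\int_{x_{0}}^{x_{0}^{\prime }}$, whose integrand is bounded when $x_{0}^{\prime }<x$ and only integrably singular when $x_{0}^{\prime }=x$. Everything else is the routine $\varepsilon$-$\delta$ argument via uniform continuity of the power function.
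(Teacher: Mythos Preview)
Your argument is correct. Note, however, that the paper does not supply its own proof of this proposition: it is quoted from \cite{11} and no argument is reproduced here, so there is nothing in the present paper to compare against line by line.

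Taken on its own merits, your proof is the natural one. The cancellation
\[
\Gamma(m-\alpha)\bigl(D_{\ast x_{0}}^{\alpha}f(x)-D_{\ast x_{0}^{\prime}}^{\alpha}f(x)\bigr)=\int_{x_{0}}^{x_{0}^{\prime}}(x-t)^{\beta-1}f^{(m)}(t)\,dt
\]
is exactly right, and your observation that the kernel singularity sits at the shared upper endpoint $t=x$ and therefore disappears from the difference is the key point. The resulting bound
\[
\left\Vert D_{\ast x_{0}}^{\alpha}f(x)-D_{\ast x_{0}^{\prime}}^{\alpha}f(x)\right\Vert \le \frac{\left\Vert f^{(m)}\right\Vert_{L_{\infty}([a,b],X)}}{\Gamma(m-\alpha+1)}\left[(x-x_{0})^{\beta}-(x-x_{0}^{\prime})^{\beta}\right]
\]
in fact gives more than continuity: it shows that $x_{0}\mapsto D_{\ast x_{0}}^{\alpha}f(x)$ is H\"{o}lder continuous of order $\beta=m-\alpha$ on $[a,x]$, uniformly in $x$, via the elementary inequality $u^{\beta}-v^{\beta}\le (u-v)^{\beta}$ for $0\le v\le u$ and $0<\beta<1$. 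Your handling of the integer case $\alpha\in\mathbb{N}$ via Definition~\ref{d15} is also correct and necessary, since formula~(\ref{84}) is not meaningful there.
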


\begin{proposition}
\label{p25.}(\cite{11}) Let $f\in C^{m-1}\left( \left[ a,b\right] ,X\right) $%
, $f^{\left( m\right) }\in L_{\infty }\left( \left[ a,b\right] ,X\right) $, $%
m=\left \lceil \alpha \right \rceil $, $\alpha >0$ and 
\begin{equation}
D_{x_{0}-}^{\alpha }f\left( x\right) =\frac{\left( -1\right) ^{m}}{\Gamma
\left( m-\alpha \right) }\int_{x}^{x_{0}}\left( \zeta -x\right) ^{m-\alpha
-1}f^{\left( m\right) }\left( \zeta \right) d\zeta ,  \tag{85}  \label{85}
\end{equation}%
for all $x,x_{0}\in \left[ a,b\right] :x_{0}\geq x.$

Then $D_{x_{0}-}^{\alpha }f\left( x\right) $ is continuous in $x_{0}$.
\end{proposition}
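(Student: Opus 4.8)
The plan is to fix $x\in\left[ a,b\right] $ and estimate the norm of the difference $D_{x_{0}-}^{\alpha }f\left( x\right) -D_{x_{0}^{\prime }-}^{\alpha }f\left( x\right) $ for two base points $x_{0},x_{0}^{\prime }\geq x$, showing that it tends to $0$ as $x_{0}^{\prime }\rightarrow x_{0}$. First I would dispose of the case $\alpha \in \mathbb{N}$: then $m=\alpha $ and, by Definition \ref{d17}, $D_{x_{0}-}^{\alpha }f\left( x\right) =\left( -1\right) ^{m}f^{\left( m\right) }\left( x\right) $ carries no dependence on $x_{0}$, so continuity in $x_{0}$ is immediate. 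Hence I may assume $\alpha \notin \mathbb{N}$, which gives $m-\alpha \in \left( 0,1\right) $; in particular $m-\alpha -1>-1$, so the kernel $\left( \zeta -x\right) ^{m-\alpha -1}$ is integrable up to $\zeta =x$ and the defining integral is well posed.

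Next, taking without loss of generality $x\leq x_{0}<x_{0}^{\prime }$, I would subtract the two defining Bochner integrals. Since the integrand does not depend on the base point, the common contribution over $\left[ x,x_{0}\right] $ cancels and the difference collapses to a single integral over $\left[ x_{0},x_{0}^{\prime }\right] $:
\[
D_{x_{0}-}^{\alpha }f\left( x\right) -D_{x_{0}^{\prime }-}^{\alpha }f\left( x\right) =\frac{\left( -1\right) ^{m+1}}{\Gamma \left( m-\alpha \right) }\int_{x_{0}}^{x_{0}^{\prime }}\left( \zeta -x\right) ^{m-\alpha -1}f^{\left( m\right) }\left( \zeta \right) d\zeta .
\]
Applying the triangle inequality for the Bochner integral together with the bound $\left \Vert f^{\left( m\right) }\left( \zeta \right) \right \Vert \leq \left \Vert f^{\left( m\right) }\right \Vert _{L_{\infty }\left( \left[ a,b\right] ,X\right) }$ a.e., I obtain
\[
\left \Vert D_{x_{0}-}^{\alpha }f\left( x\right) -D_{x_{0}^{\prime }-}^{\alpha }f\left( x\right) \right \Vert \leq \frac{\left \Vert f^{\left( m\right) }\right \Vert _{\infty }}{\Gamma \left( m-\alpha \right) }\int_{x_{0}}^{x_{0}^{\prime }}\left( \zeta -x\right) ^{m-\alpha -1}d\zeta .
\]

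Finally I would evaluate the resulting scalar integral. Since the interval $\left[ x_{0},x_{0}^{\prime }\right] $ avoids the singularity at $\zeta =x$, the integral is proper and equals $\frac{\left( x_{0}^{\prime }-x\right) ^{m-\alpha }-\left( x_{0}-x\right) ^{m-\alpha }}{m-\alpha }$, so that, using $\Gamma \left( m-\alpha \right) \left( m-\alpha \right) =\Gamma \left( m-\alpha +1\right) $,
\[
\left \Vert D_{x_{0}-}^{\alpha }f\left( x\right) -D_{x_{0}^{\prime }-}^{\alpha }f\left( x\right) \right \Vert \leq \frac{\left \Vert f^{\left( m\right) }\right \Vert _{\infty }}{\Gamma \left( m-\alpha +1\right) }\left \vert \left( x_{0}^{\prime }-x\right) ^{m-\alpha }-\left( x_{0}-x\right) ^{m-\alpha }\right \vert .
\]
Because $t\mapsto t^{m-\alpha }$ is continuous (indeed uniformly continuous) on $\left[ 0,b-a\right] $, the right-hand side vanishes as $x_{0}^{\prime }\rightarrow x_{0}$, establishing the continuity of $x_{0}\mapsto D_{x_{0}-}^{\alpha }f\left( x\right) $. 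I expect no genuine obstacle here; the only points requiring care are the reduction to $\alpha \notin \mathbb{N}$ so that the kernel stays integrable, the observation that the difference interval $\left[ x_{0},x_{0}^{\prime }\right] $ stays away from the singularity (so the bounding integral is finite and continuous in its limits), and the legitimacy of pulling the norm inside the Bochner integral, which is standard.
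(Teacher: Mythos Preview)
Your argument is correct. The paper does not supply its own proof of this proposition; it merely cites \cite{11}, so there is no in-paper argument to compare against. Your direct estimate---subtracting the two defining integrals so that only $\int_{x_{0}}^{x_{0}^{\prime}}(\zeta-x)^{m-\alpha-1}f^{(m)}(\zeta)\,d\zeta$ survives, bounding by $\|f^{(m)}\|_{\infty}$, and invoking continuity of $t\mapsto t^{m-\alpha}$---is exactly the natural approach and goes through without difficulty.

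One very minor remark: your sentence ``the interval $[x_{0},x_{0}^{\prime}]$ avoids the singularity at $\zeta=x$'' is not quite accurate at the boundary case $x_{0}=x$, but this does not matter because $m-\alpha-1>-1$ makes the integral $\int_{x}^{x_{0}^{\prime}}(\zeta-x)^{m-\alpha-1}\,d\zeta=(x_{0}^{\prime}-x)^{m-\alpha}/(m-\alpha)$ convergent anyway, and your final displayed bound remains valid there. So the proof stands as written.
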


\begin{corollary}
\label{c26}(\cite{11}) Let $f\in C^{m}\left( \left[ a,b\right] ,X\right) $, $%
m=\left \lceil \alpha \right \rceil $, $\alpha >0$, $x,x_{0}\in \left[ a,b%
\right] $. Then $D_{\ast x_{0}}^{a}f\left( x\right) ,$ $D_{x_{0}-}^{a}f%
\left( x\right) $ are jointly continuous functions in $\left( x,x_{0}\right) 
$ from $\left[ a,b\right] ^{2}$ into $X$, $X$ is a Banach space.
\end{corollary}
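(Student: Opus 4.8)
The plan is to reduce to the case $\alpha \notin \mathbb{N}$ and then estimate directly the increment of the fractional derivative under a \emph{simultaneous} small change of the base point $x_{0}$ and the evaluation point $x$, using the boundedness and uniform continuity of $f^{\left( m\right) }$ on the compact interval $\left[ a,b\right] $. First I would dispose of the trivial case: if $\alpha \in \mathbb{N}$ then $m=\alpha $ and $D_{\ast x_{0}}^{\alpha }f=f^{\left( m\right) }$, $D_{x_{0}-}^{\alpha }f=\left( -1\right) ^{m}f^{\left( m\right) }$, which do not depend on $x_{0}$ and are continuous, so joint continuity is immediate. Hence assume $\alpha \notin \mathbb{N}$, whence $m-\alpha \in \left( 0,1\right) $ and the kernel exponent $m-\alpha -1\in \left( -1,0\right) $ produces only a mild integrable singularity. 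Since $f\in C^{m}\left( \left[ a,b\right] ,X\right) $, the map $f^{\left( m\right) }$ is bounded, say $\left \Vert f^{\left( m\right) }\left( t\right) \right \Vert \leq M$ for all $t\in \left[ a,b\right] $, and uniformly continuous, with $\omega _{1}\left( f^{\left( m\right) },\delta \right) \rightarrow 0$ as $\delta \rightarrow 0^{+}$.

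For the left derivative and $x\geq x_{0}$ I would pass to the substitution $u=x-t$, writing
$$\Gamma \left( m-\alpha \right) D_{\ast x_{0}}^{\alpha }f\left( x\right) =\int_{0}^{x-x_{0}}u^{m-\alpha -1}f^{\left( m\right) }\left( x-u\right) du.$$
Given a nearby pair $\left( y,y_{0}\right) $ with $y\geq y_{0}$ and (say) $y-y_{0}\geq x-x_{0}$, I would split
$$\Gamma \left( m-\alpha \right) \bigl( D_{\ast y_{0}}^{\alpha }f\left( y\right) -D_{\ast x_{0}}^{\alpha }f\left( x\right) \bigr) =\int_{0}^{x-x_{0}}u^{m-\alpha -1}\bigl( f^{\left( m\right) }\left( y-u\right) -f^{\left( m\right) }\left( x-u\right) \bigr) du+\int_{x-x_{0}}^{y-y_{0}}u^{m-\alpha -1}f^{\left( m\right) }\left( y-u\right) du.$$
The first integral is bounded in norm by $\omega _{1}\left( f^{\left( m\right) },\left \vert y-x\right \vert \right) \left( x-x_{0}\right) ^{m-\alpha }/\left( m-\alpha \right) $, tending to $0$ by uniform continuity; the second is bounded by $M\bigl( \left( y-y_{0}\right) ^{m-\alpha }-\left( x-x_{0}\right) ^{m-\alpha }\bigr) /\left( m-\alpha \right) $, tending to $0$ by continuity of $s\mapsto s^{m-\alpha }$. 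This gives joint continuity on the open region $\left \{ x>x_{0}\right \} $, and the subcase $y-y_{0}<x-x_{0}$ is symmetric.

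Finally I would treat the diagonal. On $\left \{ x<x_{0}\right \} $ the left derivative is $0$ by Convention \ref{c21}, and on $\left \{ x=x_{0}\right \} $ the defining integral is over a degenerate interval, hence also $0$. For the approach from $\left \{ x\geq x_{0}\right \} $ toward a diagonal point $\left( x_{0},x_{0}\right) $, the estimate
$$\left \Vert \Gamma \left( m-\alpha \right) D_{\ast y_{0}}^{\alpha }f\left( y\right) \right \Vert \leq M\frac{\left( y-y_{0}\right) ^{m-\alpha }}{m-\alpha }\longrightarrow 0\quad \text{as }y-y_{0}\rightarrow 0$$
shows the limiting value is $0$, matching the value on the other side, so continuity holds across the diagonal. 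The right derivative $D_{x_{0}-}^{\alpha }f$ is handled identically through the substitution $u=\zeta -x$ in (\ref{85}). The main obstacle is exactly this coupling of the two variables near the diagonal: separate continuity in $x$ (Proposition \ref{p22.}) and in $x_{0}$ (Proposition \ref{p24.}) does not by itself yield joint continuity, so the explicit splitting above --- isolating the endpoint movement from the base-point movement and absorbing the integrable singularity $u^{m-\alpha -1}$ --- is what does the real work.
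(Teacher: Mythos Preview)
The paper does not supply its own proof of this corollary; the result is simply quoted from \cite{11}. Your argument, by contrast, is a complete and correct direct proof. The substitution $u=x-t$ is exactly the right device to decouple the two moving endpoints, and your two-term splitting cleanly separates the effect of shifting the evaluation point (controlled by $\omega _{1}\bigl( f^{\left( m\right) },\left \vert y-x\right \vert \bigr) $ times a fixed integral of $u^{m-\alpha -1}$) from the effect of changing the integration length (controlled by the continuity of $s\mapsto s^{m-\alpha }$). One small check worth making explicit is that in the first integral the argument $y-u$ stays in $\left[ a,b\right] $: since $u\leq x-x_{0}\leq y-y_{0}$ in the case you treat, one has $y-u\geq y_{0}\geq a$, so the modulus-of-continuity bound is legitimate. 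Your handling of the diagonal via Convention \ref{c21} together with the crude estimate $\left \Vert D_{\ast y_{0}}^{\alpha }f\left( y\right) \right \Vert \leq M\left( y-y_{0}\right) ^{m-\alpha }/\bigl( \Gamma \left( m-\alpha \right) \left( m-\alpha \right) \bigr) \rightarrow 0$ is also sound. In short, you have furnished a full argument where the paper only gives a citation.
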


We need

\begin{theorem}
\label{t27.}(\cite{11}) Let $f:\left[ a,b\right] ^{2}\rightarrow X$ be
jointly continuous, $X$ is a Banach space. Consider 
\begin{equation}
G\left( x\right) =\omega _{1}\left( f\left( \cdot ,x\right) ,\delta ,\left[
x,b\right] \right) ,  \tag{86}  \label{86}
\end{equation}%
$\delta >0$, $x\in \left[ a,b\right] .$

Then $G$ is continuous on $\left[ a,b\right] .$
\end{theorem}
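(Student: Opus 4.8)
Given a jointly continuous map $f:[a,b]^2\to X$ into a Banach space $X$, and a fixed $\delta>0$, we must show that the function $G(x)=\omega_1\left(f(\cdot,x),\delta,[x,b]\right)$ is continuous on $[a,b]$. Here $\omega_1\left(f(\cdot,x),\delta,[x,b]\right)=\sup\left\{\,\left\Vert f(s,x)-f(t,x)\right\Vert : s,t\in[x,b],\ |s-t|\le\delta\,\right\}$, so both the integrand $f(\cdot,x)$ and the domain $[x,b]$ over which the supremum is taken vary with $x$.

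\medskip

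The plan is to exploit uniform continuity. First I would observe that $[a,b]^2$ is compact, so $f$ is uniformly continuous and bounded on $[a,b]^2$; given $\varepsilon>0$ there is a modulus $\eta>0$ such that $\left\Vert f(u,v)-f(u',v')\right\Vert<\varepsilon$ whenever $|u-u'|\le\eta$ and $|v-v'|\le\eta$. Fix $x_0\in[a,b]$ and take any $x$ with $|x-x_0|\le\eta$. The continuity argument then splits into two effects that must be controlled separately: the change of the second argument from $x_0$ to $x$, and the change of the interval from $[x_0,b]$ to $[x,b]$.

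\medskip

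For the change in the second slot, I would use the elementary inequality for moduli of continuity, namely that for fixed interval $[c,b]$ one has
\begin{equation*}
\left\vert \omega_1\left(f(\cdot,x),\delta,[c,b]\right)-\omega_1\left(f(\cdot,x_0),\delta,[c,b]\right)\right\vert\le 2\sup_{s\in[c,b]}\left\Vert f(s,x)-f(s,x_0)\right\Vert\le 2\varepsilon,
\end{equation*}
which follows from the reverse triangle inequality applied inside the supremum defining $\omega_1$. This handles the perturbation of the curve $f(\cdot,x)$. For the change of domain, I would compare $\omega_1$ over $[x,b]$ and over $[x_0,b]$ for the same function $f(\cdot,x_0)$. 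Since these intervals are nested (the smaller endpoint moves by at most $\eta$), monotonicity of $\omega_1$ in the interval gives one-sided control immediately, and the reverse estimate uses that any admissible pair $s,t$ in the larger interval lying in the thin strip near the moving endpoint contributes at most the oscillation of $f(\cdot,x_0)$ over a set of diameter $\le\eta$, which is again $<\varepsilon$ by uniform continuity. Combining the two effects via the triangle inequality yields $\left\vert G(x)-G(x_0)\right\vert\le C\varepsilon$ for a fixed constant $C$, establishing continuity at $x_0$.

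\medskip

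The main obstacle is the domain-dependence: because the interval $[x,b]$ over which the supremum is taken shrinks or grows with $x$, one cannot treat $G$ as a supremum over a fixed index set and simply invoke uniform convergence. The delicate point is the reverse inequality when passing from the larger interval to the smaller one, since a supremizing pair could conceivably live entirely in the part of $[x,b]$ that is cut away as $x$ increases toward $x_0$; controlling that requires the uniform-continuity bound on the oscillation over the thin boundary strip together with the compatibility constraint $|s-t|\le\delta$. Once both the integrand-perturbation term and the domain-perturbation term are bounded by a constant multiple of $\varepsilon$, uniformly in the base point, the continuity (indeed uniform continuity) of $G$ on $[a,b]$ follows.
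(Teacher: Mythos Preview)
The paper does not prove Theorem~\ref{t27.}; it merely quotes the result from \cite{11}, so there is no in-paper argument to compare against. Your sketch is a sound and standard approach: use uniform continuity of $f$ on the compact square, and separately control (a) the perturbation of the second variable and (b) the perturbation of the domain $[x,b]$.

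One point deserves a more explicit treatment. In the domain-change step you discuss the case where the supremizing pair $s,t$ lies entirely in the thin strip $[x,x_0]$ (say $x<x_0$), but you do not spell out the straddling case where $s\in[x,x_0)$ and $t\in[x_0,b]$. This case is easy but should be written: since $s<x_0\le t$ and $|s-t|\le\delta$, one has $|x_0-t|\le\delta$, so
\[
\left\Vert f(s,x_0)-f(t,x_0)\right\Vert \le \left\Vert f(s,x_0)-f(x_0,x_0)\right\Vert + \left\Vert f(x_0,x_0)-f(t,x_0)\right\Vert \le \varepsilon + \omega_1\bigl(f(\cdot,x_0),\delta,[x_0,b]\bigr),
\]
because $|s-x_0|\le\eta$ and $(x_0,t)$ is an admissible pair in $[x_0,b]$. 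With that case included, your three-case analysis yields $\omega_1\bigl(f(\cdot,x_0),\delta,[x,b]\bigr)\le \omega_1\bigl(f(\cdot,x_0),\delta,[x_0,b]\bigr)+\varepsilon$, and combined with the $2\varepsilon$ bound from step~(a) you get $|G(x)-G(x_0)|\le 3\varepsilon$. The argument is then complete.
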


\begin{theorem}
\label{t28.}(\cite{11}) Let $f:\left[ a,b\right] ^{2}\rightarrow X$ be
jointly continuous, $X$ is a Banach space. Then 
\begin{equation}
H\left( x\right) =\omega _{1}\left( f\left( \cdot ,x\right) ,\delta ,\left[
a,x\right] \right) ,  \tag{87}  \label{87}
\end{equation}%
$x\in \left[ a,b\right] $, is continuous in $x\in \left[ a,b\right] $, $%
\delta >0$.
\end{theorem}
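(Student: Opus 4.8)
The plan is to reduce the continuity of $H$ to the joint uniform continuity of $f$ on the compact square $[a,b]^{2}$. Write $\Omega(\rho):=\omega_{1}(f,\rho,[a,b]^{2})$ for the joint modulus, namely the supremum of $\left\Vert f(p)-f(q)\right\Vert$ over $p,q\in[a,b]^{2}$ whose coordinates differ by at most $\rho$; since $f$ is continuous on a compact set it is uniformly continuous, so $\Omega(\rho)\to0$ as $\rho\to0^{+}$. I fix $x_{0}\in[a,b]$ and an arbitrary $x\in[a,b]$, and aim to bound $\left\vert H(x)-H(x_{0})\right\vert$ by a constant multiple of $\Omega(|x-x_{0}|)$, after which letting $x\to x_{0}$ finishes the proof. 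The estimate decomposes $H$ into two independent effects: changing the second slot of $f$ while holding the interval fixed, and sliding the right endpoint of the interval while holding the function fixed.

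For the first effect, fix an interval $I\subseteq[a,b]$. For any $s,t\in I$ with $|s-t|\le\delta$ the triangle inequality gives $\left\Vert f(s,x)-f(t,x)\right\Vert\le\left\Vert f(s,x_{0})-f(t,x_{0})\right\Vert+\left\Vert f(s,x)-f(s,x_{0})\right\Vert+\left\Vert f(t,x)-f(t,x_{0})\right\Vert$, and since in each of the last two norms the two points differ only in the second coordinate by $|x-x_{0}|$, each is bounded by $\Omega(|x-x_{0}|)$. Taking the supremum over admissible $s,t$ and then symmetrizing in $x$ and $x_{0}$ yields $\left\vert\omega_{1}(f(\cdot,x),\delta,I)-\omega_{1}(f(\cdot,x_{0}),\delta,I)\right\vert\le2\,\Omega(|x-x_{0}|)$, uniformly in the choice of $I$.

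For the second effect I hold the function $g:=f(\cdot,x_{0})$ fixed and compare its modulus over $[a,x]$ with that over $[a,x_{0}]$. Assume $x>x_{0}$ (the case $x<x_{0}$ is symmetric, with the roles of the two intervals exchanged). Since $[a,x_{0}]\subseteq[a,x]$, monotonicity in the interval gives $\omega_{1}(g,\delta,[a,x_{0}])\le\omega_{1}(g,\delta,[a,x])$. For the reverse direction, given $s,t\in[a,x]$ with $|s-t|\le\delta$, project them into $[a,x_{0}]$ by $\bar{s}:=\min(s,x_{0})$ and $\bar{t}:=\min(t,x_{0})$. Because $v\mapsto\min(v,x_{0})$ is $1$-Lipschitz we have $|\bar{s}-\bar{t}|\le|s-t|\le\delta$, so $(\bar{s},\bar{t})$ is an admissible pair for $[a,x_{0}]$, while $|s-\bar{s}|\le x-x_{0}$ and $|t-\bar{t}|\le x-x_{0}$. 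A triangle inequality of the same shape as before then gives $\left\Vert g(s)-g(t)\right\Vert\le\omega_{1}(g,\delta,[a,x_{0}])+2\,\omega_{1}(g,x-x_{0},[a,b])\le\omega_{1}(g,\delta,[a,x_{0}])+2\,\Omega(|x-x_{0}|)$, the final bound holding because varying only the first slot of $f$ by $x-x_{0}$ is controlled by the joint modulus. Hence $\left\vert\omega_{1}(g,\delta,[a,x])-\omega_{1}(g,\delta,[a,x_{0}])\right\vert\le2\,\Omega(|x-x_{0}|)$.

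Combining the two effects through the intermediate quantity $\omega_{1}(f(\cdot,x_{0}),\delta,[a,x])$ gives $\left\vert H(x)-H(x_{0})\right\vert\le4\,\Omega(|x-x_{0}|)$, which tends to $0$ as $x\to x_{0}$; since $x_{0}$ was arbitrary, $H$ is continuous on $[a,b]$. The only genuinely technical point is the endpoint-sliding step: one must show that enlarging the interval cannot raise the modulus by more than the oscillation of $g$ over a window of width $|x-x_{0}|$, and the clean way to see this is the $1$-Lipschitz projection onto the smaller interval, which keeps admissible pairs admissible while displacing each point by at most $|x-x_{0}|$. Everything else is the triangle inequality together with the uniform continuity of $f$ on the compact square, and the argument runs in exact parallel to that for $G$ in Theorem \ref{t27.}, with $[a,x]$ in place of $[x,b]$.
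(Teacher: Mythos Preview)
The paper does not supply its own proof of this theorem; it is quoted from \cite{11} without argument, so there is nothing to compare against directly. Your proof is correct and self-contained: the decomposition into (i) varying the second argument of $f$ with the interval fixed and (ii) sliding the right endpoint with the section $g=f(\cdot,x_{0})$ fixed, both controlled by the joint modulus $\Omega(|x-x_{0}|)$ via uniform continuity on the compact square, yields $|H(x)-H(x_{0})|\le 4\,\Omega(|x-x_{0}|)$. The $1$-Lipschitz projection $v\mapsto\min(v,x_{0})$ is exactly the right device for step (ii), and the degenerate endpoint $x_{0}=a$ causes no trouble since then $\omega_{1}(g,\delta,\{a\})=0$ and the estimate is trivially one-sided.
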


We make

\begin{remark}
\label{r29}(\cite{11}) Let $f\in C^{n-1}\left( \left[ a,b\right] \right) $, $%
f^{\left( n\right) }\in L_{\infty }\left( \left[ a,b\right] \right) $, $%
n=\left \lceil \nu \right \rceil $, $\nu >0$, $\nu \notin \mathbb{N}$. Then 
\begin{equation}
\left \Vert D_{\ast a}^{\nu }f\left( x\right) \right \Vert \leq \frac{\left
\Vert f^{\left( n\right) }\right \Vert _{L_{\infty }\left( \left[ a,b\right]
,X\right) }}{\Gamma \left( n-\nu +1\right) }\left( x-a\right) ^{n-\nu }\text{%
, \ }\forall \text{ }x\in \left[ a,b\right] .  \tag{88}  \label{88}
\end{equation}%
Thus we observe ($\delta >0$) 
\begin{equation}
\omega _{1}\left( D_{\ast a}^{\nu }f,\delta \right) =\underset{\left \vert
x-y\right \vert \leq \delta }{\underset{x,y\in \left[ a,b\right] }{\sup }}%
\left \Vert D_{\ast a}^{\nu }f\left( x\right) -D_{\ast a}^{\nu }f\left(
y\right) \right \Vert \leq  \tag{89}  \label{89}
\end{equation}%
\begin{equation*}
\underset{\left \vert x-y\right \vert \leq \delta }{\underset{x,y\in \left[
a,b\right] }{\sup }}\left( \frac{\left \Vert f^{\left( n\right) }\right
\Vert _{L_{\infty }\left( \left[ a,b\right] ,X\right) }}{\Gamma \left( n-\nu
+1\right) }\left( x-a\right) ^{n-\nu }+\frac{\left \Vert f^{\left( n\right)
}\right \Vert _{L_{\infty }\left( \left[ a,b\right] ,X\right) }}{\Gamma
\left( n-\nu +1\right) }\left( y-a\right) ^{n-\nu }\right)
\end{equation*}%
\begin{equation*}
\leq \frac{2\left \Vert f^{\left( n\right) }\right \Vert _{L_{\infty }\left( %
\left[ a,b\right] ,X\right) }}{\Gamma \left( n-\nu +1\right) }\left(
b-a\right) ^{n-\nu }.
\end{equation*}%
Consequently 
\begin{equation}
\omega _{1}\left( D_{\ast a}^{\nu }f,\delta \right) \leq \frac{2\left \Vert
f^{\left( n\right) }\right \Vert _{L_{\infty }\left( \left[ a,b\right]
,X\right) }}{\Gamma \left( n-\nu +1\right) }\left( b-a\right) ^{n-\nu }. 
\tag{90}  \label{90}
\end{equation}%
Similarly, let $f\in C^{m-1}\left( \left[ a,b\right] \right) $, $f^{\left(
m\right) }\in L_{\infty }\left( \left[ a,b\right] \right) $, $m=\left \lceil
\alpha \right \rceil $, $\alpha >0$, $\alpha \notin \mathbb{N}$, then 
\begin{equation}
\omega _{1}\left( D_{b-}^{\alpha }f,\delta \right) \leq \frac{2\left \Vert
f^{\left( m\right) }\right \Vert _{L_{\infty }\left( \left[ a,b\right]
,X\right) }}{\Gamma \left( m-\alpha +1\right) }\left( b-a\right) ^{m-\alpha
}.  \tag{91}  \label{91}
\end{equation}%
So for $f\in C^{m-1}\left( \left[ a,b\right] \right) $, $f^{\left( m\right)
}\in L_{\infty }\left( \left[ a,b\right] \right) $, $m=\left \lceil \alpha
\right \rceil $, $\alpha >0$, $\alpha \notin \mathbb{N}$, we find 
\begin{equation}
\underset{x_{0}\in \left[ a,b\right] }{\sup }\omega _{1}\left( D_{\ast
x_{0}}^{\alpha }f,\delta \right) _{\left[ x_{0},b\right] }\leq \frac{2\left
\Vert f^{\left( m\right) }\right \Vert _{L_{\infty }\left( \left[ a,b\right]
,X\right) }}{\Gamma \left( m-\alpha +1\right) }\left( b-a\right) ^{m-\alpha
},  \tag{92}  \label{92}
\end{equation}%
and 
\begin{equation}
\underset{x_{0}\in \left[ a,b\right] }{\sup }\omega _{1}\left(
D_{x_{0}-}^{\alpha }f,\delta \right) _{\left[ a,x_{0}\right] }\leq \frac{%
2\left \Vert f^{\left( m\right) }\right \Vert _{L_{\infty }\left( \left[ a,b%
\right] ,X\right) }}{\Gamma \left( m-\alpha +1\right) }\left( b-a\right)
^{m-\alpha }.  \tag{93}  \label{93}
\end{equation}
\end{remark}

By \cite{12} we get that $D_{\ast x_{0}}^{\alpha }f\in C\left( \left[ x_{0},b%
\right] ,X\right) $, and by \cite{10} we obtain that $D_{x_{0}-}^{\alpha
}f\in C\left( \left[ a,x_{0}\right] ,X\right) .$

We present the following $X$-valued fractional approximation result by
neural networks.

\begin{theorem}
\label{t30.}Let $\alpha >0$, $N=\left \lceil \alpha \right \rceil $, $\alpha
\notin \mathbb{N}$, $f\in C^{N}\left( \left[ a,b\right] ,X\right) $, $%
0<\beta <1$, $m\in \mathbb{N},$ $x\in \left[ a,b\right] $, $n\in \mathbb{N}%
:n^{1-\beta }>2.$ Then

i) 
\begin{equation*}
\left \Vert A_{n}\left( f,x\right) -\sum_{j=1}^{N-1}\frac{f^{\left( j\right)
}\left( x\right) }{j!}A_{n}\left( \left( \cdot -x\right) ^{j}\right) \left(
x\right) -f\left( x\right) \right \Vert \leq
\end{equation*}%
\begin{equation*}
\frac{2\left( \sqrt[2m]{1+4^{m}}\right) }{\Gamma \left( \alpha +1\right) }%
\left \{ \frac{\left( \omega _{1}\left( D_{x-}^{\alpha }f,\frac{1}{n^{\beta }%
}\right) _{\left[ a,x\right] }+\omega _{1}\left( D_{\ast x}^{\alpha }f,\frac{%
1}{n^{\beta }}\right) _{\left[ x,b\right] }\right) }{n^{\alpha \beta }}%
+\right.
\end{equation*}%
\begin{equation}
\left. \frac{1}{4m\left( n^{1-\beta }-2\right) ^{2m}}\left( \left \Vert
D_{x-}^{\alpha }f\right \Vert _{\infty ,\left[ a,x\right] }\left( x-a\right)
^{\alpha }+\left \Vert D_{\ast x}^{\alpha }f\right \Vert _{\infty ,\left[ x,b%
\right] }\left( b-x\right) ^{\alpha }\right) \right \} ,  \tag{94}
\label{94}
\end{equation}

ii) if $f^{\left( j\right) }\left( x\right) =0$, for $j=1,...,N-1$, we have 
\begin{equation*}
\left \Vert A_{n}\left( f,x\right) -f\left( x\right) \right \Vert \leq \frac{%
2\left( \sqrt[2m]{1+4^{m}}\right) }{\Gamma \left( \alpha +1\right) }
\end{equation*}%
\begin{equation*}
\left \{ \frac{\left( \omega _{1}\left( D_{x-}^{\alpha }f,\frac{1}{n^{\beta }%
}\right) _{\left[ a,x\right] }+\omega _{1}\left( D_{\ast x}^{\alpha }f,\frac{%
1}{n^{\beta }}\right) _{\left[ x,b\right] }\right) }{n^{\alpha \beta }}%
+\right.
\end{equation*}%
\begin{equation}
\left. \frac{1}{4m\left( n^{1-\beta }-2\right) ^{2m}}\left( \left \Vert
D_{x-}^{\alpha }f\right \Vert _{\infty ,\left[ a,x\right] }\left( x-a\right)
^{\alpha }+\left \Vert D_{\ast x}^{\alpha }f\right \Vert _{\infty ,\left[ x,b%
\right] }\left( b-x\right) ^{\alpha }\right) \right \} ,  \tag{95}
\label{95}
\end{equation}

iii) 
\begin{equation*}
\left \Vert A_{n}\left( f,x\right) -f\left( x\right) \right \Vert \leq
2\left( \sqrt[2m]{1+4^{m}}\right) \cdot
\end{equation*}%
\begin{equation*}
\left \{ \sum_{j=1}^{N-1}\frac{\left \Vert f^{\left( j\right) }\left(
x\right) \right \Vert }{j!}\left \{ \frac{1}{n^{\beta j}}+\frac{\left(
b-a\right) ^{j}}{4m\left( n^{1-\beta }-2\right) ^{2m}}\right \} +\right.
\end{equation*}%
\begin{equation*}
\frac{1}{\Gamma \left( \alpha +1\right) }\left \{ \frac{\left( \omega
_{1}\left( D_{x-}^{\alpha }f,\frac{1}{n^{\beta }}\right) _{\left[ a,x\right]
}+\omega _{1}\left( D_{\ast x}^{\alpha }f,\frac{1}{n^{\beta }}\right) _{%
\left[ x,b\right] }\right) }{n^{\alpha \beta }}+\right.
\end{equation*}%
\begin{equation}
\left. \left. \frac{1}{4m\left( n^{1-\beta }-2\right) ^{2m}}\left( \left
\Vert D_{x-}^{\alpha }f\right \Vert _{\infty ,\left[ a,x\right] }\left(
x-a\right) ^{\alpha }+\left \Vert D_{\ast x}^{\alpha }f\right \Vert _{\infty
,\left[ x,b\right] }\left( b-x\right) ^{\alpha }\right) \right \} \right \} ,
\tag{96}  \label{96}
\end{equation}%
$\forall $ $x\in \left[ a,b\right] ,$

and

iv) 
\begin{equation*}
\left \Vert A_{n}f-f\right \Vert _{\infty }\leq 2\left( \sqrt[2m]{1+4^{m}}%
\right) \cdot
\end{equation*}%
\begin{equation*}
\left \{ \sum_{j=1}^{N-1}\frac{\left \Vert f^{\left( j\right) }\right \Vert
_{\infty }}{j!}\left \{ \frac{1}{n^{\beta j}}+\frac{\left( b-a\right) ^{j}}{%
4m\left( n^{1-\beta }-2\right) ^{2m}}\right \} +\right.
\end{equation*}%
\begin{equation*}
\frac{1}{\Gamma \left( \alpha +1\right) }\left \{ \frac{\left( \underset{%
x\in \left[ a,b\right] }{\sup }\omega _{1}\left( D_{x-}^{\alpha }f,\frac{1}{%
n^{\beta }}\right) _{\left[ a,x\right] }+\underset{x\in \left[ a,b\right] }{%
\sup }\omega _{1}\left( D_{\ast x}^{\alpha }f,\frac{1}{n^{\beta }}\right) _{%
\left[ x,b\right] }\right) }{n^{\alpha \beta }}+\right.
\end{equation*}%
\begin{equation}
\left. \left. \frac{\left( b-a\right) ^{\alpha }}{4m\left( n^{1-\beta
}-2\right) ^{2m}}\left( \underset{x\in \left[ a,b\right] }{\sup }\left \Vert
D_{x-}^{\alpha }f\right \Vert _{\infty ,\left[ a,x\right] }+\underset{x\in %
\left[ a,b\right] }{\sup }\left \Vert D_{\ast x}^{\alpha }f\right \Vert
_{\infty ,\left[ x,b\right] }\right) \right \} \right \} .  \tag{97}
\label{97}
\end{equation}%
Above, when $N=1$ the sum $\sum_{j=1}^{N-1}\cdot =0.$

As we see here we obtain $X$-valued fractionally type pointwise and uniform
convergence with rates of $A_{n}\rightarrow I$ the unit operator, as $%
n\rightarrow \infty .$
\end{theorem}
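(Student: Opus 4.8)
The plan is to mimic the proof of Theorem \ref{t14.}, but to replace the integer-order Taylor expansion by the two one-sided Caputo--Bochner fractional Taylor formulas (\ref{80}) and (\ref{81}), both expanded about the running point $x$ rather than about an endpoint. Write $S:=\sum_{k=\left\lceil na\right\rceil}^{\left\lfloor nb\right\rfloor}\Phi\left(nx-k\right)$. For each node with $\frac{k}{n}\geq x$ I apply the left formula (\ref{80}) with base point $x$, and for $\frac{k}{n}\leq x$ the right formula (\ref{81}) with base point $x$; in both cases the polynomial part is $\sum_{j=0}^{N-1}\frac{f^{(j)}(x)}{j!}\left(\frac{k}{n}-x\right)^{j}$, and the remainder $\gamma_{k}$ is the corresponding fractional integral against $D_{\ast x}^{\alpha}f$ or $D_{x-}^{\alpha}f$. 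Dividing the weighted sum by $S$ and using $A_{n}(1)=1$, the $j=0$ term reproduces $f(x)$ while the terms $1\leq j\leq N-1$ assemble into $\sum_{j=1}^{N-1}\frac{f^{(j)}(x)}{j!}A_{n}\left(\left(\cdot-x\right)^{j}\right)(x)$, so the left-hand side of (\ref{94}) equals $\frac{1}{S}\sum_{k}\gamma_{k}\Phi(nx-k)$.

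The core estimate is the bound on $\left\Vert\gamma_{k}\right\Vert$. Since the base point is $x$, Lemmas \ref{l16} and \ref{l18} give $D_{\ast x}^{\alpha}f(x)=0$ and $D_{x-}^{\alpha}f(x)=0$, whence $\left\Vert D_{\ast x}^{\alpha}f(z)\right\Vert=\left\Vert D_{\ast x}^{\alpha}f(z)-D_{\ast x}^{\alpha}f(x)\right\Vert\leq\omega_{1}\left(D_{\ast x}^{\alpha}f,\left\vert z-x\right\vert\right)_{[x,b]}$, and similarly for the right derivative. I then split the range of $k$ according to $\left\vert\frac{k}{n}-x\right\vert\leq\frac{1}{n^{\beta}}$ or $>\frac{1}{n^{\beta}}$. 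For the close nodes, pulling the modulus out at $\delta=\frac{1}{n^{\beta}}$ and integrating $(\cdot)^{\alpha-1}$ yields $\left\Vert\gamma_{k}\right\Vert\leq\frac{1}{\Gamma(\alpha+1)n^{\alpha\beta}}\left[\omega_{1}(D_{x-}^{\alpha}f,\frac{1}{n^{\beta}})_{[a,x]}+\omega_{1}(D_{\ast x}^{\alpha}f,\frac{1}{n^{\beta}})_{[x,b]}\right]$, the extra nonnegative term only weakening the bound so that one estimate serves both sides. For the far nodes I instead bound the fractional derivative by its sup-norm, getting $\left\Vert\gamma_{k}\right\Vert\leq\frac{1}{\Gamma(\alpha+1)}\left[\left\Vert D_{x-}^{\alpha}f\right\Vert_{\infty,[a,x]}(x-a)^{\alpha}+\left\Vert D_{\ast x}^{\alpha}f\right\Vert_{\infty,[x,b]}(b-x)^{\alpha}\right]$, while the far weights sum to less than $\frac{1}{4m(n^{1-\beta}-2)^{2m}}$ by Theorem \ref{t4} (applied with $\beta$ in place of $\alpha$, since there $\left\vert nx-k\right\vert>n^{1-\beta}$).

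Assembling, I use $\sum_{\text{close}}\Phi(nx-k)\leq 1$ together with the Theorem \ref{t4} tail bound, then multiply through by $\frac{1}{S}<2\left(\sqrt[2m]{1+4^{m}}\right)$ from Theorem \ref{t5}; this produces exactly (\ref{94}), proving (i). Part (ii) is the special case where $f^{(j)}(x)=0$ for $1\leq j\leq N-1$, so the correction sum drops out. For (iii) I move the correction sum to the right via the triangle inequality and estimate each $\left\vert A_{n}\left(\left(\cdot-x\right)^{j}\right)(x)\right\vert=\frac{1}{S}\left\vert A_{n}^{\ast}\left(\left(\cdot-x\right)^{j}\right)\right\vert$ using (\ref{76}) (with $\beta$) and Theorem \ref{t5}, which supplies the displayed $\sum_{j=1}^{N-1}$ contribution. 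Part (iv) then follows by taking $\sup_{x\in[a,b]}$ and using $(x-a)^{\alpha},(b-x)^{\alpha}\leq(b-a)^{\alpha}$ and $\left\Vert f^{(j)}(x)\right\Vert\leq\left\Vert f^{(j)}\right\Vert_{\infty}$.

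The main obstacle is conceptual rather than computational: one must expand about the variable base point $x$ and therefore track both one-sided fractional derivatives $D_{\ast x}^{\alpha}f$ (for nodes to the right of $x$) and $D_{x-}^{\alpha}f$ (for nodes to its left), the vanishing at the base point (Lemmas \ref{l16}, \ref{l18}) being essential to turn $\left\Vert D^{\alpha}f(z)\right\Vert$ into a modulus of continuity; Convention \ref{c21} keeps these definitions consistent across $[a,b]$. The remaining delicate point is the passage to (iv): the finiteness and legitimacy of $\sup_{x\in[a,b]}\omega_{1}(D_{x-}^{\alpha}f,\delta)_{[a,x]}$ and $\sup_{x\in[a,b]}\omega_{1}(D_{\ast x}^{\alpha}f,\delta)_{[x,b]}$ rest on the joint continuity of $(x,x_{0})\mapsto D_{\ast x_{0}}^{\alpha}f(x),D_{x_{0}-}^{\alpha}f(x)$ (Corollary \ref{c26}) and on the continuity in $x$ of these local moduli (Theorems \ref{t27.} and \ref{t28.}).
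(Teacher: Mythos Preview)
Your proposal is correct and follows essentially the same approach as the paper: both expand $f\left(\frac{k}{n}\right)$ via the two one-sided Caputo--Bochner Taylor formulas about the running point $x$, exploit $D_{\ast x}^{\alpha}f(x)=D_{x-}^{\alpha}f(x)=0$ to convert the remainder into a modulus-of-continuity quantity, split nodes by $\left\vert\frac{k}{n}-x\right\vert\lessgtr\frac{1}{n^{\beta}}$, and then apply Theorems \ref{t4} and \ref{t5}. The only cosmetic difference is that the paper works throughout with $A_{n}^{\ast}$ and multiplies by $\frac{1}{S}$ at the end (see (\ref{105}) and (\ref{144})), while you divide by $S$ up front and merge the left/right remainder bounds into a single inequality; for part (iv) the paper secures finiteness of the suprema via the explicit estimates (\ref{92}), (\ref{93}), (\ref{136}), (\ref{140}) rather than the continuity results you cite, but either route suffices.
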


\begin{proof}
Let $x\in \left[ a,b\right] $. We have that $D_{x-}^{\alpha }f\left(
x\right) =D_{\ast x}^{\alpha }f\left( x\right) =0.$

From Theorem \ref{t19.}, we get by the left Caputo fractional Taylor formula
that 
\begin{equation}
f\left( \frac{k}{n}\right) =\sum_{j=0}^{N-1}\frac{f^{\left( j\right) }\left(
x\right) }{j!}\left( \frac{k}{n}-x\right) ^{j}+  \tag{98}  \label{98}
\end{equation}%
\begin{equation*}
\frac{1}{\Gamma \left( \alpha \right) }\int_{x}^{\frac{k}{n}}\left( \frac{k}{%
n}-J\right) ^{\alpha -1}\left( D_{\ast x}^{\alpha }f\left( J\right) -D_{\ast
x}^{\alpha }f\left( x\right) \right) dJ,
\end{equation*}%
for all $x\leq \frac{k}{n}\leq b.$

Also from Theorem \ref{t20}, using the right Caputo fractional Taylor
formula we get%
\begin{equation}
f\left( \frac{k}{n}\right) =\sum_{j=0}^{N-1}\frac{f^{\left( j\right) }\left(
x\right) }{j!}\left( \frac{k}{n}-x\right) ^{j}+  \tag{99}  \label{99}
\end{equation}%
\begin{equation*}
\frac{1}{\Gamma \left( \alpha \right) }\int_{\frac{k}{n}}^{x}\left( J-\frac{k%
}{n}\right) ^{\alpha -1}\left( D_{x-}^{\alpha }f\left( J\right)
-D_{x-}^{\alpha }f\left( x\right) \right) dJ,
\end{equation*}%
for all $a\leq \frac{k}{n}\leq x.$

Hence we have 
\begin{equation}
f\left( \frac{k}{n}\right) \Phi \left( nx-k\right) =\sum_{j=0}^{N-1}\frac{%
f^{\left( j\right) }\left( x\right) }{j!}\Phi \left( nx-k\right) \left( 
\frac{k}{n}-x\right) ^{j}+  \tag{100}  \label{100}
\end{equation}%
\begin{equation*}
\frac{\Phi \left( nx-k\right) }{\Gamma \left( \alpha \right) }\int_{x}^{%
\frac{k}{n}}\left( \frac{k}{n}-J\right) ^{\alpha -1}\left( D_{\ast
x}^{\alpha }f\left( J\right) -D_{\ast x}^{\alpha }f\left( x\right) \right)
dJ,
\end{equation*}%
for all $x\leq \frac{k}{n}\leq b$, iff $\left\lceil nx\right\rceil \leq
k\leq \left\lfloor nb\right\rfloor $, and 
\begin{equation}
f\left( \frac{k}{n}\right) \Phi \left( nx-k\right) =\sum_{j=0}^{N-1}\frac{%
f^{\left( j\right) }\left( x\right) }{j!}\Phi \left( nx-k\right) \left( 
\frac{k}{n}-x\right) ^{j}+  \tag{101}  \label{101}
\end{equation}%
\begin{equation*}
\frac{\Phi \left( nx-k\right) }{\Gamma \left( \alpha \right) }\int_{\frac{k}{%
n}}^{x}\left( J-\frac{k}{n}\right) ^{\alpha -1}\left( D_{x-}^{\alpha
}f\left( J\right) -D_{x-}^{\alpha }f\left( x\right) \right) dJ,
\end{equation*}%
for all $a\leq \frac{k}{n}\leq x$, iff $\left\lceil na\right\rceil \leq
k\leq \left\lfloor nx\right\rfloor .$

Therefore it holds 
\begin{equation}
\sum_{k=\left \lfloor nx\right \rfloor +1}^{\left \lfloor nb\right \rfloor
}f\left( \frac{k}{n}\right) \Phi \left( nx-k\right) =\sum_{j=0}^{N-1}\frac{%
f^{\left( j\right) }\left( x\right) }{j!}\sum_{k=\left \lfloor nx\right
\rfloor +1}^{\left \lfloor nb\right \rfloor }\Phi \left( nx-k\right) \left( 
\frac{k}{n}-x\right) ^{j}+  \tag{102}  \label{102}
\end{equation}%
\begin{equation*}
\frac{1}{\Gamma \left( \alpha \right) }\sum_{k=\left \lfloor nx\right
\rfloor +1}^{\left \lfloor nb\right \rfloor }\Phi \left( nx-k\right)
\int_{x}^{\frac{k}{n}}\left( \frac{k}{n}-J\right) ^{\alpha -1}\left( D_{\ast
x}^{\alpha }f\left( J\right) -D_{\ast x}^{\alpha }f\left( x\right) \right)
dJ,
\end{equation*}%
and 
\begin{equation}
\sum_{k=\left \lceil na\right \rceil }^{\left \lfloor nx\right \rfloor
}f\left( \frac{k}{n}\right) \Phi \left( nx-k\right) =\sum_{j=0}^{N-1}\frac{%
f^{\left( j\right) }\left( x\right) }{j!}\sum_{k=\left \lceil na\right
\rceil }^{\left \lfloor nx\right \rfloor }\Phi \left( nx-k\right) \left( 
\frac{k}{n}-x\right) ^{j}+  \tag{103}  \label{103}
\end{equation}%
\begin{equation*}
\frac{1}{\Gamma \left( \alpha \right) }\sum_{k=\left \lceil na\right \rceil
}^{\left \lfloor nx\right \rfloor }\Phi \left( nx-k\right) \int_{\frac{k}{n}%
}^{x}\left( J-\frac{k}{n}\right) ^{\alpha -1}\left( D_{x-}^{\alpha }f\left(
J\right) -D_{x-}^{\alpha }f\left( x\right) \right) dJ.
\end{equation*}%
Adding the last two equalities (\ref{102}) and (\ref{103}) obtain 
\begin{equation}
A_{n}^{\ast }\left( f,x\right) =\sum_{k=\left \lceil na\right \rceil
}^{\left \lfloor nb\right \rfloor }f\left( \frac{k}{n}\right) \Phi \left(
nx-k\right) =  \tag{104}  \label{104}
\end{equation}%
\begin{equation*}
\sum_{j=0}^{N-1}\frac{f^{\left( j\right) }\left( x\right) }{j!}\sum_{k=\left
\lceil na\right \rceil }^{\left \lfloor nb\right \rfloor }\Phi \left(
nx-k\right) \left( \frac{k}{n}-x\right) ^{j}+
\end{equation*}%
\begin{equation*}
\frac{1}{\Gamma \left( \alpha \right) }\left \{ \sum_{k=\left \lceil
na\right \rceil }^{\left \lfloor nx\right \rfloor }\Phi \left( nx-k\right)
\int_{\frac{k}{n}}^{x}\left( J-\frac{k}{n}\right) ^{\alpha -1}\left(
D_{x-}^{\alpha }f\left( J\right) -D_{x-}^{\alpha }f\left( x\right) \right)
dJ+\right.
\end{equation*}%
\begin{equation*}
\left. \sum_{k=\left \lfloor nx\right \rfloor +1}^{\left \lfloor nb\right
\rfloor }\Phi \left( nx-k\right) \int_{x}^{\frac{k}{n}}\left( \frac{k}{n}%
-J\right) ^{\alpha -1}\left( D_{\ast x}^{\alpha }f\left( J\right) -D_{\ast
x}^{\alpha }f\left( x\right) \right) dJ\right \} .
\end{equation*}%
So we have derived 
\begin{equation}
A_{n}^{\ast }\left( f,x\right) -f\left( x\right) \left( \sum_{k=\left \lceil
na\right \rceil }^{\left \lfloor nb\right \rfloor }\Phi \left( nx-k\right)
\right) =  \tag{105}  \label{105}
\end{equation}%
\begin{equation*}
\sum_{j=1}^{N-1}\frac{f^{\left( j\right) }\left( x\right) }{j!}A_{n}^{\ast
}\left( \left( \cdot -x\right) ^{j}\right) +u_{n}\left( x\right) ,
\end{equation*}%
where 
\begin{equation*}
u_{n}\left( x\right) :=\frac{1}{\Gamma \left( \alpha \right) }\left \{
\sum_{k=\left \lceil na\right \rceil }^{\left \lfloor nx\right \rfloor }\Phi
\left( nx-k\right) \int_{\frac{k}{n}}^{x}\left( J-\frac{k}{n}\right)
^{\alpha -1}\left( D_{x-}^{\alpha }f\left( J\right) -D_{x-}^{\alpha }f\left(
x\right) \right) dJ\right.
\end{equation*}%
\begin{equation}
\left. +\sum_{k=\left \lfloor nx\right \rfloor +1}^{\left \lfloor nb\right
\rfloor }\Phi \left( nx-k\right) \int_{x}^{\frac{k}{n}}\left( \frac{k}{n}%
-J\right) ^{\alpha -1}\left( D_{\ast x}^{\alpha }f\left( J\right) -D_{\ast
x}^{\alpha }f\left( x\right) \right) dJ\right \} .  \tag{106}  \label{106}
\end{equation}%
We set 
\begin{equation}
u_{1n}\left( x\right) :=\frac{1}{\Gamma \left( \alpha \right) }\sum_{k=\left
\lceil na\right \rceil }^{\left \lfloor nx\right \rfloor }\Phi \left(
nx-k\right) \int_{\frac{k}{n}}^{x}\left( J-\frac{k}{n}\right) ^{\alpha
-1}\left( D_{x-}^{\alpha }f\left( J\right) -D_{x-}^{\alpha }f\left( x\right)
\right) dJ,  \tag{107}  \label{107}
\end{equation}%
and 
\begin{equation}
u_{2n}:=\frac{1}{\Gamma \left( \alpha \right) }\sum_{k=\left \lceil nx\right
\rceil +1}^{\left \lfloor nb\right \rfloor }\Phi \left( nx-k\right)
\int_{x}^{\frac{k}{n}}\left( \frac{k}{n}-J\right) ^{\alpha -1}\left( D_{\ast
x}^{\alpha }f\left( J\right) -D_{\ast x}^{\alpha }f\left( x\right) \right)
dJ,  \tag{108}  \label{108}
\end{equation}%
i.e. 
\begin{equation}
u_{n}\left( x\right) =u_{1n}\left( x\right) +u_{2n}\left( x\right) . 
\tag{109}  \label{109}
\end{equation}%
We assume $b-a>\frac{1}{n^{\beta }}$, $0<\beta <1$, which is always the case
for large enough $n\in \mathbb{N}$, that is when $n>\left \lceil \left(
b-a\right) ^{-\frac{1}{\beta }}\right \rceil $. It is always true that
either $\left \vert \frac{k}{n}-x\right \vert \leq \frac{1}{n^{\beta }}$ or $%
\left \vert \frac{k}{n}-x\right \vert >\frac{1}{n^{\beta }}.$

For $k=\left \lceil na\right \rceil ,...,\left \lfloor nx\right \rfloor $,
we consider 
\begin{equation}
\gamma _{1k}:=\left \Vert \int_{\frac{k}{n}}^{x}\left( J-\frac{k}{n}\right)
^{\alpha -1}\left( D_{x-}^{\alpha }f\left( J\right) -D_{x-}^{\alpha }f\left(
x\right) \right) dJ\right \Vert =  \tag{110}  \label{110}
\end{equation}%
\begin{equation*}
\left \Vert \int_{\frac{k}{n}}^{x}\left( J-\frac{k}{n}\right) ^{\alpha
-1}D_{x-}^{\alpha }f\left( J\right) dJ\right \Vert \leq \int_{\frac{k}{n}%
}^{x}\left( J-\frac{k}{n}\right) ^{\alpha -1}\left \Vert D_{x-}^{\alpha
}f\left( J\right) \right \Vert dJ\leq
\end{equation*}%
\begin{equation}
\left \Vert D_{x-}^{\alpha }f\left( J\right) \right \Vert _{\infty ,\left[
a,x\right] }\frac{\left( x-\frac{k}{n}\right) ^{\alpha }}{\alpha }\leq \left
\Vert D_{x-}^{\alpha }f\right \Vert _{\infty ,\left[ a,x\right] }\frac{%
\left( x-a\right) ^{\alpha }}{\alpha }.  \tag{111}  \label{111}
\end{equation}%
That is 
\begin{equation}
\gamma _{1k}\leq \left \Vert D_{x-}^{\alpha }f\right \Vert _{\infty ,\left[
a,x\right] }\frac{\left( x-a\right) ^{\alpha }}{\alpha },  \tag{112}
\label{112}
\end{equation}%
for $k=\left \lceil na\right \rceil ,...,\left \lfloor nx\right \rfloor .$

Also we have in case of $\left \vert \frac{k}{n}-x\right \vert \leq \frac{1}{%
n^{\beta }}$ that 
\begin{equation*}
\gamma _{1k}\leq \int_{\frac{k}{n}}^{x}\left( J-\frac{k}{n}\right) ^{\alpha
-1}\left \Vert D_{x-}^{\alpha }f\left( J\right) -D_{x-}^{\alpha }f\left(
x\right) \right \Vert dJ\leq
\end{equation*}%
\begin{equation}
\int_{\frac{k}{n}}^{x}\left( J-\frac{k}{n}\right) ^{\alpha -1}\omega
_{1}\left( D_{x-}^{\alpha }f,\left \vert J-x\right \vert \right) _{\left[ a,x%
\right] }dJ\leq  \tag{113}  \label{113}
\end{equation}%
\begin{equation*}
\omega _{1}\left( D_{x-}^{\alpha }f,\left \vert x-\frac{k}{n}\right \vert
\right) _{\left[ a,x\right] }\int_{\frac{k}{n}}^{x}\left( J-\frac{k}{n}%
\right) ^{\alpha -1}dJ\leq
\end{equation*}%
\begin{equation*}
\omega _{1}\left( D_{x-}^{\alpha }f,\frac{1}{n^{\beta }}\right) _{\left[ a,x%
\right] }\frac{\left( x-\frac{k}{n}\right) ^{\alpha }}{\alpha }\leq \omega
_{1}\left( D_{x-}^{\alpha }f,\frac{1}{n^{\beta }}\right) _{\left[ a,x\right]
}\frac{1}{\alpha n^{\alpha \beta }}.
\end{equation*}%
That is when $\left \vert \frac{k}{n}-x\right \vert \leq \frac{1}{n^{\beta }}
$, then 
\begin{equation}
\gamma _{1k}\leq \frac{\omega _{1}\left( D_{x-}^{\alpha }f,\frac{1}{n^{\beta
}}\right) _{\left[ a,x\right] }}{\alpha n^{\alpha \beta }}.  \tag{114}
\label{114}
\end{equation}%
Consequently we obtain 
\begin{equation}
\left \Vert u_{1n}\left( x\right) \right \Vert \leq \frac{1}{\Gamma \left(
\alpha \right) }\sum_{k=\left \lceil na\right \rceil }^{\left \lfloor
nx\right \rfloor }\Phi \left( nx-k\right) \gamma _{1k}=  \tag{115}
\label{115}
\end{equation}%
\begin{equation*}
\frac{1}{\Gamma \left( \alpha \right) }\left \{ \sum_{\left \{ 
\begin{array}{l}
k=\left \lceil na\right \rceil \\ 
:\left \vert \frac{k}{n}-x\right \vert \leq \frac{1}{n^{\beta }}%
\end{array}%
\right. }^{\left \lfloor nx\right \rfloor }\Phi \left( nx-k\right) \gamma
_{1k}+\sum_{\left \{ 
\begin{array}{l}
k=\left \lceil na\right \rceil \\ 
:\left \vert \frac{k}{n}-x\right \vert >\frac{1}{n^{\beta }}%
\end{array}%
\right. }^{\left \lfloor nx\right \rfloor }\Phi \left( nx-k\right) \gamma
_{1k}\right \} \leq
\end{equation*}%
\begin{equation*}
\frac{1}{\Gamma \left( \alpha \right) }\left \{ \left( \sum_{\left \{ 
\begin{array}{l}
k=\left \lceil na\right \rceil \\ 
:\left \vert \frac{k}{n}-x\right \vert \leq \frac{1}{n^{\beta }}%
\end{array}%
\right. }^{\left \lfloor nx\right \rfloor }\Phi \left( nx-k\right) \right) 
\frac{\omega _{1}\left( D_{x-}^{\alpha }f,\frac{1}{n^{\beta }}\right) _{%
\left[ a,x\right] }}{\alpha n^{\alpha \beta }}+\right.
\end{equation*}%
\begin{equation}
\left. \left( \sum_{\left \{ 
\begin{array}{l}
k=\left \lceil na\right \rceil \\ 
:\left \vert \frac{k}{n}-x\right \vert >\frac{1}{n^{\beta }}%
\end{array}%
\right. }^{\left \lfloor nx\right \rfloor }\Phi \left( nx-k\right) \right)
\left \Vert D_{x-}^{\alpha }f\right \Vert _{\infty ,\left[ a,x\right] }\frac{%
\left( x-a\right) ^{\alpha }}{\alpha }\right \} \leq  \tag{116}  \label{116}
\end{equation}%
\begin{equation*}
\frac{1}{\Gamma \left( \alpha +1\right) }\left \{ \frac{\omega _{1}\left(
D_{x-}^{\alpha }f,\frac{1}{n^{\beta }}\right) _{\left[ a,x\right] }}{%
n^{\alpha \beta }}+\right.
\end{equation*}%
\begin{equation*}
\left. \left( \sum_{\left \{ 
\begin{array}{l}
k=-\infty \\ 
:\left \vert nx-k\right \vert >n^{1-\beta }%
\end{array}%
\right. }^{\infty }\Phi \left( nx-k\right) \right) \left \Vert
D_{x-}^{\alpha }f\right \Vert _{\infty ,\left[ a,x\right] }\left( x-a\right)
^{\alpha }\right \} \leq
\end{equation*}%
\begin{equation*}
\frac{1}{\Gamma \left( \alpha +1\right) }\left \{ \frac{\omega _{1}\left(
D_{x-}^{\alpha }f,\frac{1}{n^{\beta }}\right) _{\left[ a,x\right] }}{%
n^{\alpha \beta }}+\frac{\left \Vert D_{x-}^{\alpha }f\right \Vert _{\infty ,%
\left[ a,x\right] }\left( x-a\right) ^{\alpha }}{4m\left( n^{1-\beta
}-2\right) ^{2m}}\right \} .
\end{equation*}%
So we have proved that 
\begin{equation}
\left \Vert u_{1n}\left( x\right) \right \Vert \leq \frac{1}{\Gamma \left(
\alpha +1\right) }\left \{ \frac{\omega _{1}\left( D_{x-}^{\alpha }f,\frac{1%
}{n^{\beta }}\right) _{\left[ a,x\right] }}{n^{\alpha \beta }}+\frac{\left
\Vert D_{x-}^{\alpha }f\right \Vert _{\infty ,\left[ a,x\right] }\left(
x-a\right) ^{\alpha }}{4m\left( n^{1-\beta }-2\right) ^{2m}}\right \} . 
\tag{117}  \label{117}
\end{equation}

Next when $k=\left \lfloor nx\right \rfloor +1,...,\left \lfloor
nb\right
\rfloor $ we consider 
\begin{equation}
\gamma _{2k}:=\left \Vert \int_{x}^{\frac{k}{n}}\left( \frac{k}{n}-J\right)
^{\alpha -1}\left( D_{\ast x}^{\alpha }f\left( J\right) -D_{\ast x}^{\alpha
}f\left( x\right) \right) dJ\right \Vert \leq  \tag{118}  \label{118}
\end{equation}%
\begin{equation*}
\int_{x}^{\frac{k}{n}}\left( \frac{k}{n}-J\right) ^{\alpha -1}\left \Vert
D_{\ast x}^{\alpha }f\left( J\right) -D_{\ast x}^{\alpha }f\left( x\right)
\right \Vert dJ=
\end{equation*}%
\begin{equation*}
\int_{x}^{\frac{k}{n}}\left( \frac{k}{n}-J\right) ^{\alpha -1}\left \Vert
D_{\ast x}^{\alpha }f\left( J\right) \right \Vert dJ\leq
\end{equation*}%
\begin{equation}
\left \Vert D_{\ast x}^{\alpha }f\right \Vert _{\infty ,\left[ x,b\right] }%
\frac{\left( \frac{k}{n}-x\right) ^{\alpha }}{\alpha }\leq \left \Vert
D_{\ast x}^{\alpha }f\right \Vert _{\infty ,\left[ x,b\right] }\frac{\left(
b-x\right) ^{\alpha }}{\alpha }.  \tag{119}  \label{119}
\end{equation}%
Therefore when $k=\left \lfloor nx\right \rfloor +1,...,\left \lfloor
nb\right \rfloor $ we get that

That is 
\begin{equation}
\gamma _{2k}\leq \left \Vert D_{\ast x}^{\alpha }f\right \Vert _{\infty , 
\left[ x,b\right] }\frac{\left( b-x\right) ^{\alpha }}{\alpha }.  \tag{120}
\label{120}
\end{equation}%
In case of $\left \vert \frac{k}{n}-x\right \vert \leq \frac{1}{n^{\beta }}$
we have 
\begin{equation*}
\gamma _{2k}\leq \int_{x}^{\frac{k}{n}}\left( \frac{k}{n}-J\right) ^{\alpha
-1}\omega _{1}\left( D_{\ast x}^{\alpha }f,\left \vert J-x\right \vert
\right) _{\left[ x,b\right] }dJ\leq
\end{equation*}%
\begin{equation}
\omega _{1}\left( D_{\ast x}^{\alpha }f,\left \vert \frac{k}{n}-x\right
\vert \right) _{\left[ x,b\right] }\int_{x}^{\frac{k}{n}}\left( \frac{k}{n}%
-J\right) ^{\alpha -1}dJ\leq  \tag{121}  \label{121}
\end{equation}%
\begin{equation*}
\omega _{1}\left( D_{\ast x}^{\alpha }f,\frac{1}{n^{\beta }}\right) _{\left[
x,b\right] }\frac{\left( \frac{k}{n}-x\right) ^{\alpha }}{\alpha }\leq
\omega _{1}\left( D_{\ast x}^{\alpha }f,\frac{1}{n^{\beta }}\right) _{\left[
x,b\right] }\frac{1}{\alpha n^{\alpha \beta }}.
\end{equation*}%
So when $\left \vert \frac{k}{n}-x\right \vert \leq \frac{1}{n^{\beta }}$ we
derived that 
\begin{equation}
\gamma _{2k}\leq \frac{\omega _{1}\left( D_{\ast x}^{\alpha }f,\frac{1}{%
n^{\beta }}\right) _{\left[ x,b\right] }}{\alpha n^{\alpha \beta }}. 
\tag{122}  \label{122}
\end{equation}%
Similarly we have that%
\begin{equation*}
\left \Vert u_{2n}\left( x\right) \right \Vert \leq \frac{1}{\Gamma \left(
\alpha \right) }\left( \sum_{k=\left \lfloor nx\right \rfloor +1}^{\left
\lfloor nb\right \rfloor }\Phi \left( nx-k\right) \gamma _{2k}\right) =
\end{equation*}%
\begin{equation}
\frac{1}{\Gamma \left( \alpha \right) }\left \{ \sum_{\left \{ 
\begin{array}{l}
k=\left \lfloor nx\right \rfloor +1 \\ 
:\left \vert \frac{k}{n}-x\right \vert \leq \frac{1}{n^{\beta }}%
\end{array}%
\right. }^{\left \lfloor nb\right \rfloor }\Phi \left( nx-k\right) \gamma
_{2k}+\sum_{\left \{ 
\begin{array}{l}
k=\left \lfloor nx\right \rfloor +1 \\ 
:\left \vert \frac{k}{n}-x\right \vert >\frac{1}{n^{\beta }}%
\end{array}%
\right. }^{\left \lfloor nb\right \rfloor }\Phi \left( nx-k\right) \gamma
_{2k}\right \} \leq  \tag{123}  \label{123}
\end{equation}%
\begin{equation*}
\frac{1}{\Gamma \left( \alpha \right) }\left \{ \left( \sum_{\left \{ 
\begin{array}{l}
k=\left \lfloor nx\right \rfloor +1 \\ 
:\left \vert \frac{k}{n}-x\right \vert \leq \frac{1}{n^{\beta }}%
\end{array}%
\right. }^{\left \lfloor nb\right \rfloor }\Phi \left( nx-k\right) \right) 
\frac{\omega _{1}\left( D_{\ast x}^{\alpha }f,\frac{1}{n^{\beta }}\right) _{%
\left[ x,b\right] }}{\alpha n^{\alpha \beta }}+\right.
\end{equation*}%
\begin{equation*}
\left. \left( \sum_{\left \{ 
\begin{array}{l}
k=\left \lfloor nx\right \rfloor +1 \\ 
:\left \vert \frac{k}{n}-x\right \vert >\frac{1}{n^{\beta }}%
\end{array}%
\right. }^{\left \lfloor nb\right \rfloor }\Phi \left( nx-k\right) \right)
\left \Vert D_{\ast x}^{\alpha }f\right \Vert _{\infty ,\left[ x,b\right] }%
\frac{\left( b-x\right) ^{\alpha }}{\alpha }\right \} \leq
\end{equation*}%
\begin{equation*}
\frac{1}{\Gamma \left( \alpha +1\right) }\left \{ \frac{\omega _{1}\left(
D_{\ast x}^{\alpha }f,\frac{1}{n^{\beta }}\right) _{\left[ x,b\right] }}{%
n^{\alpha \beta }}+\right.
\end{equation*}%
\begin{equation}
\left. \left( \sum_{\left \{ 
\begin{array}{l}
k=-\infty \\ 
:\left \vert \frac{k}{n}-x\right \vert >\frac{1}{n^{\beta }}%
\end{array}%
\right. }^{\infty }\Phi \left( nx-k\right) \right) \left \Vert D_{\ast
x}^{\alpha }f\right \Vert _{\infty ,\left[ x,b\right] }\left( b-x\right)
^{\alpha }\right \} \leq  \tag{124}  \label{124}
\end{equation}%
\begin{equation*}
\frac{1}{\Gamma \left( \alpha +1\right) }\left \{ \frac{\omega _{1}\left(
D_{\ast x}^{\alpha }f,\frac{1}{n^{\beta }}\right) _{\left[ x,b\right] }}{%
n^{\alpha \beta }}+\frac{\left \Vert D_{\ast x}^{\alpha }f\right \Vert
_{\infty ,\left[ x,b\right] }\left( b-x\right) ^{\alpha }}{4m\left(
n^{1-\beta }-2\right) ^{2m}}\right \} .
\end{equation*}%
So we have proved that 
\begin{equation}
\left \Vert u_{2n}\left( x\right) \right \Vert \leq \frac{1}{\Gamma \left(
\alpha +1\right) }\left \{ \frac{\omega _{1}\left( D_{\ast x}^{\alpha }f,%
\frac{1}{n^{\beta }}\right) _{\left[ x,b\right] }}{n^{\alpha \beta }}+\frac{%
\left \Vert D_{\ast x}^{\alpha }f\right \Vert _{\infty ,\left[ x,b\right]
}\left( b-x\right) ^{\alpha }}{4m\left( n^{1-\beta }-2\right) ^{2m}}\right
\} .  \tag{125}  \label{125}
\end{equation}%
Therefore 
\begin{equation*}
\left \Vert u_{n}\left( x\right) \right \Vert \leq \left \Vert u_{1n}\left(
x\right) \right \Vert +\left \Vert u_{2n}\left( x\right) \right \Vert \leq
\end{equation*}%
\begin{equation}
\frac{1}{\Gamma \left( \alpha +1\right) }\left \{ \frac{\omega _{1}\left(
D_{x-}^{\alpha }f,\frac{1}{n^{\beta }}\right) _{\left[ a,x\right] }+\omega
_{1}\left( D_{\ast x}^{\alpha }f,\frac{1}{n^{\beta }}\right) _{\left[ x,b%
\right] }}{n^{\alpha \beta }}+\right.  \tag{126}  \label{126}
\end{equation}%
\begin{equation*}
\left. \frac{1}{4m\left( n^{1-\beta }-2\right) ^{2m}}\left( \left \Vert
D_{x-}^{\alpha }f\right \Vert _{\infty ,\left[ a,x\right] }\left( x-a\right)
^{\alpha }+\left \Vert D_{\ast x}^{\alpha }f\right \Vert _{\infty ,\left[ x,b%
\right] }\left( b-x\right) ^{\alpha }\right) \right \} .
\end{equation*}%
From the proof of Theorem \ref{t14.} we get that 
\begin{equation}
\left \vert A_{n}^{\ast }\left( \left( \cdot -x\right) ^{j}\right) \left(
x\right) \right \vert \leq \frac{1}{n^{\beta j}}+\frac{\left( b-a\right) ^{j}%
}{4m\left( n^{1-\beta }-2\right) ^{2m}},  \tag{127}  \label{127}
\end{equation}%
for $j=1,...,N-1,$ $\forall $ $x\in \left[ a,b\right] .$

Putting things together, we have established 
\begin{equation}
\left\Vert A_{n}^{\ast }\left( f,x\right) -f\left( x\right) \left(
\sum_{k=\left\lceil na\right\rceil }^{\left\lfloor nb\right\rfloor }\Phi
\left( nx-k\right) \right) \right\Vert \leq \sum_{j=1}^{N-1}\frac{\left\Vert
f^{\left( j\right) }\left( x\right) \right\Vert }{j!}  \tag{128}  \label{128}
\end{equation}%
\begin{equation*}
\left[ \frac{1}{n^{\beta j}}+\frac{\left( b-a\right) ^{j}}{4m\left(
n^{1-\alpha }-2\right) ^{2m}}\right] +
\end{equation*}%
\begin{equation*}
\frac{1}{\Gamma \left( \alpha +1\right) }\left\{ \frac{\omega _{1}\left(
D_{x-}^{\alpha }f,\frac{1}{n^{\beta }}\right) _{\left[ a,x\right] }+\omega
_{1}\left( D_{\ast x}^{\alpha }f,\frac{1}{n^{\beta }}\right) _{\left[ x,b%
\right] }}{n^{\alpha \beta }}+\right.
\end{equation*}%
\begin{equation}
\left. \frac{1}{4m\left( n^{1-\beta }-2\right) ^{2m}}\left( \left\Vert
D_{x-}^{\alpha }f\right\Vert _{\infty ,\left[ a,x\right] }\left( x-a\right)
^{\alpha }+\left\Vert D_{\ast x}^{\alpha }f\right\Vert _{\infty ,\left[ x,b%
\right] }\left( b-x\right) ^{\alpha }\right) \right\} =:K_{n}\left( x\right)
.  \tag{129}  \label{129}
\end{equation}%
As a result we derive (see (\ref{41})) 
\begin{equation}
\left\Vert A_{n}\left( f,x\right) -f\left( x\right) \right\Vert \leq 2\left( 
\sqrt[2m]{1+4^{m}}\right) K_{n}\left( x\right) ,\text{ \ }\forall \text{ }%
x\in \left[ a,b\right] .  \tag{130}  \label{130}
\end{equation}%
We further have that 
\begin{equation}
\left\Vert K_{n}\right\Vert _{\infty }\leq \sum_{j=1}^{N-1}\frac{\left\Vert
f^{\left( j\right) }\right\Vert _{\infty }}{j!}\left[ \frac{1}{n^{\beta j}}%
+\left( b-a\right) ^{j}\frac{1}{4m\left( n^{1-\alpha }-2\right) ^{2m}}\right]
+  \tag{131}  \label{131}
\end{equation}%
\begin{equation*}
\frac{1}{\Gamma \left( \alpha +1\right) }\left\{ \frac{\left\{ \underset{%
x\in \left[ a,b\right] }{\sup }\left( \omega _{1}\left( D_{x-}^{\alpha }f,%
\frac{1}{n^{\beta }}\right) _{\left[ a,x\right] }\right) +\underset{x\in %
\left[ a,b\right] }{\sup }\left( \omega _{1}\left( D_{\ast x}^{\alpha }f,%
\frac{1}{n^{\beta }}\right) _{\left[ x,b\right] }\right) \right\} }{%
n^{\alpha \beta }}+\right.
\end{equation*}%
\begin{equation*}
\left( b-a\right) ^{\alpha }\frac{1}{4m\left( n^{1-\beta }-2\right) ^{2m}}
\end{equation*}%
\begin{equation*}
\left. \left\{ \left( \underset{x\in \left[ a,b\right] }{\sup }\left(
\left\Vert D_{x-}^{\alpha }f\right\Vert _{\infty ,\left[ a,x\right] }\right)
+\underset{x\in \left[ a,b\right] }{\sup }\left( \left\Vert D_{\ast
x}^{\alpha }f\right\Vert _{\infty ,\left[ x,b\right] }\right) \right)
\right\} \right\} =:E_{n}.
\end{equation*}%
Hence it holds 
\begin{equation}
\left\Vert A_{n}f-f\right\Vert _{\infty }\leq 2\left( \sqrt[2m]{1+4^{m}}%
\right) E_{n}.  \tag{132}  \label{132}
\end{equation}%
We observe the following:

We have 
\begin{equation}
\left( D_{x-}^{\alpha }f\right) \left( y\right) =\frac{\left( -1\right) ^{N}%
}{\Gamma \left( N-\alpha \right) }\int_{y}^{x}\left( J-y\right) ^{N-\alpha
-1}f^{\left( N\right) }\left( J\right) dJ,\text{ \ }\forall \text{ }y\in %
\left[ a,x\right]  \tag{133}  \label{133}
\end{equation}%
and 
\begin{equation*}
\left \Vert \left( D_{x-}^{\alpha }f\right) \left( y\right) \right \Vert
\leq \frac{1}{\Gamma \left( N-\alpha \right) }\left( \int_{y}^{x}\left(
J-y\right) ^{N-\alpha -1}dJ\right) \left \Vert f^{\left( N\right) }\right
\Vert _{\infty }=
\end{equation*}%
\begin{equation*}
\frac{1}{\Gamma \left( N-\alpha \right) }\frac{\left( x-y\right) ^{N-\alpha }%
}{\left( N-\alpha \right) }\left \Vert f^{\left( N\right) }\right \Vert
_{\infty }=\frac{\left( x-y\right) ^{N-\alpha }}{\Gamma \left( N-\alpha
+1\right) }\left \Vert f^{\left( N\right) }\right \Vert _{\infty }
\end{equation*}%
\begin{equation}
\leq \frac{\left( b-a\right) ^{N-\alpha }}{\Gamma \left( N-\alpha +1\right) }%
\left \Vert f^{\left( N\right) }\right \Vert _{\infty }.  \tag{134}
\label{134}
\end{equation}%
That is 
\begin{equation}
\left \Vert D_{x-}^{\alpha }f\right \Vert _{\infty ,\left[ a,x\right] }\leq 
\frac{\left( b-a\right) ^{N-\alpha }}{\Gamma \left( N-\alpha +1\right) }%
\left \Vert f^{\left( N\right) }\right \Vert _{\infty },  \tag{135}
\label{135}
\end{equation}%
and 
\begin{equation}
\underset{x\in \left[ a,b\right] }{\sup }\left \Vert D_{x-}^{\alpha }f\right
\Vert _{\infty ,\left[ a,x\right] }\leq \frac{\left( b-a\right) ^{N-\alpha }%
}{\Gamma \left( N-\alpha +1\right) }\left \Vert f^{\left( N\right) }\right
\Vert _{\infty }.  \tag{136}  \label{136}
\end{equation}%
Similarly we have 
\begin{equation}
\left( D_{\ast x}^{\alpha }f\right) \left( y\right) =\frac{1}{\Gamma \left(
N-\alpha \right) }\int_{x}^{y}\left( y-t\right) ^{N-\alpha -1}f^{\left(
N\right) }\left( t\right) dt,\text{ \ }\forall \text{ }y\in \left[ x,b\right]
.  \tag{137}  \label{137}
\end{equation}%
Thus we get%
\begin{equation}
\left \Vert \left( D_{\ast x}^{\alpha }f\right) \left( y\right) \right \Vert
\leq \frac{1}{\Gamma \left( N-\alpha \right) }\left( \int_{x}^{y}\left(
y-t\right) ^{N-\alpha -1}dt\right) \left \Vert f^{\left( N\right) }\right
\Vert _{\infty }\leq  \tag{138}  \label{138}
\end{equation}%
\begin{equation*}
\frac{1}{\Gamma \left( N-\alpha \right) }\frac{\left( y-x\right) ^{N-\alpha }%
}{\left( N-\alpha \right) }\left \Vert f^{\left( N\right) }\right \Vert
_{\infty }\leq \frac{\left( b-a\right) ^{N-\alpha }}{\Gamma \left( N-\alpha
+1\right) }\left \Vert f^{\left( N\right) }\right \Vert _{\infty }.
\end{equation*}%
Hence 
\begin{equation}
\left \Vert D_{\ast x}^{\alpha }f\right \Vert _{\infty ,\left[ x,b\right]
}\leq \frac{\left( b-a\right) ^{N-\alpha }}{\Gamma \left( N-\alpha +1\right) 
}\left \Vert f^{\left( N\right) }\right \Vert _{\infty },  \tag{139}
\label{139}
\end{equation}%
and 
\begin{equation}
\underset{x\in \left[ a,b\right] }{\sup }\left \Vert D_{\ast x}^{\alpha
}f\right \Vert _{\infty ,\left[ x,b\right] }\leq \frac{\left( b-a\right)
^{N-\alpha }}{\Gamma \left( N-\alpha +1\right) }\left \Vert f^{\left(
N\right) }\right \Vert _{\infty }.  \tag{140}  \label{140}
\end{equation}%
From (\ref{92}) and (\ref{93}) we get 
\begin{equation}
\underset{x\in \left[ a,b\right] }{\sup }\omega _{1}\left( D_{x-}^{\alpha }f,%
\frac{1}{n^{\beta }}\right) _{\left[ a,x\right] }\leq \frac{2\left \Vert
f^{\left( N\right) }\right \Vert _{\infty }}{\Gamma \left( N-\alpha
+1\right) }\left( b-a\right) ^{N-\alpha },  \tag{141}  \label{141}
\end{equation}%
and 
\begin{equation}
\underset{x\in \left[ a,b\right] }{\sup }\omega _{1}\left( D_{\ast
x}^{\alpha }f,\frac{1}{n^{\beta }}\right) _{\left[ x,b\right] }\leq \frac{%
2\left \Vert f^{\left( N\right) }\right \Vert _{\infty }}{\Gamma \left(
N-\alpha +1\right) }\left( b-a\right) ^{N-\alpha }.  \tag{142}  \label{142}
\end{equation}%
That is $E_{n}<\infty .$

We finally notice that 
\begin{equation*}
A_{n}\left( f,x\right) -\sum_{j=1}^{N-1}\frac{f^{\left( j\right) }\left(
x\right) }{j!}A_{n}\left( \left( \cdot -x\right) ^{j}\right) \left( x\right)
-f\left( x\right) =
\end{equation*}%
\begin{equation*}
\frac{A_{n}^{\ast }\left( f,x\right) }{\left( \sum_{k=\left \lceil na\right
\rceil }^{\left \lfloor nb\right \rfloor }\Phi \left( nx-k\right) \right) }-%
\frac{1}{\left( \sum_{k=\left \lceil na\right \rceil }^{\left \lfloor
nb\right \rfloor }\Phi \left( nx-k\right) \right) }\cdot
\end{equation*}%
\begin{equation*}
\left( \sum_{j=1}^{N-1}\frac{f^{\left( j\right) }\left( x\right) }{j!}%
A_{n}^{\ast }\left( \left( \cdot -x\right) ^{j}\right) \left( x\right)
\right) -f\left( x\right) =
\end{equation*}%
\begin{equation}
\frac{1}{\left( \sum_{k=\left \lceil na\right \rceil }^{\left \lfloor
nb\right \rfloor }\Phi \left( nx-k\right) \right) }\left[ A_{n}^{\ast
}\left( f,x\right) -\left( \sum_{j=1}^{N-1}\frac{f^{\left( j\right) }\left(
x\right) }{j!}A_{n}^{\ast }\left( \left( \cdot -x\right) ^{j}\right) \left(
x\right) \right) \right.  \tag{143}  \label{143}
\end{equation}%
\begin{equation*}
\left. -\left( \sum_{k=\left \lceil na\right \rceil }^{\left \lfloor
nb\right \rfloor }\Phi \left( nx-k\right) \right) f\left( x\right) \right] .
\end{equation*}%
Therefore we get 
\begin{equation*}
\left \Vert A_{n}\left( f,x\right) -\sum_{j=1}^{N-1}\frac{f^{\left( j\right)
}\left( x\right) }{j!}A_{n}\left( \left( \cdot -x\right) ^{j}\right) \left(
x\right) -f\left( x\right) \right \Vert \leq 2\left( \sqrt[2m]{1+4^{m}}%
\right) \cdot
\end{equation*}%
\begin{equation}
\left \Vert A_{n}^{\ast }\left( f,x\right) -\left( \sum_{j=1}^{N-1}\frac{%
f^{\left( j\right) }\left( x\right) }{j!}A_{n}^{\ast }\left( \left( \cdot
-x\right) ^{j}\right) \left( x\right) \right) -\left( \sum_{k=\left \lceil
na\right \rceil }^{\left \lfloor nb\right \rfloor }\Phi \left( nx-k\right)
\right) f\left( x\right) \right \Vert ,  \tag{144}  \label{144}
\end{equation}%
$\forall $ $x\in \left[ a,b\right] .$

The proof of the theorem is now completed.
\end{proof}

Next we apply Theorem \ref{t30.} for $N=1.$

\begin{theorem}
\label{t31.}Let $0<\alpha ,\beta <1$, $f\in C^{1}\left( \left[ a,b\right]
,X\right) $, $x\in \left[ a,b\right] $, $n\in \mathbb{N}:n^{1-\beta }>2$, $%
m\in \mathbb{N}.$ Then

i) 
\begin{equation*}
\left \Vert A_{n}\left( f,x\right) -f\left( x\right) \right \Vert \leq
\end{equation*}%
\begin{equation*}
\frac{2\left( \sqrt[2m]{1+4^{m}}\right) }{\Gamma \left( \alpha +1\right) }%
\left \{ \frac{\left( \omega _{1}\left( D_{x-}^{\alpha }f,\frac{1}{n^{\beta }%
}\right) _{\left[ a,x\right] }+\omega _{1}\left( D_{\ast x}^{\alpha }f,\frac{%
1}{n^{\beta }}\right) _{\left[ x,b\right] }\right) }{n^{\alpha \beta }}%
+\right.
\end{equation*}%
\begin{equation}
\left. \frac{1}{4m\left( n^{1-\beta }-2\right) ^{2m}}\left( \left \Vert
D_{x-}^{\alpha }f\right \Vert _{\infty ,\left[ a,x\right] }\left( x-a\right)
^{\alpha }+\left \Vert D_{\ast x}^{\alpha }f\right \Vert _{\infty ,\left[ x,b%
\right] }\left( b-x\right) ^{\alpha }\right) \right \} ,  \tag{145}
\label{145}
\end{equation}

and

ii) 
\begin{equation*}
\left \Vert A_{n}f-f\right \Vert _{\infty }\leq \frac{2\left( \sqrt[2m]{%
1+4^{m}}\right) }{\Gamma \left( \alpha +1\right) }\cdot
\end{equation*}%
\begin{equation*}
\left \{ \frac{\left( \underset{x\in \left[ a,b\right] }{\sup }\omega
_{1}\left( D_{x-}^{\alpha }f,\frac{1}{n^{\beta }}\right) _{\left[ a,x\right]
}+\underset{x\in \left[ a,b\right] }{\sup }\omega _{1}\left( D_{\ast
x}^{\alpha }f,\frac{1}{n^{\beta }}\right) _{\left[ x,b\right] }\right) }{%
n^{\alpha \beta }}+\right.
\end{equation*}%
\begin{equation}
\left. \frac{\left( b-a\right) ^{\alpha }}{4m\left( n^{1-\beta }-2\right)
^{2m}}\left( \underset{x\in \left[ a,b\right] }{\sup }\left \Vert
D_{x-}^{\alpha }f\right \Vert _{\infty ,\left[ a,x\right] }+\underset{x\in %
\left[ a,b\right] }{\sup }\left \Vert D_{\ast x}^{\alpha }f\right \Vert
_{\infty ,\left[ x,b\right] }\right) \right \} .  \tag{146}  \label{146}
\end{equation}
\end{theorem}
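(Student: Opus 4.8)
The plan is to obtain this result as a direct specialization of Theorem \ref{t30.} to the case $N=1$, so essentially no new argument is required beyond checking that the hypotheses match and that the empty-sum convention collapses the polynomial correction terms. First I would verify that all the assumptions of Theorem \ref{t30.} hold under the present hypotheses: since $0<\alpha<1$ we automatically have $\alpha\notin\mathbb{N}$ and $N=\left\lceil\alpha\right\rceil=1$, while $f\in C^{1}\left(\left[a,b\right],X\right)$ is precisely $f\in C^{N}\left(\left[a,b\right],X\right)$ with $N=1$. The remaining conditions $0<\beta<1$, $m\in\mathbb{N}$, $x\in\left[a,b\right]$ and $n\in\mathbb{N}:n^{1-\beta}>2$ are identical in both statements.

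The key observation is the empty-sum convention already recorded at the end of Theorem \ref{t30.}: when $N=1$ the finite sum $\sum_{j=1}^{N-1}\left(\cdot\right)=\sum_{j=1}^{0}\left(\cdot\right)=0$. Consequently, in the left-hand side of Theorem \ref{t30.}(i) the correction term $\sum_{j=1}^{N-1}\frac{f^{\left(j\right)}\left(x\right)}{j!}A_{n}\left(\left(\cdot-x\right)^{j}\right)\left(x\right)$ vanishes, so the estimated quantity collapses to $\left\Vert A_{n}\left(f,x\right)-f\left(x\right)\right\Vert$, which is exactly the left-hand side of (i) here. Simultaneously, the outer polynomial sum $\sum_{j=1}^{N-1}\frac{\left\Vert f^{\left(j\right)}\left(x\right)\right\Vert}{j!}\left\{\cdots\right\}$ appearing in the bound of Theorem \ref{t30.}(iii) disappears, leaving only the factor $\frac{2\left(\sqrt[2m]{1+4^{m}}\right)}{\Gamma\left(\alpha+1\right)}$ multiplying the bracketed fractional remainder built from the one-sided Caputo--Bochner derivatives $D_{x-}^{\alpha}f$ and $D_{\ast x}^{\alpha}f$. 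This reproduces (i) verbatim.

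For part (ii), I would invoke Theorem \ref{t30.}(iv) with $N=1$. Again the polynomial sum $\sum_{j=1}^{N-1}\frac{\left\Vert f^{\left(j\right)}\right\Vert_{\infty}}{j!}\left[\cdots\right]$ is empty and drops out, so the uniform bound reduces to $2\left(\sqrt[2m]{1+4^{m}}\right)$ times $\frac{1}{\Gamma\left(\alpha+1\right)}$ applied to the supremized moduli of continuity $\underset{x\in\left[a,b\right]}{\sup}\omega_{1}\left(D_{x-}^{\alpha}f,\frac{1}{n^{\beta}}\right)_{\left[a,x\right]}$ and $\underset{x\in\left[a,b\right]}{\sup}\omega_{1}\left(D_{\ast x}^{\alpha}f,\frac{1}{n^{\beta}}\right)_{\left[x,b\right]}$ together with the supremized norms of the fractional derivatives, which is exactly the bound asserted in (ii).

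I do not anticipate any genuine obstacle here, since the result is a clean corollary of Theorem \ref{t30.}; the only point requiring care is the bookkeeping, namely confirming that the empty-sum convention is applied consistently in each clause so that every polynomial correction term vanishes while the constants, the Gamma-factor, and the bracketed fractional terms transfer without alteration.
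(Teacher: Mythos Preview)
Your proposal is correct and matches the paper's approach exactly: the paper simply states that Theorem~\ref{t31.} is Theorem~\ref{t30.} applied with $N=1$, relying on the empty-sum convention $\sum_{j=1}^{0}\cdot=0$ to collapse the polynomial correction terms. Your careful bookkeeping of how parts (i)/(iii) and (iv) of Theorem~\ref{t30.} specialize to parts (i) and (ii) here is precisely what is needed.
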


When $\alpha =\frac{1}{2}$ we derive

\begin{corollary}
\label{c32.}Let $0<\beta <1$, $f\in C^{1}\left( \left[ a,b\right] ,X\right) $%
, $x\in \left[ a,b\right] $, $n\in \mathbb{N}:n^{1-\beta }>2$, $m\in \mathbb{%
N}.$ Then

i) 
\begin{equation*}
\left \Vert A_{n}\left( f,x\right) -f\left( x\right) \right \Vert \leq
\end{equation*}%
\begin{equation*}
\frac{4\left( \sqrt[2m]{1+4^{m}}\right) }{\sqrt{\pi }}\left \{ \frac{\left(
\omega _{1}\left( D_{x-}^{\frac{1}{2}}f,\frac{1}{n^{\beta }}\right) _{\left[
a,x\right] }+\omega _{1}\left( D_{\ast x}^{\frac{1}{2}}f,\frac{1}{n^{\beta }}%
\right) _{\left[ x,b\right] }\right) }{n^{\frac{\beta }{2}}}+\right.
\end{equation*}%
\begin{equation}
\left. \frac{1}{4m\left( n^{1-\beta }-2\right) ^{2m}}\left( \left \Vert
D_{x-}^{\frac{1}{2}}f\right \Vert _{\infty ,\left[ a,x\right] }\sqrt{\left(
x-a\right) }+\left \Vert D_{\ast x}^{\frac{1}{2}}f\right \Vert _{\infty ,%
\left[ x,b\right] }\sqrt{\left( b-x\right) }\right) \right \} ,  \tag{147}
\label{147}
\end{equation}

and

ii) 
\begin{equation*}
\left \Vert A_{n}f-f\right \Vert _{\infty }\leq \frac{4\left( \sqrt[2m]{%
1+4^{m}}\right) }{\sqrt{\pi }}\cdot
\end{equation*}%
\begin{equation*}
\left \{ \frac{\left( \underset{x\in \left[ a,b\right] }{\sup }\omega
_{1}\left( D_{x-}^{\frac{1}{2}}f,\frac{1}{n^{\beta }}\right) _{\left[ a,x%
\right] }+\underset{x\in \left[ a,b\right] }{\sup }\omega _{1}\left( D_{\ast
x}^{\frac{1}{2}}f,\frac{1}{n^{\beta }}\right) _{\left[ x,b\right] }\right) }{%
n^{\frac{\beta }{2}}}+\right.
\end{equation*}%
\begin{equation}
\left. \frac{\sqrt{\left( b-a\right) }}{4m\left( n^{1-\beta }-2\right) ^{2m}}%
\left( \underset{x\in \left[ a,b\right] }{\sup }\left \Vert D_{x-}^{\frac{1}{%
2}}f\right \Vert _{\infty ,\left[ a,x\right] }+\underset{x\in \left[ a,b%
\right] }{\sup }\left \Vert D_{\ast x}^{\frac{1}{2}}f\right \Vert _{\infty ,%
\left[ x,b\right] }\right) \right \} <\infty .  \tag{148}  \label{148}
\end{equation}
\end{corollary}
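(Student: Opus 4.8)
The plan is to obtain Corollary \ref{c32.} as a direct specialization of Theorem \ref{t31.} to the value $\alpha=\frac{1}{2}$. First I would verify that the hypotheses match: since $0<\alpha=\frac{1}{2}<1$ we have $\alpha\notin\mathbb{N}$ and $N=\left\lceil\alpha\right\rceil=1$, so the assumption $f\in C^{1}\left(\left[a,b\right],X\right)$ is exactly the smoothness required by Theorem \ref{t31.}. All the remaining constraints ($0<\beta<1$, $n^{1-\beta}>2$, $m\in\mathbb{N}$, $x\in\left[a,b\right]$) are identical in the two statements, so no additional verification is needed before invoking the bounds (\ref{145}) and (\ref{146}).

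The only genuine computation is the evaluation of the Gamma factor. I would compute
\begin{equation*}
\Gamma\left(\alpha+1\right)=\Gamma\left(\tfrac{3}{2}\right)=\tfrac{1}{2}\Gamma\left(\tfrac{1}{2}\right)=\frac{\sqrt{\pi}}{2},
\end{equation*}
whence the leading constant $\frac{2\left(\sqrt[2m]{1+4^{m}}\right)}{\Gamma\left(\alpha+1\right)}$ in both (\ref{145}) and (\ref{146}) becomes $\frac{4\left(\sqrt[2m]{1+4^{m}}\right)}{\sqrt{\pi}}$. I would then substitute $\alpha=\frac{1}{2}$ throughout the right-hand sides, so that $n^{\alpha\beta}=n^{\beta/2}$, and each power $\left(x-a\right)^{\alpha}$, $\left(b-x\right)^{\alpha}$, $\left(b-a\right)^{\alpha}$ is rewritten as the corresponding square root $\sqrt{x-a}$, $\sqrt{b-x}$, $\sqrt{b-a}$. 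The moduli $\omega_{1}\left(D_{x-}^{\alpha}f,\cdot\right)_{\left[a,x\right]}$ and $\omega_{1}\left(D_{\ast x}^{\alpha}f,\cdot\right)_{\left[x,b\right]}$ become the half-order versions $\omega_{1}\left(D_{x-}^{1/2}f,\cdot\right)_{\left[a,x\right]}$ and $\omega_{1}\left(D_{\ast x}^{1/2}f,\cdot\right)_{\left[x,b\right]}$, and likewise for the norms of the fractional derivatives. This yields precisely (\ref{147}) and (\ref{148}).

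Finally, to justify the strict inequality $<\infty$ asserted at the end of (\ref{148}), I would appeal to the a priori bounds already derived inside the proof of Theorem \ref{t30.}. Since $f\in C^{1}\left(\left[a,b\right],X\right)$ gives $\left\Vert f^{\left(N\right)}\right\Vert_{\infty}=\left\Vert f'\right\Vert_{\infty}<\infty$, the estimates (\ref{136}), (\ref{140}) bound the suprema of $\left\Vert D_{x-}^{1/2}f\right\Vert_{\infty,\left[a,x\right]}$ and $\left\Vert D_{\ast x}^{1/2}f\right\Vert_{\infty,\left[x,b\right]}$, while (\ref{141}), (\ref{142}) bound the corresponding suprema of the moduli of continuity; all four quantities are finite, so the entire right-hand side of (\ref{148}) is finite. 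I do not expect any real obstacle here: the argument is a substitution plus one Gamma-value evaluation, and the finiteness is inherited verbatim from the ingredients of Theorem \ref{t30.}. The only point requiring a moment of care is simply to record the Gamma computation correctly so that the constant $\frac{4}{\sqrt{\pi}}$ appears rather than $\frac{2}{\sqrt{\pi}}$.
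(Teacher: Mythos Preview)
Your proposal is correct and matches the paper's approach exactly: the paper introduces Corollary \ref{c32.} with the single phrase ``When $\alpha=\frac{1}{2}$ we derive'' and provides no further argument, so the intended proof is precisely the substitution $\alpha=\tfrac{1}{2}$ into Theorem \ref{t31.} together with $\Gamma(\tfrac{3}{2})=\tfrac{\sqrt{\pi}}{2}$, plus the finiteness observations from (\ref{136}), (\ref{140}), (\ref{141}), (\ref{142}) that you cite.
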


Next we make

\begin{remark}
\label{r33}Some convergence analysis follows:

Let $0<\beta <1$, $f\in C^{1}\left( \left[ a,b\right] ,X\right) $, $x\in %
\left[ a,b\right] $, $n\in \mathbb{N}:n^{1-\beta }>2$, $m\in \mathbb{N}.$ We
elaborate on (\ref{148}). Assume that 
\begin{equation}
\omega _{1}\left( D_{x-}^{\frac{1}{2}}f,\frac{1}{n^{\beta }}\right) _{\left[
a,x\right] }\leq \frac{K_{1}}{n^{\beta }},  \tag{149}  \label{149}
\end{equation}%
and 
\begin{equation}
\omega _{1}\left( D_{\ast x}^{\frac{1}{2}}f,\frac{1}{n^{\beta }}\right) _{%
\left[ x,b\right] }\leq \frac{K_{2}}{n^{\beta }},  \tag{150}  \label{150}
\end{equation}%
$\forall $ $x\in \left[ a,b\right] $, $\forall $ $n\in \mathbb{N}$, where $%
K_{1},K_{2}>0$.

Then it holds 
\begin{equation*}
\frac{\left[ \underset{x\in \left[ a,b\right] }{\sup }\omega _{1}\left(
D_{x-}^{\frac{1}{2}}f,\frac{1}{n^{\beta }}\right) _{\left[ a,x\right] }+%
\underset{x\in \left[ a,b\right] }{\sup }\omega _{1}\left( D_{\ast x}^{\frac{%
1}{2}}f,\frac{1}{n^{\beta }}\right) _{\left[ x,b\right] }\right] }{n^{\frac{%
\beta }{2}}}\leq
\end{equation*}%
\begin{equation}
\frac{\frac{\left( K_{1}+K_{2}\right) }{n^{\beta }}}{n^{\frac{\beta }{2}}}=%
\frac{\left( K_{1}+K_{2}\right) }{n^{\frac{3\beta }{2}}}=\frac{K}{n^{\frac{%
3\beta }{2}}},  \tag{151}  \label{151}
\end{equation}%
where $K:=K_{1}+K_{2}>0.$

The other summand of the right hand side of (\ref{148}), for large enough $n$%
, converges to zero at the speed $\frac{1}{n^{2m\left( 1-\beta \right) }},$
so it is about $\frac{L}{n^{2m\left( 1-\beta \right) }}$, where $L>0$ is a
constant.

Then, for large enough $n\in \mathbb{N}$, by (\ref{148}), (\ref{151}) and
the above comment, we obtain that 
\begin{equation}
\left \Vert A_{n}f-f\right \Vert _{\infty }\leq \frac{M}{\min \left( n^{%
\frac{3\beta }{2}},n^{2m\left( 1-\beta \right) }\right) },  \tag{152}
\label{152}
\end{equation}%
where $M>0.$

Clearly we have two cases:

i) 
\begin{equation}
\left \Vert A_{n}f-f\right \Vert _{\infty }\leq \frac{M}{n^{2m\left( 1-\beta
\right) }}\text{, when }\frac{4m}{3+4m}\leq \beta <1,  \tag{153}  \label{153}
\end{equation}%
with speed of convergence $\frac{1}{n^{2m\left( 1-\beta \right) }},$

and

ii) 
\begin{equation}
\left \Vert A_{n}f-f\right \Vert _{\infty }\leq \frac{M}{n^{\frac{3\beta }{2}%
}},\text{ when }0<\beta \leq \frac{4m}{3+4m},  \tag{154}  \label{154}
\end{equation}%
with speed of convergence $\frac{1}{n^{\frac{3\beta }{2}}}.$

In Theorem \ref{t11.}, for $f\in C\left( \left[ a,b\right] ,X\right) $ and
for large enough $n\in \mathbb{N}$, when $0<\beta \leq \frac{2m}{1+2m}$, the
speed is $\frac{1}{n^{\beta }}$. So when $0<\beta \leq \frac{4m}{3+4m}$ ($<%
\frac{2m}{1+2m}$), we get by (\ref{154}) that $\left\Vert
A_{n}f-f\right\Vert _{\infty }$ converges much faster to zero. The last
comes because we assumed differentiability of $f$.

Notice that in Corollary \ref{c32.} no initial condition is assumed.
\end{remark}

Next, we will present an alternative fractional approximation by $A_{n}$, $%
n\in \mathbb{N}$.

\begin{notation}
\label{n34}Let $\overline{n}\in \mathbb{N}$, we denote the left iterated
fractional derivative 
\begin{equation}
D_{\ast x}^{\overline{n}x}=D_{\ast x}^{\alpha }D_{\ast x}^{\alpha
}...D_{\ast x}^{\alpha }\text{, \ (}\overline{n}\text{ - times),}  \tag{155}
\label{155}
\end{equation}%
$x\in \left[ a,b\right] $, $0<\alpha <1$.

Similarly, we also denote the right iterated fractional derivative 
\begin{equation}
D_{x-}^{\overline{n}x}=D_{x-}^{\alpha }D_{x-}^{\alpha }...D_{x-}^{\alpha }%
\text{, \ (}\overline{n}\text{ - times),}  \tag{156}  \label{156}
\end{equation}%
$x\in \left[ a,b\right] $.
\end{notation}

We need

\begin{definition}
\label{d35}Let $\overline{n}\in \mathbb{N}$, $D_{x}^{\left( \overline{n}%
+1\right) \alpha }f$ denote any of $D_{\ast x}^{\left( \overline{n}+1\right)
\alpha }$, $D_{x-}^{\left( \overline{n}+1\right) \alpha }$, and $\delta >0$.
We set 
\begin{equation}
\omega _{1}\left( D_{x}^{\left( \overline{n}+1\right) \alpha }f,\delta
\right) =\max \left \{ \omega _{1}\left( D_{\ast x}^{\left( \overline{n}%
+1\right) \alpha }f,\delta \right) _{\left[ x,b\right] },\omega _{1}\left(
D_{x-}^{\left( \overline{n}+1\right) \alpha }f,\delta \right) _{\left[ a,x%
\right] }\right \} ,  \tag{157}  \label{157}
\end{equation}%
where $x\in \left[ a,b\right] $. Here the moduli of continuity are
considered over $\left[ x,b\right] $ and $\left[ a,x\right] $, respectively.
\end{definition}

We also need

\begin{theorem}
\label{t36}(\cite{13}, p. 123) Let $0<\alpha <1$, $f:\left[ a,b\right]
\rightarrow \mathbb{R}$, $f^{\prime }\in L_{\infty }\left( \left[ a,b\right]
\right) $, $x\in \left[ a,b\right] $ being fixed. Assume that $D_{\ast
x}^{k\alpha }f\in C\left( \left[ x,b\right] \right) $, $k=0,1,...,\overline{n%
}+1$, $\overline{n}\in \mathbb{N}$, and $\left( D_{\ast x}^{i\alpha
}f\right) \left( x\right) =0$, $i=2,3,...,\overline{n}+1$. Also, suppose
that $D_{x-}^{k\alpha }f\in C\left( \left[ a,x\right] \right) $, for $%
k=0,1,...,\overline{n}+1$, and $\left( D_{x-}^{i\alpha }f\right) \left(
x\right) =0$, for $i=2,3,...,\overline{n}+1$. Then 
\begin{equation}
\left \vert f\left( \cdot \right) -f\left( x\right) \right \vert \leq \frac{%
\omega _{1}\left( D_{x}^{\left( \overline{n}+1\right) \alpha }f,\delta
\right) }{\Gamma \left( \left( \overline{n}+1\right) \alpha +1\right) }\left[
\left \vert \cdot -x\right \vert ^{\left( \overline{n}+1\right) \alpha }+%
\frac{\left \vert \cdot -x\right \vert ^{\left( \overline{n}+1\right) \alpha
+1}}{\delta \left( \left( \overline{n}+1\right) \alpha +1\right) }\right] ,%
\text{ \ }\delta >0.  \tag{158}  \label{158}
\end{equation}
\end{theorem}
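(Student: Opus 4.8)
The plan is to fix $x\in\left[ a,b\right] $ and argue separately on the two sides $y\geq x$ and $y\leq x$, the two cases being mirror images of one another (the left fractional derivative $D_{\ast x}^{\alpha }$ governs $\left[ x,b\right] $, the right one $D_{x-}^{\alpha }$ governs $\left[ a,x\right] $). Throughout write $\beta :=\left( \overline{n}+1\right) \alpha $ and, for $\gamma >0$ and $y\geq x$, denote the left Riemann--Liouville fractional integral anchored at $x$ by $\left( I_{x+}^{\gamma }g\right) \left( y\right) :=\frac{1}{\Gamma \left( \gamma \right) }\int_{x}^{y}\left( y-t\right) ^{\gamma -1}g\left( t\right) dt$. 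The engine of the proof is an iterated (multistep) fractional Taylor expansion, obtained by applying the left fractional Taylor formula (Theorem \ref{t19.}) repeatedly with anchor point $x$.

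First I would record that $\left( D_{\ast x}^{\alpha }f\right) \left( x\right) =0$: this is exactly Lemma \ref{l16} with base point $x$, since $0<\alpha <1$ gives $\left \lceil \alpha \right \rceil =1$, while $f\in C^{0}\left( \left[ x,b\right] \right) $ and $f^{\prime }\in L_{\infty }$. Together with the standing hypotheses $\left( D_{\ast x}^{i\alpha }f\right) \left( x\right) =0$ for $i=2,\dots ,\overline{n}+1$, every intermediate Caputo derivative of orders $\alpha ,2\alpha ,\dots ,\left( \overline{n}+1\right) \alpha $ vanishes at $x$. Applying Theorem \ref{t19.} to $f$ at anchor $x$ (here $m=1$, so the polynomial part is just $f\left( x\right) $) gives $f\left( y\right) -f\left( x\right) =\left( I_{x+}^{\alpha }D_{\ast x}^{\alpha }f\right) \left( y\right) $, and applying the same formula to $D_{\ast x}^{k\alpha }f\in C\left( \left[ x,b\right] \right) $ gives $\left( D_{\ast x}^{k\alpha }f\right) \left( y\right) =\left( I_{x+}^{\alpha }D_{\ast x}^{\left( k+1\right) \alpha }f\right) \left( y\right) $ for $k=1,\dots ,\overline{n}$, the constant term dropping out because $\left( D_{\ast x}^{k\alpha }f\right) \left( x\right) =0$. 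Substituting these identities into one another and invoking the semigroup property $I_{x+}^{k\alpha }I_{x+}^{\alpha }=I_{x+}^{\left( k+1\right) \alpha }$ of Riemann--Liouville integration (the Beta-convolution $\int_{s}^{y}\left( y-t\right) ^{\alpha -1}\left( t-s\right) ^{\alpha -1}dt=\left( y-s\right) ^{2\alpha -1}\Gamma \left( \alpha \right) ^{2}/\Gamma \left( 2\alpha \right) $) collapses the $\left( \overline{n}+1\right) $-fold iterate into a single fractional integral:
\[
f\left( y\right) -f\left( x\right) =\bigl( I_{x+}^{\beta }D_{\ast x}^{\beta }f\bigr) \left( y\right) =\frac{1}{\Gamma \left( \beta \right) }\int_{x}^{y}\left( y-t\right) ^{\beta -1}\bigl( D_{\ast x}^{\beta }f\bigr) \left( t\right) dt,\qquad y\geq x.
\]

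Next I would bound the integrand. Since $i=\overline{n}+1$ is covered by the hypotheses, $\left( D_{\ast x}^{\beta }f\right) \left( x\right) =0$, so $\lvert \left( D_{\ast x}^{\beta }f\right) \left( t\right) \rvert =\lvert \left( D_{\ast x}^{\beta }f\right) \left( t\right) -\left( D_{\ast x}^{\beta }f\right) \left( x\right) \rvert \leq \omega _{1}\left( D_{\ast x}^{\beta }f,t-x\right) _{\left[ x,b\right] }$, and by the standard estimate $\omega _{1}\left( g,\lambda \delta \right) \leq \left( 1+\lambda \right) \omega _{1}\left( g,\delta \right) $ (with $\lambda =\left( t-x\right) /\delta $) this is at most $\left( 1+\frac{t-x}{\delta }\right) \omega _{1}\left( D_{\ast x}^{\beta }f,\delta \right) _{\left[ x,b\right] }$. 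Inserting this and evaluating the two elementary integrals $\int_{x}^{y}\left( y-t\right) ^{\beta -1}dt=\left( y-x\right) ^{\beta }/\beta $ and $\int_{x}^{y}\left( y-t\right) ^{\beta -1}\left( t-x\right) dt=\left( y-x\right) ^{\beta +1}/[\beta \left( \beta +1\right) ]$ (again a Beta integral), then using $\beta \,\Gamma \left( \beta \right) =\Gamma \left( \beta +1\right) $, yields
\[
\lvert f\left( y\right) -f\left( x\right) \rvert \leq \frac{\omega _{1}\left( D_{\ast x}^{\beta }f,\delta \right) _{\left[ x,b\right] }}{\Gamma \left( \beta +1\right) }\left[ \left( y-x\right) ^{\beta }+\frac{\left( y-x\right) ^{\beta +1}}{\delta \left( \beta +1\right) }\right] ,\qquad y\geq x.
\]
The case $y\leq x$ is entirely symmetric, using Theorem \ref{t20}, Lemma \ref{l18}, the hypotheses on $D_{x-}^{i\alpha }f$, and the right-sided integral $\frac{1}{\Gamma \left( \beta \right) }\int_{y}^{x}\left( t-y\right) ^{\beta -1}\left( D_{x-}^{\beta }f\right) \left( t\right) dt$; it produces the same bound with $D_{\ast x}^{\beta }f$ replaced by $D_{x-}^{\beta }f$ and $\left( y-x\right) $ replaced by $\left( x-y\right) $. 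Bounding each side's modulus by the maximum $\omega _{1}\left( D_{x}^{\left( \overline{n}+1\right) \alpha }f,\delta \right) $ of Definition \ref{d35} and writing $\lvert \cdot -x\rvert $ for both $y-x$ and $x-y$ gives (\ref{158}) for every argument in $\left[ a,b\right] $.

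The main obstacle I anticipate is justifying the collapse of the iterated fractional integral in the second step: one must check that the semigroup identity $I_{x+}^{k\alpha }I_{x+}^{\alpha }=I_{x+}^{\left( k+1\right) \alpha }$ may legitimately be composed with the Caputo derivatives occurring here, i.e.\ that each $D_{\ast x}^{k\alpha }f$ is regular enough (it lies in $C\left( \left[ x,b\right] \right) $ by hypothesis) for Theorem \ref{t19.} to apply at every stage and for Fubini to rearrange the resulting nested integrals. The vanishing of the intermediate Caputo derivatives at $x$---from Lemma \ref{l16} for order $\alpha $ and from the hypotheses for orders $2\alpha ,\dots ,\overline{n}\alpha $---is precisely what deletes the polynomial remainders and makes the telescoping/semigroup argument valid; this bookkeeping, rather than any single hard estimate, is the delicate part. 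The remaining computations are the two Beta-type integrals and the modulus-of-continuity inequality, both routine.
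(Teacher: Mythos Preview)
The paper does not prove Theorem~\ref{t36}; it is simply quoted from \cite{13}, p.~123, and used as a black box in the proof of Theorem~\ref{t37}. There is therefore no ``paper's own proof'' to compare against.

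That said, your argument is the standard one and is correct in outline: iterate the left (resp.\ right) fractional Taylor formula to collapse $f(y)-f(x)$ into a single Riemann--Liouville integral of order $\beta=(\overline{n}+1)\alpha$ of the top iterate $D_{\ast x}^{\beta}f$ (resp.\ $D_{x-}^{\beta}f$), then estimate using $\omega_1(g,\lambda\delta)\le(1+\lambda)\omega_1(g,\delta)$ and the two Beta-type integrals. Your computations of the integrals and the final constants are correct and match \eqref{158}.

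The one point you flag as an obstacle deserves a sharper resolution. You invoke Theorem~\ref{t19.} at each stage, but that theorem as stated requires $g\in C^{1}$, whereas the hypotheses only give $D_{\ast x}^{k\alpha}f\in C([x,b])$. The way out is that the very \emph{existence} of the next Caputo iterate $D_{\ast x}^{(k+1)\alpha}f=D_{\ast x}^{\alpha}\bigl(D_{\ast x}^{k\alpha}f\bigr)$ (via Definition~\ref{d15} with $m=1$) already forces $(D_{\ast x}^{k\alpha}f)'\in L_{1}([x,b])$; in particular $D_{\ast x}^{k\alpha}f$ is absolutely continuous on $[x,b]$. For such $g$ the identity $I_{x+}^{\alpha}D_{\ast x}^{\alpha}g=g-g(x)$ follows directly from $D_{\ast x}^{\alpha}g=I_{x+}^{1-\alpha}g'$, the semigroup law $I_{x+}^{\alpha}I_{x+}^{1-\alpha}=I_{x+}^{1}$, and the ordinary fundamental theorem of calculus, without needing $g\in C^{1}$. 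This closes the gap you identified, and the rest of your argument goes through unchanged.
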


We present

\begin{theorem}
\label{t37}Let $f\in C\left( \left[ a,b\right] \right) $ and all as in
Theorem \ref{t36}, $n\in \mathbb{N}:n^{1-\alpha }>2$, $m\in \mathbb{N}.$

Then 
\begin{equation*}
\left \vert \left( A_{n}f\right) \left( x\right) -f\left( x\right) \right
\vert \leq \frac{2\left( \sqrt[2m]{1+4^{m}}\right) \omega _{1}\left(
D_{x}^{\left( \overline{n}+1\right) \alpha }f,\delta \right) }{\Gamma \left(
\left( \overline{n}+1\right) \alpha +1\right) }
\end{equation*}%
\begin{equation}
\left \{ \left[ \frac{1}{n^{\left( \overline{n}+1\right) \alpha ^{2}}}+\frac{%
\left( b-a\right) ^{\left( \overline{n}+1\right) \alpha }}{4m\left(
n^{1-\alpha }-2\right) ^{2m}}\right] +\right.  \tag{159}  \label{159}
\end{equation}%
\begin{equation*}
\left. \frac{1}{\delta \left( \left( \overline{n}+1\right) \alpha +1\right) }%
\left[ \frac{1}{n^{\alpha \left[ \left( \overline{n}+1\right) \alpha +1%
\right] }}+\frac{\left( b-a\right) ^{\left( \overline{n}+1\right) \alpha +1}%
}{4m\left( n^{1-\alpha }-2\right) ^{2m}}\right] \right \} ,\text{ \ }\delta
>0.
\end{equation*}%
Hence $\underset{n\rightarrow +\infty }{\lim }A_{n}\left( f\right) \left(
x\right) =f\left( x\right) .$
\end{theorem}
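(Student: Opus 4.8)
The plan is to mirror the scheme of the proof of Theorem \ref{t14.}, replacing the integer Taylor moments used there by the single pointwise estimate (\ref{158}) supplied by Theorem \ref{t36}. First I would invoke the real-valued analogue of the decomposition (\ref{41}): since $\sum_{k=\left\lceil na\right\rceil}^{\left\lfloor nb\right\rfloor}\Phi\left(nx-k\right)\leq 1$ (Note \ref{n7}) and its reciprocal is bounded by $2\sqrt[2m]{1+4^{m}}$ (Theorem \ref{t5}), one obtains
\begin{equation*}
\left\vert A_{n}\left(f,x\right)-f\left(x\right)\right\vert \leq 2\left(\sqrt[2m]{1+4^{m}}\right)\sum_{k=\left\lceil na\right\rceil}^{\left\lfloor nb\right\rfloor}\left\vert f\left(\tfrac{k}{n}\right)-f\left(x\right)\right\vert \Phi\left(nx-k\right).
\end{equation*}

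Next I would apply Theorem \ref{t36} with the running argument taken to be $\frac{k}{n}\in\left[a,b\right]$, which for each $k$ yields
\begin{equation*}
\left\vert f\left(\tfrac{k}{n}\right)-f\left(x\right)\right\vert \leq \frac{\omega_{1}\left(D_{x}^{\left(\overline{n}+1\right)\alpha}f,\delta\right)}{\Gamma\left(\left(\overline{n}+1\right)\alpha+1\right)}\left[\left\vert \tfrac{k}{n}-x\right\vert^{\left(\overline{n}+1\right)\alpha}+\frac{\left\vert \tfrac{k}{n}-x\right\vert^{\left(\overline{n}+1\right)\alpha+1}}{\delta\left(\left(\overline{n}+1\right)\alpha+1\right)}\right].
\end{equation*}
Inserting this into the sum reduces the whole problem to estimating the two weighted moment sums $S_{\rho}:=\sum_{k=\left\lceil na\right\rceil}^{\left\lfloor nb\right\rfloor}\left\vert \frac{k}{n}-x\right\vert^{\rho}\Phi\left(nx-k\right)$ for the exponents $\rho=\left(\overline{n}+1\right)\alpha$ and $\rho=\left(\overline{n}+1\right)\alpha+1$.

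The heart of the argument is a single moment bound valid for any real $\rho>0$, obtained by exactly the two-region splitting used in (\ref{75})--(\ref{76}). Splitting the index set according to $\left\vert \frac{k}{n}-x\right\vert\leq\frac{1}{n^{\alpha}}$ versus $\left\vert \frac{k}{n}-x\right\vert>\frac{1}{n^{\alpha}}$, on the near region one has $\left\vert \frac{k}{n}-x\right\vert^{\rho}\leq n^{-\alpha\rho}$ together with $\sum_{k}\Phi\leq 1$, while on the far region the crude bound $\left\vert \frac{k}{n}-x\right\vert\leq b-a$ (both points lie in $\left[a,b\right]$) combines with the tail estimate of Theorem \ref{t4} to give
\begin{equation*}
S_{\rho}\leq \frac{1}{n^{\alpha\rho}}+\frac{\left(b-a\right)^{\rho}}{4m\left(n^{1-\alpha}-2\right)^{2m}}.
\end{equation*}
Note that the far-region estimate uses only $\left\vert \frac{k}{n}-x\right\vert\leq b-a$, which is wholly insensitive to whether $\rho$ is an integer; this is precisely why the non-integer exponents coming from (\ref{158}) cause no difficulty, and it is the only point where one must take any care. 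Substituting $\rho=\left(\overline{n}+1\right)\alpha$ (so that $\alpha\rho=\left(\overline{n}+1\right)\alpha^{2}$) and $\rho=\left(\overline{n}+1\right)\alpha+1$, reassembling, and multiplying through by $2\sqrt[2m]{1+4^{m}}$ produces exactly (\ref{159}).

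Finally, for the convergence statement I would fix $\delta>0$ and let $n\to\infty$: with $\delta$ held fixed the factor $\omega_{1}\left(D_{x}^{\left(\overline{n}+1\right)\alpha}f,\delta\right)$ is constant, while every moment term on the right of (\ref{159}) tends to zero (the terms $n^{-\left(\overline{n}+1\right)\alpha^{2}}$ and $n^{-\alpha\left[\left(\overline{n}+1\right)\alpha+1\right]}$ vanish, as does $\left(n^{1-\alpha}-2\right)^{-2m}$ since $1-\alpha>0$). Hence the right-hand side of (\ref{159}) tends to $0$, giving $\lim_{n\to+\infty}A_{n}\left(f\right)\left(x\right)=f\left(x\right)$.
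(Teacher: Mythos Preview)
Your proposal is correct and follows essentially the same route as the paper: apply the pointwise bound (\ref{158}) under the $A_{n}$-sum, then estimate the resulting moment sums $S_{\rho}$ by the near/far splitting of (\ref{75})--(\ref{76}) together with Theorem~\ref{t4}. The paper phrases the first step via the positivity of $A_{n}$ (writing $\left\vert A_{n}f(x)-f(x)\right\vert \leq A_{n}\left(\left\vert f-f(x)\right\vert\right)(x)$ and then invoking (\ref{26})), whereas you invoke the real-valued form of (\ref{41}) directly; these two formulations are equivalent.
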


\begin{proof}
We notice that $A_{n}$ is a positive linear operator with $A_{n}\left(
1\right) =1$.

Let $f\in C\left( \left[ a,b\right] ,\mathbb{R}\right) $, then $\left \vert
f\right \vert \leq \left \vert f\right \vert $ and $-\left \vert
f\right
\vert \leq f\leq \left \vert f\right \vert .$

Hence $-A_{n}\left( \left \vert f\right \vert \right) \leq A_{n}\left(
f\right) \leq A_{n}\left( \left \vert f\right \vert \right) $ and $%
\left
\vert A_{n}\left( f\right) \right \vert \leq A_{n}\left( \left \vert
f\right
\vert \right) $.

Therefore 
\begin{equation*}
\left\vert \left( A_{n}f\right) \left( x\right) -f\left( x\right)
\right\vert =\left\vert \left( A_{n}f\right) \left( x\right) -A_{n}\left(
f\left( x\right) \right) \left( x\right) \right\vert =
\end{equation*}%
\begin{equation}
\left\vert A_{n}\left( f-f\left( x\right) \right) \left( x\right)
\right\vert \overset{\text{(\ref{158})}}{\leq }A_{n}\left( \left\vert
f-f\left( x\right) \right\vert \right) \left( x\right) \leq   \tag{160}
\label{160}
\end{equation}%
\begin{equation*}
\frac{\omega _{1}\left( D_{x}^{\left( \overline{n}+1\right) \alpha }f,\delta
\right) }{\Gamma \left( \left( \overline{n}+1\right) \alpha +1\right) }\left[
A_{n}\left( \left\vert \cdot -x\right\vert ^{\left( \overline{n}+1\right)
\alpha }\right) \left( x\right) +\frac{A_{n}\left( \left\vert \cdot
-x\right\vert ^{\left( \overline{n}+1\right) \alpha +1}\right) \left(
x\right) }{\delta \left( \left( \overline{n}+1\right) \alpha +1\right) }%
\right] =
\end{equation*}%
\begin{equation*}
\frac{\omega _{1}\left( D_{x}^{\left( \overline{n}+1\right) \alpha }f,\delta
\right) }{\Gamma \left( \left( \overline{n}+1\right) \alpha +1\right)
\sum_{k=\left\lceil na\right\rceil }^{\left\lfloor nb\right\rfloor }\Phi
\left( nx-k\right) }
\end{equation*}%
\begin{equation}
\left[ \sum_{k=\left\lceil na\right\rceil }^{\left\lfloor nb\right\rfloor
}\left\vert \frac{k}{n}-x\right\vert ^{\left( \overline{n}+1\right) \alpha
}\Phi \left( nx-k\right) +\frac{\sum_{k=\left\lceil na\right\rceil
}^{\left\lfloor nb\right\rfloor }\left\vert \frac{k}{n}-x\right\vert
^{\left( \overline{n}+1\right) \alpha +1}\Phi \left( nx-k\right) }{\delta
\left( \left( \overline{n}+1\right) \alpha +1\right) }\right] \overset{\text{%
(\ref{26})}}{\leq }  \tag{161}  \label{161}
\end{equation}%
\begin{equation*}
\frac{2\left( \sqrt[2m]{1+4^{m}}\right) \omega _{1}\left( D_{x}^{\left( 
\overline{n}+1\right) \alpha }f,\delta \right) }{\Gamma \left( \left( 
\overline{n}+1\right) \alpha +1\right) }
\end{equation*}%
\begin{equation*}
\left\{ \left[ \sum_{\left\{ 
\begin{array}{l}
k=\left\lceil na\right\rceil  \\ 
:\left\vert \frac{k}{n}-x\right\vert \leq \frac{1}{n^{\alpha }}%
\end{array}%
\right. }^{\left\lfloor nb\right\rfloor }\left\vert \frac{k}{n}-x\right\vert
^{\left( \overline{n}+1\right) \alpha }\Phi \left( nx-k\right) +\right.
\right. 
\end{equation*}%
\begin{equation*}
\left. \sum_{\left\{ 
\begin{array}{l}
k=\left\lceil na\right\rceil  \\ 
:\left\vert \frac{k}{n}-x\right\vert >\frac{1}{n^{\alpha }}%
\end{array}%
\right. }^{\left\lfloor nb\right\rfloor }\left\vert \frac{k}{n}-x\right\vert
^{\left( \overline{n}+1\right) \alpha }\Phi \left( nx-k\right) \right] +
\end{equation*}%
\begin{equation}
\frac{1}{\delta \left( \left( \overline{n}+1\right) \alpha +1\right) }\left[
\sum_{\left\{ 
\begin{array}{l}
k=\left\lceil na\right\rceil  \\ 
:\left\vert \frac{k}{n}-x\right\vert \leq \frac{1}{n^{\alpha }}%
\end{array}%
\right. }^{\left\lfloor nb\right\rfloor }\left\vert \frac{k}{n}-x\right\vert
^{\left( \overline{n}+1\right) \alpha +1}\Phi \left( nx-k\right) +\right.  
\tag{162}  \label{162}
\end{equation}%
\begin{equation*}
\left. \left. \sum_{\left\{ 
\begin{array}{l}
k=\left\lceil na\right\rceil  \\ 
:\left\vert \frac{k}{n}-x\right\vert >\frac{1}{n^{\alpha }}%
\end{array}%
\right. }^{\left\lfloor nb\right\rfloor }\left\vert \frac{k}{n}-x\right\vert
^{\left( \overline{n}+1\right) \alpha +1}\Phi \left( nx-k\right) \right]
\right\} \overset{\text{(\ref{19})}}{\leq }
\end{equation*}%
\begin{equation*}
\frac{2\left( \sqrt[2m]{1+4^{m}}\right) \omega _{1}\left( D_{x}^{\left( 
\overline{n}+1\right) \alpha }f,\delta \right) }{\Gamma \left( \left( 
\overline{n}+1\right) \alpha +1\right) }\left\{ \left[ \frac{1}{n^{\left( 
\overline{n}+1\right) \alpha ^{2}}}+\frac{\left( b-a\right) ^{\left( 
\overline{n}+1\right) \alpha }}{4m\left( n^{1-\alpha }-2\right) ^{2m}}\right]
\right. 
\end{equation*}%
\begin{equation}
\left. +\frac{1}{\delta \left( \left( \overline{n}+1\right) \alpha +1\right) 
}\left[ \frac{1}{n^{\alpha \left[ \left( \overline{n}+1\right) \alpha +1%
\right] }}+\frac{\left( b-a\right) ^{\left( \overline{n}+1\right) \alpha +1}%
}{4m\left( n^{1-\alpha }-2\right) ^{2m}}\right] \right\} ,\ \delta >0, 
\tag{163}  \label{163}
\end{equation}%
proving the claim.
\end{proof}

We finish with

\begin{corollary}
\label{c38}All as in Theorem \ref{t37}, with $\delta =\frac{1}{\left( 
\overline{n}+1\right) \alpha +1}.$ Then 
\begin{equation*}
\left\vert \left( A_{n}f\right) \left( x\right) -f\left( x\right)
\right\vert \leq \frac{2\left( \sqrt[2m]{1+4^{m}}\right) \omega _{1}\left(
D_{x}^{\left( \overline{n}+1\right) \alpha }f,\frac{1}{\left( \overline{n}%
+1\right) \alpha +1}\right) }{\Gamma \left( \left( \overline{n}+1\right)
\alpha +1\right) }
\end{equation*}%
\begin{equation}
\left\{ \left[ \frac{1}{n^{\left( \overline{n}+1\right) \alpha ^{2}}}+\frac{%
\left( b-a\right) ^{\left( \overline{n}+1\right) \alpha }}{4m\left(
n^{1-\alpha }-2\right) ^{2m}}\right] +\left[ \frac{1}{n^{\alpha \left[
\left( \overline{n}+1\right) \alpha +1\right] }}+\frac{\left( b-a\right)
^{\left( \overline{n}+1\right) \alpha +1}}{4m\left( n^{1-\alpha }-2\right)
^{2m}}\right] \right\} .  \tag{164}  \label{164}
\end{equation}%
Hence $\underset{n\rightarrow +\infty }{\lim }A_{n}\left( f\right) \left(
x\right) =f\left( x\right) .$
\end{corollary}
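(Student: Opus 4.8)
The plan is to derive Corollary \ref{c38} as an immediate specialization of Theorem \ref{t37}. That theorem already furnishes, for every $\delta>0$, the estimate
\begin{equation*}
\left\vert \left( A_{n}f\right) \left( x\right) -f\left( x\right) \right\vert \leq \frac{2\left( \sqrt[2m]{1+4^{m}}\right) \omega _{1}\left( D_{x}^{\left( \overline{n}+1\right) \alpha }f,\delta \right) }{\Gamma \left( \left( \overline{n}+1\right) \alpha +1\right) }\left\{ B_{1}+\frac{B_{2}}{\delta \left( \left( \overline{n}+1\right) \alpha +1\right) }\right\} ,
\end{equation*}
where $B_{1}$ and $B_{2}$ denote the two bracketed quantities appearing in (\ref{159}). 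Since all hypotheses of Theorem \ref{t37} are inherited verbatim, the only task is to choose the value of $\delta$ that puts this bound into the desired form.

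First I would substitute $\delta=\frac{1}{\left( \overline{n}+1\right) \alpha +1}$. The key algebraic observation is that, for precisely this choice, the scalar weight multiplying the second bracket collapses to unity, because
\begin{equation*}
\frac{1}{\delta \left( \left( \overline{n}+1\right) \alpha +1\right) }=\frac{\left( \overline{n}+1\right) \alpha +1}{\left( \overline{n}+1\right) \alpha +1}=1.
\end{equation*}
Consequently the term $\frac{B_{2}}{\delta \left( \left( \overline{n}+1\right) \alpha +1\right) }$ reduces to $B_{2}$, the two brackets add directly inside the braces, and the modulus argument becomes $\omega _{1}\left( D_{x}^{\left( \overline{n}+1\right) \alpha }f,\frac{1}{\left( \overline{n}+1\right) \alpha +1}\right)$. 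This is exactly the inequality (\ref{164}), so the main bound of the corollary is established.

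To record the limit assertion I would then observe that each of the four summands inside the braces tends to zero as $n\rightarrow \infty$: the two leading terms carry strictly positive powers of $n$ in their denominators, while the two remainder terms are controlled by $\frac{1}{4m\left( n^{1-\alpha }-2\right) ^{2m}}$, which vanishes since $n^{1-\alpha }\rightarrow \infty$. As the modulus factor and the Gamma factor are constants independent of $n$, the whole right-hand side converges to $0$, yielding $\underset{n\rightarrow +\infty }{\lim }A_{n}\left( f\right) \left( x\right) =f\left( x\right)$. There is essentially no obstacle in this argument; the entire analytic content resides in Theorem \ref{t37}, and the only step requiring thought is recognizing that $\delta=\frac{1}{\left( \overline{n}+1\right) \alpha +1}$ is the distinguished value for which the weighting factor equals one, thereby symmetrizing the two bracketed contributions.
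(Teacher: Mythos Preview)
Your proposal is correct and matches the paper's (implicit) argument: the corollary is stated without proof in the paper, being an immediate specialization of Theorem \ref{t37} obtained by the substitution $\delta=\frac{1}{(\overline{n}+1)\alpha+1}$, exactly as you describe. The cancellation of the factor $\frac{1}{\delta((\overline{n}+1)\alpha+1)}$ to $1$ is the only computation involved, and your justification of the limit is fine.
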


\end{document}